\documentclass{article} 

\usepackage{arxiv} 
\usepackage{enumitem}

\usepackage{times,amsthm,amsfonts,amsmath,amssymb,epsfig,xcolor,float,graphicx,verbatim}
\usepackage{algorithm,algorithmic}

\usepackage{hyperref}
\usepackage{url}
\usepackage{cleveref}

\newcommand{\one}{\mathbf{1}}

\newcommand{\poly}{\mathrm{poly}}

\newcommand{\reals}{\mathbb{R}}
\newcommand{\R}{\mathbb{R}}

\newcommand{\N}{\mathbb{N}}

\newcommand{\zero}{\boldsymbol{0}}

\newcommand{\abs}[1]{\left| #1 \right|}

\newcommand{\vertiii}[1]{{\left\vert\kern-0.25ex\left\vert\kern-0.25ex\left\vert #1 
    \right\vert\kern-0.25ex\right\vert\kern-0.25ex\right\vert}}

\newcommand{\Ecal}{\mathcal{E}}

\newcommand{\bxi}{{\xi}}

\newcommand{\bigO}{\mathcal{O}}

\usepackage{xcolor}

\newcommand{\beq}{\begin{eqnarray*}}
\newcommand{\eeq}{\end{eqnarray*}}
\newcommand{\beqn}{\begin{eqnarray}}
\newcommand{\eeqn}{\end{eqnarray}}

\usepackage{ifmtarg}
\usepackage{xifthen}%
\newcommand{\ent}[1][]{%
\ifthenelse{\isempty{#1}}{%
\mathrm{H}
}{
\mathrm{H}^{(#1)}
}}

\newcommand{\loch}[1][]{%
\ifthenelse{\isempty{#1}}{%
\mathrm{h}
}{
\mathrm{h}^{(#1)}
}}

\newcommand{\Dcal}{\mathcal{D}}

\newcommand{\Ical}{\mathcal{I}}

\newcommand{\Ncal}{\mathcal{N}}
\newcommand{\Rcal}{\mathcal{R}}

\newcommand{\E}{\mathbb{E}}

\newcommand{\argmin}[1]{\underset{#1}{\mathrm{argmin}}}

\newcommand{\Var}{\text{Var}}

\newcommand{\Sphere}{\mathbb{S}}

\newcommand{\be}{\mathbf{e}}
\newcommand{\bx}{\mathbf{x}}

\newcommand{\bw}{\mathbf{w}}

\newcommand{\bb}{\mathbf{b}}
\newcommand{\bu}{\mathbf{u}}
\newcommand{\bv}{\mathbf{v}}
\newcommand{\bz}{\mathbf{z}}
\newcommand{\br}{\mathbf{r}}

\newcommand{\by}{\mathbf{y}}

\newcommand{\blambda}{\boldsymbol{\lambda}}

\newcommand{\Wcal}{\mathcal{W}}

\newcommand{\norm}[1]{\left\|#1\right\|}

\newcommand{\op}{\mathrm{op}}

\renewcommand{\bxi}{\boldsymbol{\xi}}

\newtheorem{theorem}{Theorem}
\newtheorem{proposition}{Proposition}
\newtheorem{lemma}{Lemma}
\newtheorem{corollary}{Corollary}
\newtheorem{definition}{Definition}

\newtheorem{assumption}{Assumption}

\newcommand{\secref}[1]{Sec.~\ref{#1}}

\newcommand{\figref}[1]{Fig.~\ref{#1}}

\newcommand{\thmref}[1]{Thm.~\ref{#1}}

\newcommand{\appref}[1]{Appendix~\ref{#1}}

\title{Benign Overfitting for General Norms\\ 
and Distributions}
\author{
Daniel Barzilai \\
Weizmann Institute of Science \\
\And
Ohad Shamir \\
University of Toronto \& Vector Institute 
\\Weizmann Institute of Science
}

\begin{document}

\maketitle

\vspace{-1em}\begin{abstract}
Understanding why predictors can generalize despite interpolating noisy training data is a central puzzle in machine learning. 
Most work on such ``benign overfitting'' studies minimum-\(\ell_2\)-norm linear regression,
reflecting the inductive bias of gradient descent. 
However, modern optimizers such as Adam and Muon use non-Euclidean update geometries, favoring solutions associated with other norms. Analyzing regression for non-$\ell_2$ norms is substantially more difficult, with known results essentially limited to Gaussian distributions. 
In this paper, we develop a method to analyze benign overfitting in linear regression for general norms \emph{and} general (sub-Gaussian) distributions. 
As a special case, we prove that minimum-\(\ell_p\)-norm interpolation with $p\!>\!1$ can benignly overfit even for non-Gaussian distributions, under suitable conditions. 
Perhaps surprisingly, for the $\ell_1$ norm, benign overfitting \emph{does not} hold in general for well-behaved (but non-Gaussian) distributions, showing that existing positive $\ell_1$ results rely crucially on Gaussianity.
Our proof analyzes the geometry of the dual optimization problem, using concentration and central limit tools to show it is approximately Euclidean in many high-dimensional cases.  
\end{abstract}

\section{Introduction}
\label{sec:introduction}
Classical intuition associates overfitting with poor generalization. However, high-dimensional models can often generalize well even when interpolating the training data (namely, achieving zero training error, which results in overfitting if the Bayes risk is positive). This phenomenon is known as benign overfitting 
\citep{bartlett2020benign, muthukumar2020harmless, belkin2020two,hastie2022surprises, mallinar2022benign, barzilaigeneralization}, and understanding when
it occurs and how it depends on the learning algorithm has become a fundamental question in machine learning theory.

Linear regression provides a canonical setting to study benign overfitting.
Consider an input matrix \(X\in\R^{n\times d}\) with $n$ i.i.d. rows, a
target vector \(\bw^\star\in\R^d\), and noisy labels
\(\by=X\bw^\star+\bxi\), where \(\bxi\in\R^n\) is a vector of independent noise terms. 
In the overparameterized regime where $d>n$, there are typically infinitely many predictors $\bw$ that interpolate this training data, i.e. satisfy $X\bw=\by$. 
Given a norm \(\|\cdot\|\) on \(\R^d\), 
a common choice is the minimum-norm interpolator, given by
\[
    \widehat\bw\in
    \argmin{\bw\in\R^d}\|\bw\|
    \quad\text{subject to}\quad X\bw=\by.
\]
In this setting, benign overfitting corresponds to $\hat{\bw}$ achieving approximately optimal performance (comparable to $\bw^\star$), despite overfitting the noisy labels. 

The large majority of works on linear regression and benign overfitting consider the Euclidean $\ell_2$ norm. 
This choice is natural from an optimization perspective: 
In overparameterized linear regression, gradient descent initialized at zero is well-known to converge to the minimum-\(\ell_2\)-norm interpolator.
However, modern machine learning increasingly uses different optimizers such as Adam \citep{kingma2015adam} and Muon \citep{jordan2024muon}, and it is well known that the choice of optimizer affects the solution reached at convergence \citep{gunasekar2018characterizing}.
For example, in various regimes, Adam and SignSGD are known to select the minimum-\(\ell_\infty\)-norm solution
\citep{zhang2024implicit,bernstein2024old,gronichimplicit}.
This underscores the importance of understanding benign overfitting beyond minimum-\(\ell_2\)-norm interpolation. 

\begin{figure*}[t]
    \centering
    \includegraphics[width=\textwidth]{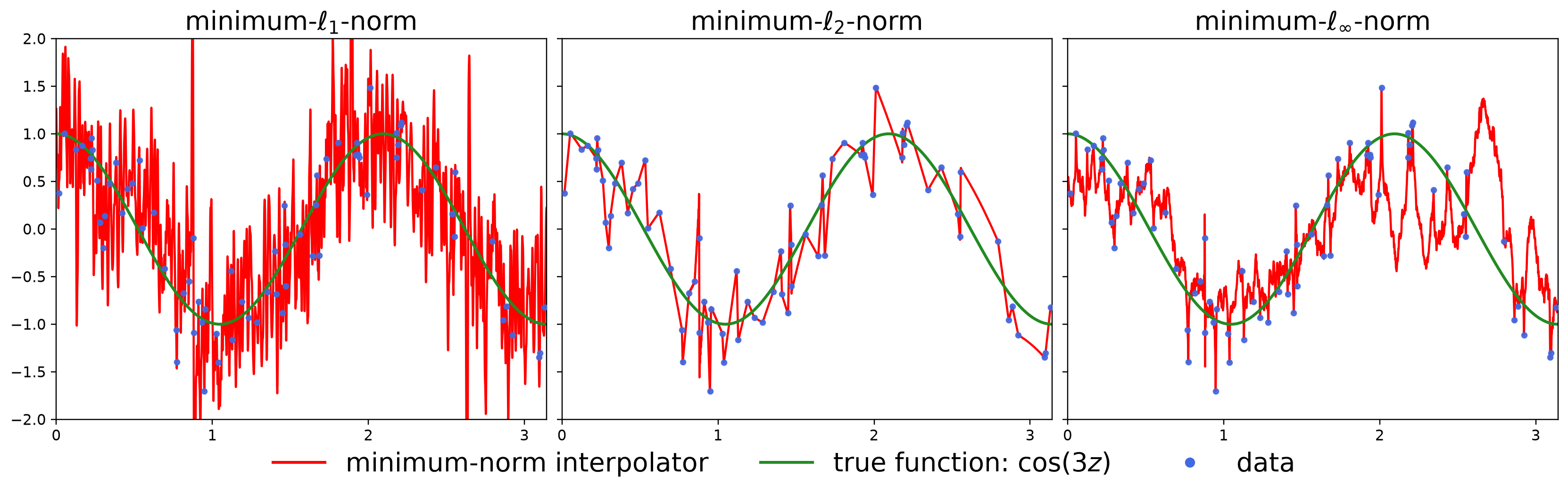}
    \vspace{-2em}
    \caption{Minimum-$\ell_p$ interpolation for $p\in\{1,2,\infty\}$, over i.i.d inputs $\bx \in \R^{2000}$ with coordinates $x_j = \cos(jz)/j$ where $z\sim \mathrm{Unif}([0, \pi])$.
    The number of inputs is $n=75$ and the labels are
    $y=\cos(3z)+\xi$, with $\xi\sim\mathcal N(0,0.1)$.
    The horizontal axis shows $z$.
    }
    \label{fig:cosine-norms}
    \vspace{-1em}
\end{figure*}

Unfortunately, many techniques developed for the minimum-\(\ell_2\)-norm interpolator crucially rely on it having a closed form, which is not typically true for other norms. For non-\(\ell_2\) norms, existing works focus on Gaussian inputs (and often also Gaussian noise), and use tools that crucially rely on this assumption, such as the Gaussian Minmax theorem
\citep{koehler2021uniform,kur2026gaussianity}. 
This is not just a technical limitation: As we will show later in this paper, the existence of benign overfitting can be very different for Gaussian and non-Gaussian distributions, even for well-studied settings such as minimum $\ell_1$-norm interpolation.
The previous discussions motivate the central question underpinning this work:
\begin{center}
        When and why do non-\(\ell_2\) minimum-norm interpolators exhibit benign overfitting in standard (possibly non-Gaussian) high-dimensional regression settings?
\end{center}

As in many previous works on benign overfitting (e.g. \cite{bartlett2020benign, koehler2021uniform, shamir2023implicit}), our analysis partitions the $d$-dimensional inputs into a small set $S$ of ``signal coordinates'' used to fit the targets, and the remaining set $N$ of ``noise-fitting'' coordinates used to fit the noise without incurring large excess risk. 
This split is used only in the analysis, and the algorithm does not rely on it. 
Prior work on the \(\ell_2\) interpolator
shows that benign overfitting can occur when $S$ is relatively low-dimensional and contains most of the target, while $N$ has a large effective dimension (meaning that there are many directions with similar variance).

Our contributions and paper structure can be summarized as follows:
\begin{itemize}[leftmargin=*]
    \item We begin in \secref{sec:main-results} by considering the minimum-\(\ell_p\)-norm interpolator for any fixed
    \(p\in(1,\infty]\). We prove a quantitative risk bound that implies benign overfitting, under suitable assumptions broadly similar to those in previous works (Theorem~\ref{thm:lp-clean} and Corollary~\ref{cor:lp-asymptotic}). Specifically, suppose that the input and label-noise are independent and sub-Gaussian, the noise-fitting
    coordinates have common variance \(s_N^2\), and the target vector $\bw^\star$
    is supported on \(S\). Writing
    \(q\) for the dual exponent, defined by \(1/p+1/q=1\), our results imply benign overfitting in the regime
    \begin{align*}
        s_N^2 |N|^{2/q}|S|^{\max\{2/p-1, 0\}}
        &\ll n \ll |N|^{\min\{2/q,1\}}.
    \end{align*}
    For $\ell_2$, this matches the necessary and sufficient criteria implied by \citet{bartlett2020benign}.

    \item In \secref{sec:mechanism}, we explain our method and main proof ideas for analyzing the minimum-norm interpolator. 
    Roughly speaking, our analysis begins by leveraging the dual objective of the min-norm interpolation problem. We then demonstrate that under suitable conditions on the norm (which we prove hold for $\ell_p$ norms when $p\in (1,\infty]$), and for high-dimensional inputs where the 
    coordinates are independent and sub-Gaussian, the dual problem can be well-approximated by a least-squares problem with $\ell_2$ geometry, leading to a behavior similar to the $\ell_2$ case. The techniques used in this approach, which include quantitative central limit  and uniform concentration arguments, may be of independent interest. 

    \item 
    In \secref{sec:l1-failure}, we show that benign overfitting \emph{does not} generally hold for minimum-\(\ell_1\)-norm interpolation. For scaled Rademacher inputs and suitable independent sub-Gaussian noise, the excess risk diverges as \(\Omega(\sigma^2\log n)\) with constant probability
    (Theorem~\ref{thm:l1-mean-zero-failure}).
    This is perhaps surprising in light of $\ell_1$ benign overfitting results for \emph{Gaussian} distributions \citep{koehler2021uniform,wang2022tight}. Nevertheless, we demonstrate that the geometry of the minimum-$\ell_1$-norm regression problem can be completely different between Gaussians and other well-behaved distributions. This is also illustrated in 
    \figref{fig:cosine-norms}, which shows a simple empirical example where the minimum-$\ell_1$ interpolator clearly behaves much worse than the minimum-$\ell_2$ and $\ell_\infty$-norm interpolators, in terms of generalization behavior.

    \item In \secref{sec:general-result}, we generalize our theory to provide risk bounds for generic norms (beyond $\ell_p$), and general sub-Gaussian distributions with independent coordinates (Theorem~\ref{thm:main}). 
    The theorem identifies two conditions on $N$ that suffice for benign overfitting, related to its expected dual norm on flat directions, and uniform concentration of the dual norm over the sphere. We prove these conditions indeed hold for \(\ell_p\) norms as a special case.
\end{itemize}
All proofs are provided in the appendices. We defer discussion of related work to \appref{sec:related-work}.

\section{Setting and Preliminaries}
\label{sec:preliminaries}

Consider an input distribution \(\Dcal_{\bx}\) on \(\R^d\). Let \(\bx_1,\ldots,\bx_n \sim \Dcal_{\bx}\) be i.i.d. input samples, forming a matrix \(X\in\R^{n\times d}\) with rows \(\bx_i^\top\). 
Let \(\bxi:=(\xi_1,\ldots,\xi_n)^\top\) be a noise vector, $\bw^\star \in \R^d$ be a target vector, and define the labels by \(\by:=X\bw^\star+\bxi\). 
Throughout the paper, unless stated otherwise, $\norm{\cdot}$ denotes an arbitrary norm on $\reals^d$.
For such a norm, the minimum-norm interpolator is any
\[
    \widehat{\bw}
    \in
    \argmin{\bw\in\R^d}
    \left\{
        \|\bw\|:X\bw=\by
    \right\}.
\]
For any $\bw \in \R^d$, we define its excess risk as
\[
    \Rcal(\bw)
    :=
    \E_{\bx\sim\Dcal_{\bx}}
    \left[
        \left(\bx^\top(\bw-\bw^\star)\right)^2
    \right].
\]
Our goal is to identify conditions under which $\Rcal(\widehat{\bw}) \to 0$ (where $\hat \bw$ is a minimum-norm interpolator), which is the standard notion of benign overfitting in the literature.

For our analysis, following previous works, we partition the input coordinates $\{1,\ldots,d\}$ into two nonempty sets \(S\sqcup N\), where \(S\) is the
\emph{signal block} (thought of as low dimensional, and ideally $\bw^\star$ is mostly supported on $S$) and \(N\) is the \emph{noise-fitting block} (thought of as high dimensional). 
After relabeling the coordinates, we may assume that $S=\{1,\ldots, |S|\}$ and $N=\{|S|+1,\ldots,d\}$.
This is purely an indexing convention and does not affect the algorithm in any way.

\paragraph{Assumptions.}
For a random variable \(Z\), write $\|Z\|_{\psi_2}$ for its sub-Gaussian norm\footnote{Defined by 
$ \|Z\|_{\psi_2}:=\inf\left\{t>0:
\E\exp\left(Z^2 / t^2 \right)\le 2\right\}$
}, 
and call $Z$ sub-Gaussian if $\|Z\|_{\psi_2}<\infty$
(see \citet{Vershynin_2026} for examples, e.g. bounded or Gaussian variables).
We assume that the inputs and noise have independent, zero-mean sub-Gaussian coordinates:

\begin{assumption}[Sub-Gaussian input distribution]
\label{assump:subgaussian-input-distribution}
The inputs \(\bx_i=(x_{i,1},\ldots,x_{i,d})\sim\Dcal_{\bx}\) have independent
coordinates, and there are \(b_1,\ldots,b_d>0\) and
\(R_X > 0\) such that for every \(j\in[d]\),
\[
    \E[x_j]=0,
    \qquad
    \E[x_j^2]=b_j^2,
    \qquad
    \|x_j\|_{\psi_2}\le R_Xb_j.
\]
\end{assumption}
\begin{assumption}[Sub-Gaussian noise]
\label{assump:noise-variables}
    The noise vector \(\bxi=(\xi_1,\ldots,\xi_n)^\top\) has independent
    coordinates that are also independent of \(X\). There are
    \(\sigma>0\) and \(R_\xi > 0\) such that for every \(i\in[n]\),
    \[
        \E[\xi_i]=0,
        \qquad
        \E[\xi_i^2]=\sigma^2,
        \qquad
        \|\xi_i\|_{\psi_2}\le R_\xi\sigma.
    \]
\end{assumption}

Assumptions \ref{assump:subgaussian-input-distribution} and
\ref{assump:noise-variables} are rather standard in the literature.
While we adopt them for simplicity, as \secref{sec:mechanism} clarifies, our analysis can extend to milder assumptions, but this may impact the exact bounds.

\paragraph{Notation and additional definitions.} For $m\in \N$ let $[m]:=\{1,\ldots, m\}$. 
For \(T\subseteq[d]\), we write \(\bv_T\) for the restriction of a
vector \(\bv\) to the coordinates in \(T\), and \(X_T\) for the submatrix of
\(X\) with columns indexed by \(T\). For a partition $d=S\sqcup N$ of the input coordinates  write
\[
    \bw=(\bw_S,\bw_N),
    \qquad
    \bw^\star=(\bw_S^\star,\bw_N^\star),
    \qquad
    X=(X_S,X_N).
\]
Moreover, for any nonempty $T\subseteq [d]$, write:
\[
    b_T^-:=\min_{j\in T}b_j,
    \qquad
    b_T^+:=\max_{j\in T}b_j,
    \qquad
    \Sigma_T
    :=
    \E[\bx_T\bx_T^\top]
    =
    \operatorname{diag}\bigl((b_j^2)_{j\in T}\bigr),
\]
and define the effective rank
\[
    r_T
    :=
    \frac{\operatorname{tr}(\Sigma_T)}{\|\Sigma_T\|_{\op}}
    =
    \frac{\|\bb_T\|_2^2}{(b_T^+)^2}.
\]

\section{Benign Overfitting for \texorpdfstring{$\ell_p$}{lp} Norms (\texorpdfstring{$p \in (1, \infty]$}{p > 1})}
\label{sec:main-results}
For any \(p\in(1,\infty]\), let \(q\in[1,\infty)\) be its dual exponent, given by \(1/p+1/q=1\). To reduce the amount of notation in this section, we will make two simplifying assumptions, which will both be relaxed in Section~\ref{sec:general-result}. First, we assume that the coordinates in $N$ share the same variance. While this is not strictly necessary, it simplifies the final bound while capturing the important underlying phenomenon. 
Moreover, it is well-known that for benign overfitting to occur, a necessary condition (in the $\ell_2$ analyses and for general norms under Gaussian inputs) is essentially that there are many coordinates with similar variance \citep{bartlett2020benign, koehler2021uniform}.
Second, we assume that the target vector $\bw^\star$ is supported on the $S$ coordinates, meaning $\bw^\star = (\bw_S^\star, 0)$. 
For a more general target vector, the risk bounds would incur additional factors related to $\|\Sigma_N^{1/2}\bw_N^\star\|_2$. Once again, such terms must be small anyway for benign overfitting to generally occur \citep{bartlett2020benign, koehler2021uniform}. 
We are now ready to state our theorem:
    
\begin{theorem}[Risk bound for \(\ell_p\) norms]
\label{thm:lp-clean}
Let $[d] = S \sqcup N$ be any partition of coordinates, and suppose Assumptions~\ref{assump:subgaussian-input-distribution}
and~\ref{assump:noise-variables} hold, that
\(\bw^\star=(\bw_S^\star,\zero)\) and that
\((x_j)_{j\in N}\) have common variance \(s_N^2\). For any $p\in (1,\infty]$, there is a positive
\(C=\poly\!\left(R_X,R_\xi,\sigma,\sigma^{-1},(b_S^-)^{-1},b_S^+,\|\bw_S^\star\|_p \right)\)
such that, for every
\(\delta\in(0,1)\), with probability at least \(1-\delta\) every minimum-\(\ell_p\)-norm
interpolator satisfies
\begin{align*}
    \Rcal(\widehat\bw)
    \le
    C\left[
        \frac{s_N|N|^{\frac{1}{q}}|S|^{\max\left\{\frac{1}{p}-\frac{1}{2},0\right\}}}{\sqrt n}
        +
        \left(\frac{r_S+\log(\frac{n}{\delta})}{n}\right)^{\frac{1}{4}}
        +
        \sqrt{\frac{n\log\left(\frac{n|N|}{\delta}\right)}{|N|}} + 
        \frac{n\log\left(\frac{n|N|}{\delta}\right)}{|N|^{\min\left\{\frac{2}{q},1\right\}}}
    \right],
\end{align*}
provided that the quantity in square brackets is at most \(C^{-1}\). Moreover, $C$ is non-increasing in $p$. 
\end{theorem}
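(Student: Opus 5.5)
Our approach is to derive Theorem~\ref{thm:lp-clean} as a special case of the general risk bound (Theorem~\ref{thm:main} in \secref{sec:general-result}), so the work splits into a general argument and an $\ell_p$-specific verification. For the general argument we start from Lagrangian duality. The minimum-norm interpolation problem has dual
\[
\max_{\blambda\in\R^n}\ \blambda^\top\by\quad\text{s.t.}\quad \|X^\top\blambda\|_*\le 1 ,
\]
where $\|\cdot\|_*=\|\cdot\|_q$ is the dual norm. Since $\bw^\star=(\bw_S^\star,\zero)$, we have $\by = X_S\bw_S^\star+\bxi$.

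We then decompose $\|X^\top\blambda\|_*$ using the $S\sqcup N$ split. For the noise-fitting block we want the uniform approximation
\[
\|X_N^\top\blambda\|_q\approx \E\|X_N^\top\blambda\|_q \approx c_N\|\blambda\|_2
\quad\text{for all }\blambda,
\]
with $c_N\asymp s_N|N|^{1/q}$. The idea is that each coordinate $(X_N^\top\blambda)_j=\sum_i\lambda_i x_{i,j}$ is a sum of independent terms. A quantitative CLT (Berry--Esseen) therefore makes it approximately $\mathcal N(0,s_N^2\|\blambda\|_2^2)$, so $\E|\cdot|^q$ is close to $s_N^q\|\blambda\|_2^q\,\E|g|^q$.

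Concentration over the $|N|$ independent coordinates, followed by a net argument over the sphere $\Sphere^{n-1}$, gives the uniform statement. The cost is the error terms $\sqrt{n\log(n|N|/\delta)/|N|}$ and $n\log(\cdot)/|N|^{\min\{2/q,1\}}$. The exponent $\min\{2/q,1\}$ arises because for $q<2$ the $\ell_q$ norm is not Lipschitz with respect to $\ell_2$ with a constant independent of $|N|$, so the net argument must be paid for accordingly. For $p=\infty$, i.e.\ $q=1$, the $\ell_1$ sum is an average of $|N|$ i.i.d.\ Lipschitz terms and the bound is clean. For the $p\to1$ direction the failure is expected, which is why we require $p>1$.

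Given this approximate Euclidean structure, the dual is close to the problem $\max \blambda^\top\by$ subject to
\[
\bigl\|\bigl(X_S^\top\blambda,\; c_N\blambda\bigr)\bigr\|\lesssim 1 ,
\]
which mirrors the $\ell_2$ ridge-like picture. From the primal–dual relation we recover $\widehat\bw$. Its $N$-part has the same dual-norm scale as the noise, and its contribution to the risk is $s_N^2\|\widehat\bw_N\|_2^2$. We bound this via the primal norm of $\widehat\bw_N$, converting with $\|\cdot\|_2\le\|\cdot\|_p$ when $p\le2$ or through Hölder with $|N|$ factors when $p>2$; this produces the first term, $s_N|N|^{1/q}/\sqrt n$. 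The $S$-part behaves like a regularized least-squares fit on the low-dimensional block. Its error is controlled by comparing $\|\widehat\bw\|_p\le\|(\bw^\star_S,\bw_N^{\rm noise})\|_p$, where $\bw_N^{\rm noise}$ is a competitor that fits $\bxi$ using only $N$ coordinates, together with restricted isometry of $X_S^\top X_S/n$. That isometry holds up to $\sqrt{(r_S+\log(n/\delta))/n}$ by standard sub-Gaussian covariance concentration, and taking a square root through the norm-comparison step yields the $1/4$ power. The factor $|S|^{\max\{1/p-1/2,0\}}$ comes from converting $\ell_p$ to $\ell_2$ on $S$.

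Monotonicity of $C$ in $p$ should follow by tracking constants. The main obstacle is the uniform (over all $\blambda$) approximation of $\|X_N^\top\blambda\|_q$ by $c_N\|\blambda\|_2$ with the stated rates. There we first try a chaining/net argument combined with Berry--Esseen applied per coordinate, truncating the sub-Gaussian tails at $\log(n|N|/\delta)$.
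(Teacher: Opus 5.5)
\paragraph{A false step at the heart of the proposal.} The step you single out as the main obstacle, $\E\|X_N^\top\blambda\|_q\approx c_N\|\blambda\|_2$ uniformly over \emph{all} $\blambda$, fails in general for non-Gaussian inputs whenever $p\neq2$. No net or Berry--Esseen argument can prove it.

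The Berry--Esseen error for $(X_N^\top\blambda)_j=\sum_i\lambda_ix_{i,j}$ is of order $\|\blambda\|_\infty/\|\blambda\|_2$, which is $\Theta(1)$ for spiky $\blambda$. For example, with Rademacher entries and $\blambda=\be_1$, one has $\|X_N^\top\be_1\|_q=s_N|N|^{1/q}$ exactly, whereas $c_N\|\be_1\|_2=s_N(|N|\,\E|G|^q)^{1/q}$. This is a constant-factor mismatch for every $q\neq2$, and for $q>2$ the normalized dual feasible set even sticks out of the Euclidean ball along the coordinate axes.

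What is true is weaker: $\|X_N^\top\blambda\|_q$ concentrates uniformly around its mean, and that mean is within constant factors of $c_N\|\blambda\|_2$. Constant factors are not enough. To localize $\widehat\bw_S$ you must compare $\|\bz(\bw_S)\|_2$ with $\|\bxi\|_2$, and these differ only at second order, roughly $\sqrt n\,(b_S^-)^2r^2/(2\sigma)$ with $r=\|\bw_S-\bw_S^\star\|_2$. You therefore need $\kappa_N g(\bw_S)=(1\pm o(1))\|\bz(\bw_S)\|_2$ uniformly for $\bw_S$ near $\bw_S^\star$. Your ridge-like surrogate, with a $\lesssim 1$ constraint, does not deliver this.

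\paragraph{The missing idea: restrict to flat dual variables.} The paper first shows that the residual is flat, $\|\bz(\bw_S)\|_\infty/\|\bz(\bw_S)\|_2\lesssim\sqrt{(r_S+\log(n/\delta))/n}$, uniformly over $\|\bw_S-\bw_S^\star\|_2\le1$. The normalized residual is then a nearly feasible flat dual point, which gives the lower bound on $g$.

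For the upper bound, it normalizes so that feasible $\blambda$ have $\|\blambda\|_2\lesssim1$, and uses a truncation lemma that zeroes the coordinates with $|\lambda_i|>\tau$:
\begin{itemize}
\item By Jensen, using independence and centering of the rows of $X_N$, this cannot increase $\E\|X_N^\top\blambda\|_q$.
\item It loses at most $\|\bz\|_\infty\|\blambda\|_2^2/\tau$ in the objective.
\item The CLT then applies to the truncated vector.
\item The choice $\tau=\sqrt{\|\bz\|_\infty/\|\bz\|_2}$ balances the truncation loss against the CLT error.
\end{itemize}
This tradeoff, not a square root in a norm comparison, is the source of the $((r_S+\log(n/\delta))/n)^{1/4}$ term.

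\paragraph{Rate bookkeeping.} Your attribution of the error terms is also off. The $N$-block risk $s_N^2\|\widehat\bw_N\|_2^2$, with $\|\widehat\bw_N\|_p\approx\sigma\sqrt n/\kappa_N$, yields the last term $n/|N|^{\min\{2/q,1\}}$, not the first. Its exponent comes from the $\ell_p\to\ell_2$ conversion on $N$, not from the net argument. The first term is $\asymp\gamma_N\beta_S$. It is the cost of the $\|\bw_S\|_p$ part of the objective relative to the noise-fitting cost, i.e.\ a bias on the signal block.
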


We now provide an asymptotic corollary that clarifies the conditions for benign overfitting. 
To formally avoid issues related to the order of limits, take $n\to\infty$ and view all other parameters (e.g. $N$, $s_N$, $\delta$, $S$) as functions of $n$, yielding the following corollary:
\begin{corollary}[Benign overfitting for \(\ell_p\) norms]
\label{cor:lp-asymptotic}
Consider a sequence of problem instances indexed by the sample size \(n\), each satisfying the assumptions of
Theorem~\ref{thm:lp-clean}. Suppose that
\(R_X,R_\xi,\sigma,\sigma^{-1},(b_S^-)^{-1},b_S^+\), and \(\|\bw_S^\star\|_p\) are uniformly bounded.
If there is a sequence \(\delta_n\to0\) such that
\(\frac{r_S+\log(n/\delta_n)}{n}\to0\) and if
\[
\begin{aligned}
    \frac{\bigl(n\log(n|N|/\delta_n)\bigr)^{\max\{1,q/2\}}}{|N|}
        &\longrightarrow 0,
    \qquad
    \frac{s_N|N|^{1/q}|S|^{\max\{1/p-1/2,0\}}}{\sqrt n}
        &\longrightarrow0,
\end{aligned}
\]
then, with probability tending to one, every minimum-\(\ell_p\)-norm
interpolator satisfies
\[
    \Rcal(\widehat\bw)\longrightarrow0.
\]
\end{corollary}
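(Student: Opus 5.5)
The corollary follows from Theorem~\ref{thm:lp-clean} once we show that every term in its square brackets tends to zero. We apply the theorem at sample size $n$ with $\delta=\delta_n$. Since $R_X,R_\xi,\sigma,\sigma^{-1},(b_S^-)^{-1},b_S^+$ and $\|\bw_S^\star\|_p$ are uniformly bounded and $C$ is polynomial in these quantities, there is a single constant $\bar C<\infty$ with $C\le \bar C$ for all $n$. We will also use that the theorem's conclusion stays valid if $C$ is replaced by the larger constant $\bar C$, both in the bound and in the proviso. If one prefers not to rely on monotonicity of the statement in $C$, we simply invoke the theorem with its own $C_n\le\bar C$; the argument below is unchanged.

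We then treat the four bracketed terms one at a time.
\begin{itemize}
\item The first term, $s_N|N|^{1/q}|S|^{\max\{1/p-1/2,0\}}/\sqrt n$, tends to zero by hypothesis.
\item The second term, $\bigl((r_S+\log(n/\delta_n))/n\bigr)^{1/4}$, tends to zero by the hypothesis on $r_S$ and $\delta_n$.
\item For the remaining two terms, write $L_n:=n\log(n|N|/\delta_n)$ and $m:=\max\{1,q/2\}$. Note that $|N|^{\min\{2/q,1\}}=|N|^{1/m}$, so the fourth term equals $L_n/|N|^{1/m}=\bigl(L_n^{m}/|N|\bigr)^{1/m}$, which tends to zero because $L_n^m/|N|\to0$ by hypothesis and $m\ge1$ is fixed.
\item For the third term, $\sqrt{L_n/|N|}$: for large $n$ we have $L_n\ge1$ (since $n\to\infty$ and $\log(n|N|/\delta_n)\ge\log n$), so $L_n\le L_n^m$. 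Hence $L_n/|N|\le L_n^m/|N|\to0$.
\end{itemize}

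Consequently the bracketed quantity $B_n$ tends to zero. In particular $B_n\le \bar C^{-1}\le C^{-1}$ for all sufficiently large $n$, so the proviso of Theorem~\ref{thm:lp-clean} holds eventually. The theorem then gives that, with probability at least $1-\delta_n\to1$, every minimum-$\ell_p$-norm interpolator satisfies $\Rcal(\widehat\bw)\le \bar C B_n\to0$. This is exactly the claim.

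The only point that needs any care is the uniformity of $C$ across $n$, which is where the uniform boundedness hypotheses and the polynomial form of $C$ are used. The exponent bookkeeping relating $\max\{1,q/2\}$ to $\min\{2/q,1\}$, including the case $q\le 2$ where $m=1$, is routine.
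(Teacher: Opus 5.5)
Your proposal is correct and matches the paper's intended argument: the paper gives no separate proof, treating the corollary as a direct consequence of Theorem~\ref{thm:lp-clean} with $\delta=\delta_n$, and your term-by-term check (the identity $|N|^{\min\{2/q,1\}}=|N|^{1/\max\{1,q/2\}}$ for the fourth term, and the bound $L_n\le L_n^m$ for the third) supplies the omitted details. The only implicit point, which you handle correctly, is that $p$ (hence $q$ and $m$) is fixed along the sequence, so that the theorem's polynomial constant $C$ is uniformly bounded and $(L_n^m/|N|)^{1/m}\to0$.
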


Analogously to related settings studied in prior work, the regime in which the results are meaningful is when relatively few coordinates can fit the targets, and many low-variance coordinates are used to interpolate the noise. Accordingly, the asymptotic conditions proved sufficient by Theorem~\ref{thm:lp-clean} and Corollary~\ref{cor:lp-asymptotic} for benign overfitting are
\begin{align}\label{eq:criteria}
    r_S\ll n~, 
    \qquad \text{and} \qquad 
    s_N^2 |N|^{2/q}|S|^{\max\{2/p-1, 0\}} &\ll n \ll |N|^{\min\{2/q,1\}}.
\end{align}
Since $r_S \leq |S|$, the first condition is always satisfied when $|S| \ll n$. The \(n \ll |N|^{\min\{2/q, 1\}}\) becomes increasingly demanding as $p\to 1$, but as we will later show in \secref{sec:l1-failure}, this is not an artifact of our analysis: benign overfitting can fail for \(p=1\) even under the assumptions of Theorem \ref{thm:lp-clean}.

In the $\ell_2$ case and under the setting of Theorem~\ref{thm:lp-clean} and Corollary~\ref{cor:lp-asymptotic}, the asymptotic criteria for benign overfitting in our results match those of \cite{bartlett2020benign}, which were proved to be both necessary and sufficient. We remark that their results are stated in terms of various notions of effective rank, which can be computed explicitly in this setting to match \eqref{eq:criteria}. Likewise, the criteria for benign overfitting given in \eqref{eq:criteria} match those of \cite{koehler2021uniform} for any $p\in [2,\infty]$ (although the exact convergence rates vary). However, unlike our results, that paper crucially requires the inputs to be Gaussian, e.g. by relying on the Convex Gaussian Minmax Theorem \citep{thrampoulidis2015regularized}.


The conditions of Theorem~\ref{thm:lp-clean} and Corollary~\ref{cor:lp-asymptotic} are sufficient but not necessary. We will later prove in \secref{sec:general-result} a more general version (Theorem~\ref{thm:main}) that holds for general norms and drops both the common-variance and $\bw^\star$-support assumptions.

\section{Proof Ideas}
\label{sec:mechanism}
We now outline the techniques underlying the results. We focus here on the $\ell_p$ norm case, but as we will further discuss in \secref{sec:general-result}, the proof can readily be adapted to more general norms. 

\subsection{Warm-Up: Gaussian Inputs}
For simplicity, we first consider the case where the inputs $\bx$ are Gaussian, as it provides some useful intuition, and then detail how the framework extends to other distributions. 

Fix \(p\in(1,\infty)\), and let \(q\) be its dual exponent, defined by \(1/p+1/q=1\) (our proof also handles $p=\infty$, but we omit it here for simplicity).
Write $\bw = (\bw_S, \bw_N)$. We decompose the minimum-norm problem by first minimizing over $\bw_S$, and then minimizing $\bw_N$ as a function of $\bw_S$:
\begin{align}\label{eq:mechanism-min-problem}
    \min_{\bw : X\bw=\by}\|\bw\|_p
    &= 
    \min_{\bw : X\bw=\by}
    \left[\|\bw_S\|_p^p+\|\bw_N\|_p^p\right]^{1/p}\notag\\
    &=\min_{\bw_S \in \R^S} 
    \left[\|\bw_S\|_p^p + \min_{\bw_N:X_N\bw_N=\by - X_S\bw_S} \|\bw_N\|_p^p \right]^{\frac{1}{p}}.
\end{align}
We first focus on the inner minimization over $\bw_N$, and later show that under the conditions of the theorem, this is the dominating term. 
We define the cost of fitting $\bw_N$ and the residual as:
\[
    g(\bw_S):=
    \min_{\bw_N:X_N\bw_N=\bz(\bw_S)}
    \|\bw_N\|_p,
    \qquad
    \bz(\bw_S):=\by - X_S\bw_S = X_S(\bw_S^\star-\bw_S)+\bxi.
\]
A central part of our approach is to analyze $g(\bw_S)$ through convex duality. Specifically, the minimization problem defining $g(\bw_S)$ can equivalently be written in its dual form as:
\begin{equation}
\label{eq:mechanism-duality}
    g(\bw_S)
    =
    \max_{\blambda \in \R^n : \|X_N^\top\blambda\|_q\le1}
    \bz(\bw_S)^\top\blambda.
\end{equation}
Let $B_n(r):=\{\blambda \in \R^n: \norm{\blambda}_2\leq r\}$ be the $\ell_2$ ball of radius $r$. We will now show that the feasible set in \eqref{eq:mechanism-duality} is well-approximated by $B_n(r)$ for a suitably chosen $r$. 
Specifically, when the rows of \(X_N\) are i.i.d. copies of a Gaussian vector \(\bx_N\), then, writing \(\kappa_N:=\E\|\bx_N\|_q\),
for every \(\blambda\in\R^n\),
\begin{equation}\label{eq:mechanism-isotropy2}
    X_N^\top\blambda
    \stackrel{d}{=}
    \|\blambda\|_2\bx_N
    \qquad\Longrightarrow\qquad
    \E\|X_N^\top\blambda\|_q
    =
    \kappa_N\|\blambda\|_2.
\end{equation}
When $\bx_N$ is high-dimensional, one can
further show that, with high probability,
\(\|X_N^\top\blambda\|_q\) is close to its expectation uniformly over
\(\blambda\), meaning that
\begin{align}\label{eq:mechanism-concentration}
    (1-\varepsilon)\kappa_N\|\blambda\|_2
    \le
    \|X_N^\top\blambda\|_q
    \le
    (1+\varepsilon)\kappa_N\|\blambda\|_2
    \qquad
    \forall\,\blambda\in\R^n,
\end{align}
for some small \(\varepsilon>0\). On this event,
\[
B_n\left(\frac{1}{(1+\varepsilon)\kappa_N}\right) \subseteq \{\blambda \in \R^n : \|X_N^\top\blambda\|_q\le1\} \subseteq B_n\left(\frac{1}{(1-\varepsilon)\kappa_N}\right).
\]
Thus, the dual problem in
\eqref{eq:mechanism-duality} is maximizing a linear function over an approximate Euclidean ball, from which it readily follows that
\begin{equation}
\label{eq:mechanism-gaussian-g-bound}
    \frac{\|\bz(\bw_S)\|_2}{(1+\varepsilon)\kappa_N}
    \le
    g(\bw_S)
    \le
    \frac{\|\bz(\bw_S)\|_2}{(1-\varepsilon)\kappa_N}~.
\end{equation}
At the same time, recalling that $\bz(\bw_S):=X_S(\bw_S^\star-\bw_S)+\bxi$ and that the noise $\xi_i$ has mean-zero and variance $\sigma^2$, it holds for sufficiently large $n$ that with high probability, the residuals $\bz(\bw_S)$ can be well approximated as
\begin{align}\label{eq:mechanism-res-l2}
    \frac1n\|\bz(\bw_S)\|_2^2
    =& ~
    (\bw_S - \bw_S^\star)^\top\left(\frac1nX_S^\top X_S\right)(\bw_S - \bw_S^\star)
    -\frac2n(\bw_S - \bw_S^\star)^\top X_S^\top\bxi
    +\frac1n\|\bxi\|_2^2 \nonumber\\
    \approx& ~
    \norm{\Sigma_S^{1/2}(\bw_S - \bw_S^\star)}_2^2 + \sigma^2.
\end{align}
Overall, we have shown that for any fixed $\bw_S$, 
\begin{align}\label{eq:mechanism-N-bound}
    \min_{\bw_N:X_N\bw_N=\by - X_S\bw_S} \|\bw_N\|_p
    \approx \frac{\|\bz(\bw_S)\|_2}{\kappa_N}
    \approx 
    \frac{\sqrt{n}}{\kappa_N}\cdot\sqrt{\sigma^2 + \norm{\Sigma_S^{1/2}(\bw_S - \bw_S^\star)}_2^2}.
\end{align}
Substituting this into \eqref{eq:mechanism-min-problem}, the original min-norm problem is roughly equivalent to:
\[
    \min_{\bw : X\bw = \by} \norm{\bw}_p
    \approx
    \min_{\bw_S \in \R^S}\left[\|\bw_S\|_p^p
    +
    \left(\frac{n}{\kappa_N^2}\right)^{p/2}\left(\sigma^2 + \norm{\Sigma_S^{1/2}(\bw_S - \bw_S^\star)}_2^2\right)^{p/2}\right]^{\frac{1}{p}}.
\]
When $n\gg \kappa_N^2$, the dominating term that we can minimize is $\|\Sigma_S^{1/2}(\bw_S - \bw_S^\star)\|_2^2$, so the min-norm problem is roughly equivalent to a least-squares problem on the signal block,
\[
\min_{\bw_S} \norm{\Sigma_S^{1/2}(\bw_S - \bw_S^\star)}_2^2~.
\]
Therefore, recalling that $\kappa_N := \E[\norm{\bx_N}_q] \approx s_N|N|^{1/q}$, whenever $n\gg \kappa_N^2\approx s_N^2|N|^{2/q}$ and $\Sigma_S$ is well-conditioned, the minimum-$\ell_p$-norm solution $\hat \bw$ must approximate $\bw^\star$ on the signal block: 
\[
\hat \bw_S \approx \bw_S^\star.
\]
It remains to show that the effect of the noise-fitting block on the population error is negligible. To see this, when $\hat \bw_S \approx \bw_S^\star$, \eqref{eq:mechanism-N-bound} implies that $\norm{\hat \bw_N}_p \approx \frac{\sqrt{n}}{\kappa_N}\sigma$. Using the standard inequality between the $\ell_2$ norm and the $\ell_p$ norm, and assuming that $\bw_N^\star = 0$,
\begin{align*}
\|\Sigma_N^{1/2}(\widehat\bw_N-\bw_N^\star)\|_2^2 
\leq & \norm{\Sigma_N}_\op \norm{\widehat\bw_N}_2^2 
\leq |N|^{\max\{1 - \frac{2}{p},0\}}\norm{\Sigma_N}_\op \norm{\widehat\bw_N}_p^2 \\ 
\approx& |N|^{\max\{1 - \frac{2}{p},0\}}\norm{\Sigma_N}_\op \frac{n}{\kappa_N^2}\sigma^{2}.
\end{align*}
Finally, since $\kappa_N\asymp s_N|N|^{1/q}$ and $\norm{\Sigma_N}_\op=s_N^2$, the last display is of order $\sigma^2n/|N|^{\min\{2/q,1\}}$. Thus, ignoring logarithmic factors, it converges to $0$ when $n\ll |N|^{\min\{2/q,1\}}$.
In such a case, using $\hat \bw_S\approx \bw_S^\star$, the excess risk is small:
\[
    \Rcal(\widehat\bw) 
    =
    \|\Sigma_S^{1/2}(\widehat\bw_S-\bw_S^\star)\|_2^2 
    +
    \|\Sigma_N^{1/2}(\widehat\bw_N-\bw_N^\star)\|_2^2
    \approx 0.
\]

\subsection{Beyond Gaussians}
For non-Gaussian inputs, the high-level proof approach is similar, but several arguments from the Gaussian proof require adaptation and additional ideas. 
The first is that for general input distributions, the rotational invariance used in \eqref{eq:mechanism-isotropy2} no longer holds. 
Nevertheless, we can get an approximate version using a central limit theorem (CLT) argument. 
To see this, first note that the vectors $X_N^\top\blambda$ which determine the dual constraints in \eqref{eq:mechanism-duality} have coordinates $(X_N^\top\blambda)_j=\sum_{i=1}^n\lambda_i x_{i,j}.$
If one had $\lambda_i = 1/\sqrt{n}$ for every $i$, the CLT would let us approximate $(X_N^\top \lambda)_j$ with a suitable Gaussian. 
It is well known that in some cases, similar results hold when \(\blambda\) is ``flat'', 
in the sense that \(\|\blambda\|_\infty \ll \|\blambda\|_2\). 
Lemma~\ref{lem:K_subgaussian} in the appendix makes this idea quantitative, and leads to a statement analogous to \eqref{eq:mechanism-isotropy2} for flat $\blambda$.  
Specifically, letting $\kappa_N := s_N\left(\abs{N}\E_{G \sim \Ncal(0, 1)}[\abs{G}^q]\right)^{1/q}$, the lemma states that, for every \(\blambda\in\R^n\),
\[
    \left|
        \frac{\E\|X_N^\top\blambda\|_q}{\kappa_N\|\blambda\|_2}-1
    \right|
    \lesssim
    |N|^{-1/(2q)}+
    \frac{\|\blambda\|_\infty}{\|\blambda\|_2}.
\]
Additionally, Proposition~\ref{prop:xnl_conc_subgaussian} in the appendix shows that $\|X_N^\top\blambda\|_q$ is close to its expectation via uniform concentration, allowing us to obtain a result analogous to \eqref{eq:mechanism-concentration}, namely that with high probability
\[
    \sup_{\blambda\in\Sphere^{n-1}}
    \left|
        \frac{\|X_N^\top\blambda\|_q}
        {\E\|X_N^\top\blambda\|_q}
        -1
    \right|
    \lesssim
    \sqrt{\frac{n}{|N|}}
    +
    \frac{n^{q/2}}{|N|}
    +
    |N|^{-1/(2q)}
\]
(ignoring log factors). 
In principle, combining the last two displayed equations implies that $\|X_N^\top\blambda\|_q$ tends to be close to $\kappa_N\|\blambda\|_2$, leading to the same Euclidean geometry as before and allowing our argument to go through. The remaining challenge is that the $\blambda$ maximizing the dual problem in \eqref{eq:mechanism-duality} does not necessarily need to be flat (in which case  $\frac{\|\blambda\|_\infty}{\|\blambda\|_2}$ is not small, as required for the CLT argument). 
Nevertheless, as we will soon explain in more detail, Lemma~\ref{lem:truncate} in the appendix shows that we can find a \emph{close to optimal} $\blambda$ that is flat (and thus the previous CLT argument applies) whenever $\bz(\bw_S)$ is flat. Moreover, we show that the existence of a near-optimal flat $\blambda$ is enough for the argument to go through. Specifically, we first show that $\bz(\bw_S)$ is flat:  Due to the independent sub-Gaussian noise variables, we can show that up to some error terms,
\[
    \frac{\|\bz(\bw_S)\|_\infty}{\|\bz(\bw_S)\|_2}\lesssim \sqrt{\frac{\log(n)}{n}}.
\]
Indeed, proving that the denominator scales as \(\sqrt n\) is done as in \eqref{eq:mechanism-res-l2}, and the numerator scale of \(\sqrt{\log n}\) arises from the maximum of $n$ independent noise variables.
Next, for every feasible $\blambda$, one can zero out all coordinates bigger than some threshold $\tau>0$, giving $\blambda_\tau := \sum_{j=1}^n \one_{\abs{\lambda_j}\leq \tau} \lambda_j \be_j$. Then we prove that for the dual objective given in \eqref{eq:mechanism-duality}, after slight rescaling, $\blambda_\tau$ is feasible and close to optimal when $\|\bz(\bw_S)\|_\infty \ll \|\bz(\bw_S)\|_2$:
\begin{align}\label{eq:trunc-lambda}
    \bz(\bw_S)^\top \blambda_\tau \gtrsim \bz(\bw_S)^\top \blambda - \frac{\norm{\bz(\bw_S)}_\infty}{\tau} \gtrsim \bz(\bw_S)^\top \blambda - \frac{\norm{\bz(\bw_S)}_2}{\tau}\sqrt{\frac{\log(n)}{n}}.
\end{align}
There is now a tradeoff: a smaller choice of $\tau$ is better for the quantitative CLT argument, but causes the gap between $\bz(\bw_S)^\top \blambda_\tau$ and $\bz(\bw_S)^\top \blambda$ to increase. Taking $\tau \approx n^{-1/4}$ balances both, meaning that $\blambda_{\tau}$ both satisfies the assumptions needed for the CLT argument, and is an approximate maximizer of the dual objective given in \eqref{eq:mechanism-duality}.

Combining the ingredients discussed in this subsection allows us to obtain a result analogous to
\eqref{eq:mechanism-gaussian-g-bound}, up to the error terms appearing in the
theorem. The remaining details of the proof transfer from the Gaussian case more straightforwardly.

\section{A Failure Example for Minimum-\texorpdfstring{$\ell_1$}{l1}-Norm Interpolation}
\label{sec:l1-failure}
Our risk bounds in \thmref{thm:lp-clean} give sufficient conditions for benign overfitting for any $\ell_p$ norm with $p\in(1,\infty]$, but not for $\ell_1$. Indeed, the concentration and CLT arguments (described in \secref{sec:mechanism}), which showed that the dual optimization problem is approximately Euclidean, do not hold when $p=1$. One may wonder if this gap is a proof artifact or reflects genuinely different behavior of the minimum-$\ell_1$-norm predictor.
As noted in the introduction, it is known that over \emph{Gaussian} inputs (with suitable covariance), the minimum-$\ell_1$ norm does exhibit benign overfitting  \citep{ju2020overfitting,koehler2021uniform, wang2022tight}, with the excess risk scaling as $\sigma^2/\log(d/n)\rightarrow 0$ when $d/n \to \infty$. 
Perhaps surprisingly, we show that despite such results for Gaussians, minimum $\ell_1$ interpolation does \emph{not} generally enjoy benign overfitting. In particular, even for well-behaved mean-zero sub-Gaussian inputs and noise (satisfying Assumptions~\ref{assump:subgaussian-input-distribution} and~\ref{assump:noise-variables}), the risk of the minimum-$\ell_1$-norm predictor can diverge to infinity:

\begin{theorem}[Failure of minimum-\(\ell_1\)-norm interpolation]
\label{thm:l1-mean-zero-failure}
Fix \(n\ge 2\), $d > 2n2^n$, $\sigma^2 > 0$, and inputs 
\[
    x_1\sim\operatorname{Unif}\{\pm1\},
    \qquad
    \text{and}
    \qquad
    x_j\sim\operatorname{Unif}\{\pm s_N\}
    \quad \forall ~ j\in \{2,\ldots, d\} \quad \text{where} ~~ s_N\in(0,1),
\]
where coordinates are independent (and sub-Gaussian). There exists independent sub-Gaussian noise $\xi$ with $\Var(\xi) = \sigma^2$ such that for $\bw^\star=\zero$, with probability at least \(1/5\),
every minimum-\(\ell_1\)-norm interpolator satisfies
\[
    \Rcal(\widehat\bw)\ge \sigma^2 \frac{\log n}{9}.
\]
Moreover, all distributional and structural assumptions in \thmref{thm:lp-clean} are satisfied: 
for $S=\{1\}, N=\{2,\ldots, d\}$, Assumptions~\ref{assump:subgaussian-input-distribution} and~\ref{assump:noise-variables} hold with $b_j=s_N$ for all $j\in N$, and with
\(R_X,R_\xi\) bounded by universal constants, uniformly in
\(n\), \(|N|\), and \(s_N\).
\end{theorem}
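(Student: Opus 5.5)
The plan is to exploit the discrete structure of Rademacher inputs. With overwhelming probability the noise-fitting block contains \emph{every} sign pattern as a column. The min-$\ell_1$ problem then collapses to a one-dimensional problem in $w_1$, which we can solve exactly. We then design a sub-Gaussian noise whose solution has $|\widehat w_1|\gtrsim\sigma\sqrt{\log n}$ with constant probability. Since $\Rcal(\widehat\bw)\ge \E[x_1^2]\,\widehat w_1^2=\widehat w_1^2$ (coordinates are independent and mean zero, and $\bw^\star=\zero$), this gives the claim.

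\emph{Step 1 (all sign patterns appear).} Each column of $X_N$ equals $s_N\bu$ with $\bu$ uniform on $\{\pm1\}^n$, and the columns are independent. By a union bound over the $2^n$ patterns, the probability that some pattern is missing is at most $2^n(1-2^{-n})^{d-1}\le 2^n e^{-(d-1)2^{-n}}$. Since $d>2n2^n$, this is at most roughly $2^n e^{-2n}$, which is small for $n\ge2$. Call this event $\Ecal_1$.

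\emph{Step 2 (reduction to a scalar problem).} Let $\bv:=X_{\cdot,1}\in\{\pm1\}^n$. On $\Ecal_1$, writing $\bz=\by-w_1\bv$, the inner problem is
\[
\min\Bigl\{\|\bw_N\|_1: X_N\bw_N=\bz\Bigr\}=\tfrac{1}{s_N}\min\Bigl\{\textstyle\sum_k|c_k|:\bz=\sum_k c_k\bu_k,\ \bu_k\in\{\pm1\}^n\Bigr\}.
\]
Duplicate columns do not change this value. The right-hand side is the gauge of the convex hull of $\{\pm1\}^n$, which is $\|\bz\|_\infty$. One can also see this via duality: the polar of the hypercube is the $\ell_1$ ball, matching \eqref{eq:mechanism-duality}. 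Hence the min-$\ell_1$ problem becomes
\[
\min_{w_1}\ |w_1|+\tfrac1{s_N}\|\by-w_1\bv\|_\infty
=\min_{w_1}\ |w_1|+\tfrac1{s_N}\max_i|\xi'_i-w_1|,
\qquad \xi'_i:=v_i\xi_i .
\]
Here $\bw^\star=\zero$ gives $\by=\bxi$, and $v_i^2=1$. The second term is $\tfrac1{s_N}\max(M-w_1,\,w_1-m)$ with $M=\max_i\xi'_i$ and $m=\min_i\xi'_i$. Its slope is $\mp 1/s_N$, which in absolute value exceeds $1$ because $s_N<1$. So the objective has slope strictly negative to the left of, and strictly positive to the right of, the midrange. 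The minimizer is therefore \emph{unique}, and every minimum-$\ell_1$-norm interpolator has $\widehat w_1=(M+m)/2$.

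\emph{Step 3 (noise design).} Since $\bv$ is independent of $\bxi$, we take $\xi$ symmetric, so the $\xi'_i$ are i.i.d.\ copies of $\xi$. The obstacle is that Gaussian noise gives a midrange of order only $\sigma/\sqrt{\log n}$. We therefore use an $n$-dependent two-scale law that is still uniformly sub-Gaussian:
\[
\xi=\pm A\ \text{with prob. } \tfrac1{2n}\ \text{each},\qquad \xi=\pm B\ \text{otherwise},\qquad A=\sigma\bigl(\sqrt{\log n}+c\bigr),
\]
where $c$ is a universal constant handling small $n$, and $B\le\sigma$ is chosen so that $\Var(\xi)=\sigma^2$. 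We check $\E\exp(\xi^2/t^2)\le e^{B^2/t^2}+\tfrac1n e^{A^2/t^2}\le2$ for $t=K\sigma$ with a universal $K$, so $R_\xi=\bigO(1)$; the $R_X$ bound for Rademacher inputs is immediate. Now consider the event $\Ecal_2$ that exactly one $\xi'_i$ equals $+A$ and no other index is large. Its probability is $n\cdot\tfrac1{2n}(1-\tfrac1n)^{n-1}\ge \tfrac1{2e}$, independently of $\Ecal_1$. On $\Ecal_2$ we have $M=A$ and $m\ge -B$, so $\widehat w_1\ge (A-B)/2\ge \sigma\sqrt{\log n}/3$ once $c$ is suitably chosen. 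The main bookkeeping is tuning $c,B,K$ so that the variance is exactly $\sigma^2$, the $\psi_2$ bound is universal, and $\Pr(\Ecal_1\cap\Ecal_2)\ge 1/5$ for all $n\ge2$. If the constant $1/(2e)$ proves too small, we would instead allow the large atom on either sign and use $|\widehat w_1|$, which roughly doubles the probability.
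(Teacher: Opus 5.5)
Your proposal is correct and takes essentially the same route as the paper: the all-sign-patterns event, the reduction $\min\|\bw_N\|_1=\|\bz\|_\infty/s_N$, the unique midrange minimizer $\widehat w_1=(M+m)/2$ (using $s_N<1$), and a symmetric two-scale noise with a rare outlier of size about $\sigma\sqrt{\log n}$ that occurs exactly once. Your fallback is not optional: the one-sided event has probability tending to $1/(2e)<1/5$, so you must allow the outlier with either sign and bound $|\widehat w_1|$, which gives $(1-1/n)^{n-1}\ge 1/e$ exactly as in the paper; the paper also sidesteps your tuning of $c$ (which needs $(\sqrt{\log n}+c)^2\le n$ for all $n\ge 2$; $c=1/2$ works) by normalizing $\xi=\sigma(Z_n/\rho_n)\varepsilon$ with $\rho_n\le 3/2$.
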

The construction in \thmref{thm:l1-mean-zero-failure} uses dimension exponential in the number of samples. Over Gaussian inputs, the results of \citet{koehler2021uniform} yield benign overfitting for this same covariance and exponentially large dimension. 

We now give intuition as to how the change from Gaussians to (scaled) Rademacher inputs completely changes the nature of the minimum-$\ell_1$ interpolation problem. As in the proof of our positive results (cf. \secref{sec:mechanism}) we begin by decomposing the minimum-\(\ell_1\)-norm interpolation problem as
\begin{align}\label{eq:sketch_l1_problem}
    \min_{\bw:\,X\bw=\by}\|\bw\|_1
    =
    \min_{w_S \in \R} 
        \left(
            |w_S|+ \min_{\bw_N: X_N\bw_N=\by - X_Sw_S}\|\bw_N\|_1
        \right).
\end{align}
Now suppose that the inputs have $N$ coordinates in $\{\pm s_N\}$ and with high probability every vector in \(\{\pm s_N\}^n\) appears as a column of $X_N$. 
On this event, for any $\blambda \in \R^n$,
\begin{align}\label{eq:no-istotropy}
    \|X_N^\top\blambda\|_\infty
    =s_N\max_{\bu\in\{\pm1\}^n}|\bu^\top\blambda|
    =s_N\|\blambda\|_1.
\end{align}
Using \(\ell_1\) duality and letting $\bz(w_S):=\by - X_Sw_S$, for any $w_S$ the inner minimization problem in \eqref{eq:sketch_l1_problem} becomes
\[
\min_{\bw_N: X_N\bw_N=\bz(w_S)}\|\bw_N\|_1 = \max_{\|X_N^\top\blambda\|_\infty\le1} \bz(w_S)^\top\blambda
=
\max_{s_N\|\blambda\|_1\le1}\bz(w_S)^\top\blambda     
= \frac{1}{s_N}\|\bz(w_S)\|_\infty.
\]
Thus, any min-norm interpolator satisfies
\begin{align}\label{eq:sketch_massaged}
    \hat w_S
    &\in
    \argmin{w_S \in \R}
        \left(
            |w_S|+\frac{1}{s_N}\|\by-X_S w_S\|_\infty
        \right).
\end{align}
By contrast, if the rows of \(X_N\) were i.i.d. copies of a Gaussian vector \(\bx_N\), 
then since $X_N^\top\blambda\stackrel{d}{=}\|\blambda\|_2\bx_N$, \eqref{eq:no-istotropy} would be replaced with $\|X_N^\top\blambda\|_\infty \stackrel{d}{=} \norm{\blambda}_2\norm{\bx_N}_\infty$.
When $|N|$ is large, there is a deterministic value $\kappa_N > 0$ (which may depend on $N$) such that with high probability, $\norm{\blambda}_2\norm{\bx_N}_\infty \approx \norm{\blambda}_2\kappa_N$ for all $\blambda$,
and consequently one can prove that:
\begin{align}\label{eq:sketch_massaged_v2}
    \hat w_S
    \approx 
    \argmin{w_S \in \R}
        \left(
            |w_S|+\frac{1}{\kappa_N}\|\by-X_S w_S\|_2
        \right).
\end{align}
The two problems in \eqref{eq:sketch_massaged} and \eqref{eq:sketch_massaged_v2} have fundamentally different geometries: 
In the Rademacher case, the solution minimizes the $\ell_\infty$ (maximal) residual error, whereas in the Gaussian case, the solution \eqref{eq:sketch_massaged_v2} minimizes the $\ell_2$ (average) error, analogously to our positive results for $\ell_p$ with $p\in(1,\infty]$ (\secref{sec:mechanism}).
Our negative result reflects the intuition that minimizing the $\ell_\infty$ error is inherently brittle and generally inconsistent. \figref{fig:dual-geometry} in the appendix visualizes this geometric contrast directly. 

\section{General Benign Overfitting Result}
\label{sec:general-result}
In \secref{sec:mechanism} we presented our framework behind the benign overfitting results for $\ell_p$ norms. However, our proof technique does not crucially rely on the specific structure of these norms, and in this section,
we generalize the results of \secref{sec:mechanism} to allow for more general norms, input distributions, and target vectors.
First, for any \(\bv\in\R^{N}\), define the dual of the induced \(N\)-block norm\footnote{While this notation is natural as $\norm{\bv}_*$ is the dual norm of $\bv \mapsto \norm{(\zero, \bv)}$, we note that it may in general not be the same as \emph{first} taking the dual norm on $\R^d$ and \emph{then} restricting to $\R^N$.} by
\[
    \|\bv\|_{*}
    :=
    \sup_{\|(\zero,\bu)\|\le1}\bv^\top\bu.
\]
In the case of $\ell_p$ norms, $\|\bv\|_{*} = \|\bv\|_q$ (where $1/p + 1/q=1$).
As in \secref{sec:mechanism}, we will split the minimum-norm interpolation problem into two: first, consider an arbitrary $\bw_S$ and then minimize the norm of $\bw_N$ under the constraint that $(\bw_S, \bw_N)$ interpolates the training data. The majority of our proof concerns the second minimization problem, whose dual form is given by
\[
    \min_{\bw_N:X_N\bw_N=\by - X_S\bw_S} \|(\zero, \bw_N)\|
    =
    \max_{\blambda \in \R^n : \|X_N^\top\blambda\|_*\le1}
    (\by - X_S\bw_S)^\top\blambda.
\]
We now state two assumptions that capture the properties of $\|X_N^\top\blambda\|_*$ that were needed for our proofs in \secref{sec:main-results} (and which are explicitly proven to hold under the conditions of \thmref{thm:lp-clean}). The first assumption is basically that an approximate CLT holds. 
Concretely, note that for \(X_N\) whose rows are i.i.d. copies of a Gaussian vector \(\bx_N\), then as explained in \eqref{eq:mechanism-isotropy2}, writing \(\kappa_N:=\E\|\bx_N\|_*\) gives $\E[\|X_N^\top\blambda\|_*] = \kappa_N \norm{\blambda}_2$. For more general distributions, by CLT considerations, it is thus reasonable to assume that $\E[\|X_N^\top\blambda\|_*] \approx \kappa_N \norm{\blambda}_2$ for a suitable $\kappa_N$ and for all $\blambda$ that are flat (in the sense that $\norm{\blambda}_\infty \ll \norm{\blambda}_2$). 

\begin{assumption}
\label{assump:K}
There exist a scaling factor \(\kappa_N>0\), a constant \(A>0\), and a
nondecreasing function \(\varphi:[0,\infty)\to[0,\infty)\) such that for every $\blambda\in\R^n$,
\[
    K(\blambda)
    :=
    \frac{\E\bigl[\|X_N^\top\blambda\|_{*}\bigr]}{\kappa_N} < \infty
\]
and for every \(\blambda\in\Sphere^{n-1}\):
    \[
        K(\blambda)\ge A 
        \qquad \text{and} \qquad 
        |K(\blambda)-1|
        \le
        \varphi(\|\blambda\|_\infty).
    \]
\end{assumption}
The lower bound with $A$ is a regularity assumption, and the exact value of $A$ is not crucial for the results. 
The next assumption we need is a uniform concentration assumption. 

\begin{assumption}
\label{assump:xnl_concentrate}
For every \(\delta\in(0,1)\), there is a deterministic
\(\varepsilon_{\mathrm{unif}}(\delta)\ge0\) such that, with probability at least
\(1-\delta\),
\[
    \sup_{\blambda\in\Sphere^{n-1}}
    \left|
        \frac{\|X_N^\top\blambda\|_{*}}
        {\E\|X_N^\top\blambda\|_{*}}
        -1
    \right|
    \le
    \varepsilon_{\mathrm{unif}}
    (\delta).
\]
\end{assumption}

We note that even for Gaussian inputs, uniform concentration over \emph{arbitrary} norms is a nontrivial question which has been extensively studied, e.g. via Dvoretzky's theorem and almost-Euclidean sections of convex bodies \citep{artstein2015asymptotic}. For many ``common'' norms, however, including $\ell_p$ ($p\in(1,\infty]$), uniform convergence does hold, as we show in Proposition~\ref{prop:xnl_conc_subgaussian}.

To relate the squared error to the norm $\norm{\cdot}$, we define
\[
    \beta_S
    :=
    \sup_{\bv\in\R^{S}\setminus\{0\}}
    \frac{\|(\bv,\zero)\|}{\|\bv\|_2},
    \quad
    \beta_{N,\Sigma}
    :=
    \sup_{\bv\in\R^{N}\setminus\{0\}}
    \frac{(\bv^\top\Sigma_N\bv)^{1/2}}{\|(\zero,\bv)\|},
    \quad
    \nu_N
    :=
    \bigl((\bw_N^\star)^\top\Sigma_N\bw_N^\star\bigr)^{1/2}.
\]
We are now ready to state our general results:

\begin{theorem}[Risk bound for general norms]
\label{thm:main}
Let $[d] = S \sqcup N$ be any partition of coordinates, and suppose Assumptions~\ref{assump:subgaussian-input-distribution},
\ref{assump:noise-variables}, \ref{assump:K}, and
\ref{assump:xnl_concentrate} hold. There exist \(C,C'\ge1\) and \(c>0\), depending (polynomially) only on \(A^{-1}\), such that the following holds for every
\(\delta\in(0,1)\). Suppose that $n\ge C\left(r_S+\log(n/\delta)\right)$ and
define \(\gamma_N:=\kappa_N/\sqrt n\) and
\begin{equation}
\label{eq:main-scales}
\begin{aligned}
    \rho
    &:=\frac{\nu_N}{\sigma}
    +\frac{R_X\|\bb_S\|_2+(R_Xb_S^++R_\xi\sigma)\sqrt{\log(n/\delta)}}{\sigma\sqrt n},
    \\
    \varepsilon_{\mathrm{flat}}
    &:=\sqrt\rho+\varepsilon_{\mathrm{unif}}(\delta/6)
    +\varphi(C'\sqrt\rho),
\end{aligned}
\end{equation}
and
\begin{equation}
\label{eq:main-signal-radius}
\begin{aligned}
    r_\delta^2
    :=\frac{C}{(b_S^-)^2}\Bigg[&
        \sigma^2\varepsilon_{\mathrm{flat}}
        +\gamma_N\|(\bw_S^\star,\zero)\|
        \left(\sigma+\gamma_N\|(\bw_S^\star,\zero)\|\right)
        +\sigma^2\left(\frac{\gamma_N\beta_S}{b_S^-}\right)^2\\
        &+
        \left(\frac{R_XR_\xi\sigma b_S^+}{b_S^-}\right)^2
        \sqrt{\frac{r_S+\log(n/\delta)}{n}}
    \Bigg].
\end{aligned}
\end{equation}
If \(r_\delta\le1\) and if
\begin{equation}
\label{eq:main-regularity}
    \varepsilon_{\mathrm{flat}}
    +\frac{\gamma_N\beta_S}{b_S^-}
    +
    \left(
        \frac{R_Xb_S^++R_\xi\sigma}{\min\{b_S^-,\sigma\}}
    \right)^2
    \sqrt{\frac{r_S+\log(n/\delta)}{n}}
    \le c,
\end{equation}
then, with probability at least \(1-\delta\), every minimum-norm interpolator satisfies the following:
\begin{align*}
    \|\widehat{\bw}_S-\bw_S^\star\|_2
    &<r_\delta,
    \\
    \Rcal(\widehat{\bw})
    &\le
    (b_S^+)^2r_\delta^2
    +
    C\beta_{N,\Sigma}^2
    \left(
        \frac{\sigma}{\gamma_N}
        +\|(\bw_S^\star,\zero)\|
        +\beta_Sr_\delta
    \right)^2
    +2\nu_N^2 .
\end{align*}
\end{theorem}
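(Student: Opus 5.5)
The proof follows the template of \secref{sec:mechanism}. I would split the minimum-norm problem over $\bw_S$ and $\bw_N$, and write $g(\bw_S):=\min\{\|(\zero,\bw_N)\|: X_N\bw_N=\bz(\bw_S)\}$ with $\bz(\bw_S)=X_S(\bw_S^\star-\bw_S)+X_N\bw_N^\star+\bxi$. By duality,
\[
g(\bw_S)=\max_{\|X_N^\top\blambda\|_*\le1}\bz(\bw_S)^\top\blambda .
\]
By the triangle inequality, $\|\widehat\bw\|$ lies between $g(\widehat\bw_S)-\|(\widehat\bw_S,\zero)\|$ and $g(\widehat\bw_S)+\|(\widehat\bw_S,\zero)\|$. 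The core step is a two-sided estimate
\[
g(\bw_S)=(1\pm O(\varepsilon_{\mathrm{flat}}))\frac{\|\bz(\bw_S)\|_2}{\kappa_N}
\]
holding simultaneously for all $\bw_S$ in the relevant range.

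\textbf{Upper bound on $g$.} I would use Assumptions~\ref{assump:xnl_concentrate} and~\ref{assump:K}: for every unit $\blambda$,
\[
\|X_N^\top\blambda\|_*\ge(1-\varepsilon_{\mathrm{unif}})\kappa_N K(\blambda)\ge(1-\varepsilon_{\mathrm{unif}})\kappa_N A .
\]
This only gives the crude bound $g\le\|\bz\|_2/(A\kappa_N(1-\varepsilon_{\mathrm{unif}}))$, so for the sharp version the near-optimal dual variables also have to be shown to be flat.

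\textbf{Lower bound on $g$.} I would use the truncation argument (Lemma~\ref{lem:truncate}). Take $\blambda=\bz/\|\bz\|_2$ truncated at level $\tau=C'\sqrt\rho$, then rescale by $1/((1+\varepsilon_{\mathrm{unif}})(1+\varphi(\tau))\kappa_N)$ so that it is feasible. The truncation loss is controlled by the flatness ratio $\|\bz\|_\infty/\|\bz\|_2$, and this ratio is $\lesssim\rho$. To see this, note $\|\bxi\|_\infty\lesssim R_\xi\sigma\sqrt{\log(n/\delta)}$, and the $X_S$ and $X_N\bw_N^\star$ parts have entries that are sub-Gaussian with scales $R_X\|\bb_S\|_2\|\bw_S-\bw_S^\star\|$ and $R_X\nu_N$. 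At the same time $\|\bz\|_2\ge\sigma\sqrt n/2$ by concentration of $\|\bxi\|_2^2$ together with a lower bound on the cross term. This is where the definition of $\rho$ comes from.

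\textbf{Upper bound on $g$, sharp version.} For an optimal $\blambda$, I would write $\blambda=\blambda_{\le\tau}+\blambda_{>\tau}$. The large coordinates of $\blambda$ are few, since $\|\blambda\|_2\lesssim1/(A\kappa_N)$, and $\bz^\top\blambda_{>\tau}$ is small because $\bz$ is flat. The flat part then obeys the CLT bound $K\ge1-\varphi(\tau)$. I expect this step to be the main obstacle: the two errors $\tau^{-1}\|\bz\|_\infty/\|\bz\|_2$ and $\varphi(\tau)$ must be balanced, and the resulting $\sqrt\rho$ term has to be made uniform over $\bw_S$. For uniformity I would use the fact that $\bz$ is affine in $\bw_S$ and work on a single high-probability event, namely restricted-eigenvalue concentration of $X_S^\top X_S/n$ (using $n\ge C(r_S+\log(n/\delta))$), plus bounds on $X_S^\top\bxi$ and on $\max_i|\bx_{i,S}^\top\bv|$ over the ball.

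\textbf{Localization on the signal block.} Given the estimate for $g$, I would compare $\widehat\bw$ with the feasible competitor $\bw_S=\bw_S^\star$. This gives
\[
\frac{\|\bz(\widehat\bw_S)\|_2}{\kappa_N}\le(1+O(\varepsilon_{\mathrm{flat}}))\frac{\|\bz(\bw_S^\star)\|_2}{\kappa_N}+2\|(\bw_S^\star,\zero)\|+\beta_S\|\widehat\bw_S-\bw_S^\star\|_2 .
\]
Multiplying by $\kappa_N/\sqrt n$, squaring, and expanding $\|\bz\|_2^2/n\approx(b_S^-)^2\|\bv\|_2^2+\sigma^2+\nu_N^2\pm$ cross terms, where $\bv=\widehat\bw_S-\bw_S^\star$, yields a quadratic inequality in $\|\bv\|_2$. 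Solving it gives $\|\bv\|_2<r_\delta$, with each term of $r_\delta^2$ matching one error source: $\sigma^2\varepsilon_{\mathrm{flat}}$, the $\gamma_N\|(\bw_S^\star,\zero)\|$ terms, $\gamma_N\beta_S/b_S^-$, and the concentration term for $X_S$. Condition \eqref{eq:main-regularity} absorbs the linear-in-$\|\bv\|_2$ terms, and the restriction $r_\delta\le1$ keeps everything in the ball where the uniform estimates hold.

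\textbf{Risk bound.} Finally, $\Rcal(\widehat\bw)\le(b_S^+)^2\|\bv\|_2^2+2\beta_{N,\Sigma}^2\|(\zero,\widehat\bw_N)\|^2+2\nu_N^2$. Since $\|(\zero,\widehat\bw_N)\|=g(\widehat\bw_S)$, the bound above gives $g(\widehat\bw_S)\lesssim\|\bz\|_2/\kappa_N\lesssim\sigma/\gamma_N+\|(\bw_S^\star,\zero)\|+\beta_S r_\delta$, which finishes the proof.
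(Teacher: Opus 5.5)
Your overall route is the paper's: duality for the $N$-block, passing from $\|X_N^\top\blambda\|_*$ to $K$ via Assumption~\ref{assump:xnl_concentrate}, truncation plus the flat-vector CLT bound, flatness of the residual at scale $\rho$, the sandwich $g\mp\|(\bw_S,\zero)\|$, and comparison with $\bw_S^\star$.

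\textbf{Main gap: the localization step is circular.} Your two-sided estimate for $g$ is established only uniformly over the ball $\|\bw_S-\bw_S^\star\|_2\le 1$. The flatness bound $\|\bz\|_\infty/\|\bz\|_2\lesssim\rho$ uses $\max_i\|\bx_{i,S}\|_2\,\|\bw_S-\bw_S^\star\|_2$ with $\|\bw_S-\bw_S^\star\|_2\le1$. Yet you apply the lower half of the estimate at the unknown point $\widehat\bw_S$, and then say that $r_\delta\le1$ ``keeps everything in the ball''. That presupposes the conclusion. Without flatness at $\widehat\bw_S$ you only get a lower bound on $g(\widehat\bw_S)$ that is off by a constant factor such as $1+\varphi(1)$. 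That is too weak for your quadratic inequality, because the $\sigma^2$ terms on the two sides no longer cancel.

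\textbf{The missing idea is convexity.} Let $F(\bw_S):=\min\{\|(\bw_S,\bw_N)\|: X_N\bw_N=\by-X_S\bw_S\}$. It is a partial minimization of a norm over an affine set, hence convex in $\bw_S$. On the event that $X_N$ has full row rank, the $S$-blocks of min-norm interpolators are exactly its minimizers. So it suffices to show $F(\bw_S)>F(\bw_S^\star)$ on the sphere $\|\bw_S-\bw_S^\star\|_2=r_\delta$, which lies inside the unit ball because $r_\delta\le1$. If a minimizer lay outside, the segment from $\bw_S^\star$ to it would cross this sphere at a point with $F\le F(\bw_S^\star)$, a contradiction. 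Your quadratic inequality, run at a fixed point of that sphere rather than at $\widehat\bw_S$, gives exactly this separation, and this is how the paper argues.

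\textbf{Smaller point 1: the risk step.} The identity $\|(\zero,\widehat\bw_N)\|=g(\widehat\bw_S)$ holds only for block-separable norms such as $\ell_p$. For a general norm, $\widehat\bw_N$ minimizes $\|(\widehat\bw_S,\bw_N)\|$, not $\|(\zero,\bw_N)\|$. Use instead $\|(\zero,\widehat\bw_N)\|\le F(\widehat\bw_S)+\|(\widehat\bw_S,\zero)\|\le F(\bw_S^\star)+\|(\bw_S^\star,\zero)\|+\beta_S r_\delta$, together with $F(\bw_S^\star)\le g(\bw_S^\star)+\|(\bw_S^\star,\zero)\|$. This gives the stated bound.

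\textbf{Smaller point 2: the sharp upper bound on $g$.} The CLT lower bound on $K(\blambda_{\le\tau})$ only controls $\|\blambda_{\le\tau}\|_2$ because $K(\blambda_{\le\tau})\le K(\blambda)$. That is, zeroing coordinates cannot increase the \emph{expected} dual norm; this follows from Jensen with independent, centered rows (the first part of Lemma~\ref{lem:truncate}). It can fail for the random norm $\|X_N^\top\cdot\|_*$, which is why one must pass to $K$ before truncating. You also need to treat separately the case where the truncated vector has small Euclidean norm, since its normalization need not be flat. Conversely, truncation is unnecessary in your lower bound, because $\bz/\|\bz\|_2$ is already flat.
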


Once again, the conditions of the theorem that ensure benign overfitting are best highlighted when making the results asymptotic, as given by the following corollary:

\begin{corollary}[Benign overfitting for general norms]
\label{cor:general-asymptotic}
Consider a sequence of problem instances indexed by the sample size \(n\), each satisfying the assumptions of \thmref{thm:main}. Write
\(\gamma_N:=\kappa_N/\sqrt n\), and suppose that \(R_X,R_\xi,\sigma,\sigma^{-1},(b_S^-)^{-1},b_S^+, A^{-1}\) and \(\|\bw_S^\star\|_2\) are uniformly bounded.
If there is a sequence \(\delta_n\to0\) such that, with
\(\varepsilon_{\mathrm{flat}}\) evaluated at \(\delta=\delta_n\),
\[
    \varepsilon_{\mathrm{flat}}\longrightarrow 0,
    \qquad
    \beta_S\gamma_N \longrightarrow 0,
    \qquad
    \frac{\beta_{N,\Sigma}}{\gamma_N}\longrightarrow 0,
\]
then, with probability tending to one, every minimum-norm interpolator satisfies
\[
    \Rcal(\widehat\bw)\longrightarrow0.
\]
\end{corollary}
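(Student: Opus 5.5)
The plan is to apply Theorem~\ref{thm:main} with $\delta=\delta_n$ for each $n$ and check that its hypotheses eventually hold and that its risk bound tends to zero. The relevant parameters are $R_X,R_\xi,\sigma,\sigma^{-1},(b_S^-)^{-1},b_S^+,A^{-1}$ and $\|\bw_S^\star\|_2$. Since these are uniformly bounded, the constants $C,C',c$ (which depend only on $A^{-1}$) are uniformly bounded as well. Every prefactor of the form $R_XR_\xi\sigma b_S^+/b_S^-$ or $(R_Xb_S^++R_\xi\sigma)/\min\{b_S^-,\sigma\}$ is then $O(1)$.

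First I need the sample condition $n\ge C(r_S+\log(n/\delta_n))$, together with the vanishing of $\sqrt{(r_S+\log(n/\delta_n))/n}$. The corollary does not assume this directly, so I will extract it from $\varepsilon_{\mathrm{flat}}\to0$. Since $\varepsilon_{\mathrm{flat}}\ge\sqrt\rho$, we get $\rho\to0$. We have $\rho\ge R_\xi\sqrt{\log(n/\delta_n)}/\sqrt n$, and $R_\xi\ge$ some absolute constant because $\|\xi\|_{\psi_2}\gtrsim\sigma$. Also $\rho\ge R_X\|\bb_S\|_2/(\sigma\sqrt n)$, and $\|\bb_S\|_2^2=r_S(b_S^+)^2\ge r_S(b_S^-)^2$, with $b_S^-$ bounded below and $R_X$ bounded below by an absolute constant. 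Together these give $(r_S+\log(n/\delta_n))/n\lesssim\rho^2\to0$, so the sample condition holds for large $n$. The same bound shows that $\rho\to0$ forces $\nu_N/\sigma\to0$, hence $\nu_N\to0$.

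Next I bound $\|(\bw_S^\star,\zero)\|\le\beta_S\|\bw_S^\star\|_2$. Then $\gamma_N\|(\bw_S^\star,\zero)\|\lesssim\beta_S\gamma_N\to0$ and $\gamma_N\beta_S/b_S^-\to0$. So every term in \eqref{eq:main-regularity} and in $r_\delta^2$ tends to zero, which means $r_\delta\to0$ and in particular $r_\delta\le1$ and \eqref{eq:main-regularity} hold eventually. Theorem~\ref{thm:main} then gives, with probability at least $1-\delta_n\to1$,
\[
\Rcal(\widehat\bw)\le (b_S^+)^2r_\delta^2+C\left(\frac{\beta_{N,\Sigma}\sigma}{\gamma_N}+\beta_{N,\Sigma}\beta_S(\|\bw_S^\star\|_2+r_\delta)\right)^2+2\nu_N^2 .
\]
The first and last terms vanish, and so does $\beta_{N,\Sigma}\sigma/\gamma_N$.

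The main obstacle is the cross term $\beta_{N,\Sigma}\beta_S$. I would write it as $(\beta_{N,\Sigma}/\gamma_N)(\beta_S\gamma_N)$, and both factors tend to zero by hypothesis. This finishes the argument, and the rest is bookkeeping.
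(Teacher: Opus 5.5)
Your proposal is correct and follows the paper's proof essentially step by step. You extract $\rho\to0$, $\nu_N\to0$ and $(r_S+\log(n/\delta_n))/n\to0$ from $\varepsilon_{\mathrm{flat}}\to0$. You then use $\|(\bw_S^\star,\zero)\|\le\beta_S\|\bw_S^\star\|_2$ to obtain the regularity condition and $r_{\delta_n}\to0$, and you handle the middle risk term by factoring it as $\beta_{N,\Sigma}\beta_S=(\beta_{N,\Sigma}/\gamma_N)(\beta_S\gamma_N)$, exactly as the paper does. Your sample-size step relies on $R_X,R_\xi\ge1$, which follows from $\|Z\|_{\psi_2}\ge(\E Z^2)^{1/2}$; making this explicit fills in a detail the paper leaves implicit.
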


For reference, in the setting of \thmref{thm:lp-clean}, the conditions $\beta_S\gamma_N \to 0$ and $\frac{\beta_{N,\Sigma}}{\gamma_N}\to 0$
are equivalent to:
\begin{align}\label{eq:conditions-general}
    s_N^2 |N|^{2/q}|S|^{\max\{2/p-1, 0\}}
    &\ll n \ll |N|^{\min\{2/q,1\}},
\end{align}
where $\varepsilon_{\mathrm{flat}}$ depends on the $\varepsilon$ values from Assumptions \ref{assump:K} and \ref{assump:xnl_concentrate}. 
In the setting of \thmref{thm:lp-clean}, the second inequality in \eqref{eq:conditions-general} and the condition $n\gg r_S$ together imply $\varepsilon_{\mathrm{flat}} \to 0$ and thus benign overfitting.

\subsection*{AI use statement}
The fundamental proof ideas in the paper were developed by the authors, without the use of AI. In particular, we did not use AI tools to develop the critical proof ideas and the mechanism underlying our results (described in \secref{sec:mechanism}), as well as early versions of the proofs. Nevertheless, we did use AI tools for finding references and known results that were needed for our proofs, sharpening proofs and weakening their assumptions, writing the code for the experiments, and general editing/writing.
We have reviewed all AI-assisted work, including everything mentioned in the previous sentence. We take responsibility for the final content of this work, including text, claims or artifacts produced with the aid of generative AI.

\newpage

\bibliography{bib}
\bibliographystyle{iclr/iclr2027_conference}

\newpage

\tableofcontents

\newpage

\appendix
\begin{figure*}[t]
    \centering
    \includegraphics[width=\textwidth]{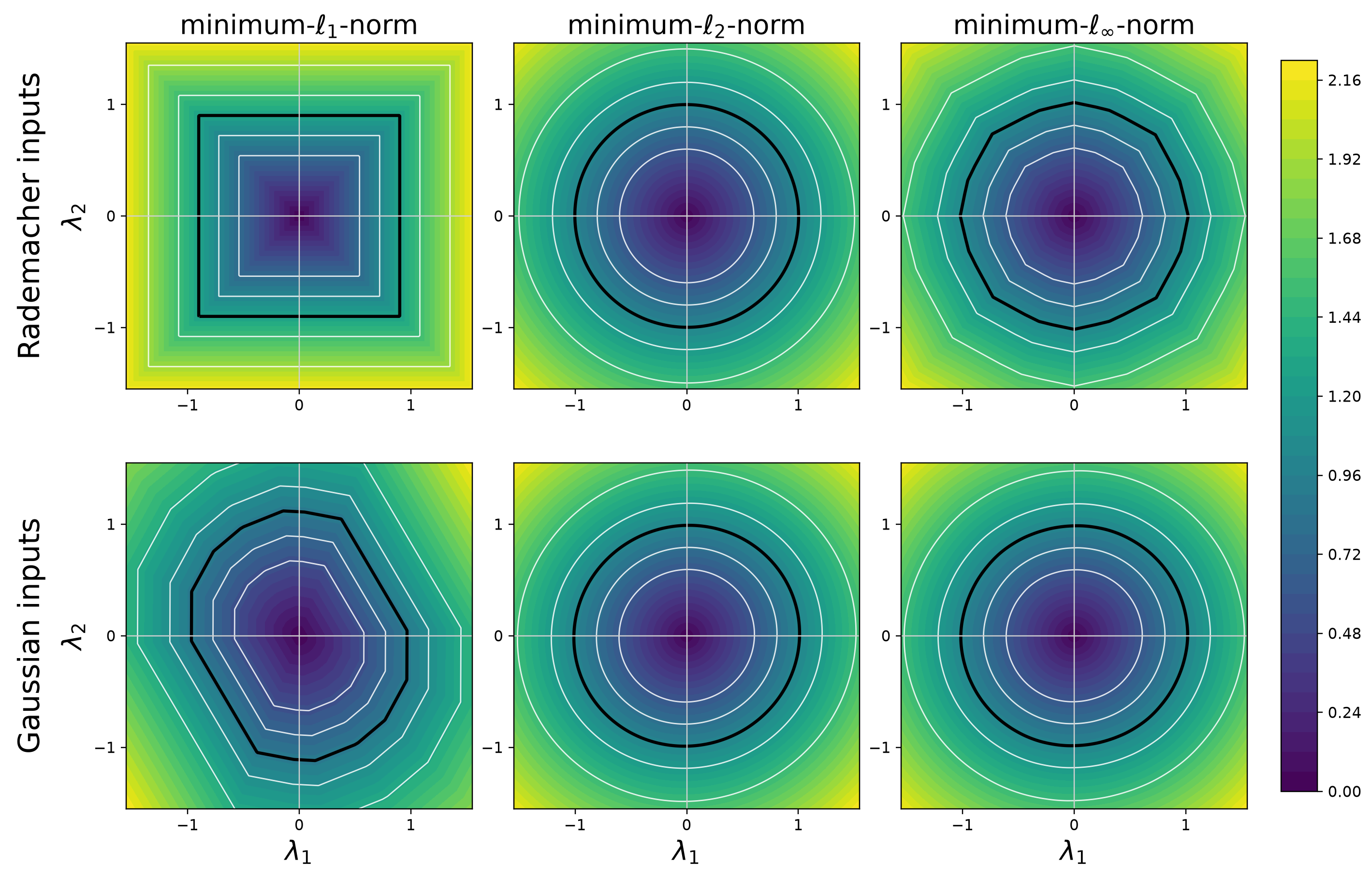}
    \caption{
    Dual geometry for Rademacher (top) and Gaussian (bottom) inputs with $n=12$ and $|N|=2^n$.
    Each panel shows normalized level sets of $\blambda \mapsto \|X_N^\top \blambda\|_q$ on a random flat two-dimensional subspace $\blambda = a \bu + b \bv$, where $\bu,\bv\in\{\pm 1/\sqrt n\}^n$ are orthonormal and $q$ is dual to the indicated minimum-$\ell_p$ norm.
    The black curve is the unit level set.
    The geometries differ substantially: For Gaussian inputs, the geometry is nearly Euclidean across all three norms. However, for Rademacher inputs, the geometry varies substantially between the norms.
    }
    \label{fig:dual-geometry}
\end{figure*}

\section{Related Work}
\label{sec:related-work}

\paragraph{Minimum-$\ell_2$-norm interpolation.}
There is by now a very large body of work studying the generalization error of minimum-$\ell_2$-norm interpolation in linear regression \citep{kobak2020optimal, bartlett2020benign, muthukumar2020harmless, belkin2020two, derezinski2020exact, negrea2020defense, richards2021asymptotics, muthukumar2021classification,hastie2022surprises, bach2023high, shamir2023implicit,tsigler2023benign, mallinar2024}, as well as random-feature/kernel regression \citep{liang2020just, bordelon2020spectrum, cui2021generalization, xiao2022precise, mei2022generalization, lu2023optimal, cheng2024comprehensive, barzilaigeneralization}. In many cases, sharp excess risk bounds are possible when analyzing the minimum-$\ell_2$-norm interpolator under relatively mild assumptions. However, the overwhelming majority of these analyses \emph{begin} with the known closed-form formula for the minimum $\ell_2$-norm interpolator, namely $\hat{\bw}=X^\top(XX^\top)^{\dagger}\by$, which is highly specific to the $\ell_2$ case. As such, many of the techniques and tools developed in such works do not extend to the analysis of minimum-norm interpolation under more general norms. 

\paragraph{General norms.}
Much less is known about minimum-norm interpolation for non-$\ell_2$ norms, and existing works largely rely on Gaussian inputs (and often Gaussian noise as well) \citep{ju2020overfitting, koehler2021uniform, chatterji2022foolish, donhauser2022fast, wang2022tight, kur2026gaussianity}. In most of these works, Gaussianity is crucially utilized by relying on the Convex Gaussian Minmax theorem \citep{thrampoulidis2015regularized}, which does not generalize to other distributions. Under such assumptions, perhaps the best studied norm is $\ell_1$ (also known as basis pursuit), where for isotropic Gaussian inputs with variance $\sigma^2$, sharp rates scaling as $\sigma^2/\log(d/n)$ are known. While this converges to $0$ when $d/n \to \infty$, we show that Gaussianity is critical for this bound, and provide an example of excess risk lower bounded as $\sigma^2 \log(n)$ for sub-Gaussian inputs and noise. Within these works, perhaps the closest to our setting is \citet{koehler2021uniform}, who analyze general norms and split the inputs into a low-dimensional block for fitting the target and a high-dimensional block for fitting the noise, as we do here. Nevertheless, the Gaussian Minmax theorem is central to their results, preventing them from analyzing more general distributions. 

\citet{kur2024banach,kur2026minimum} provide sharp excess risk bounds for $\ell_p$ norms with $p \in (1, 2]$ that only assume sub-Gaussian inputs. Nevertheless, they assume Gaussian noise and isotropic inputs, and their risk bounds do not apply to more general norms such as $\ell_p$ with $p>2$. 
\citet{chinot2022robustness} analyze more general input/noise distributions, allowing even for adversarial noise. In the case where the label noise is independent with variance $\sigma^2$, they establish risk bounds of the form $\bigO\left(\sigma^2\right)$, but these do not imply benign overfitting, meaning that the excess risk does not tend to $0$ when $\sigma^2 > 0$.

\section{Auxiliary Lemmas}
\label{sec:auxiliary-lemmas}

In this section we collect standard concentration results for later use. 
Recall that, for nonempty \(T\subseteq[d]\),
\[
    \Sigma_T
    :=
    \E_{\bx\sim\Dcal_{\bx}}[\bx_T\bx_T^\top]
    =
    \operatorname{diag}\bigl((b_j^2)_{j\in T}\bigr).
\]
The last identity follows from
Assumption~\ref{assump:subgaussian-input-distribution}: the coordinates are
independent and mean-zero, with variances \(b_j^2\).
For a nonempty coordinate set \(T\subseteq[d]\) and
\(\delta\in(0,1)\), define the quadratic concentration scale
\begin{equation}
\label{eq:quadratic-concentration-scale}
    \Delta_2(T,\delta)
    :=
    \sqrt{
        \frac{r_T+\log(8/\delta)}{n}
    }.
\end{equation}
For the empty set, define instead the scalar Bernstein scale
\[
    \Delta_2(\emptyset,\delta)
    :=
    \left[
        \sqrt{\frac{\log(8/\delta)}{n}}
        +
        \frac{\log(8/\delta)}{n}
    \right].
\]
For nonempty \(T\), also define the maximum-coordinate scale
\begin{equation}
\label{eq:max-concentration-scale}
    \Delta_\infty(T,\delta)
    :=
    \frac{1}{\sqrt n}
    \left[
        R_X\|\bb_T\|_2
        +
        \left(R_Xb_T^++R_\xi\sigma\right)
        \sqrt{\log\frac{8n}{\delta}}
    \right].
\end{equation}

\begin{lemma}
\label{lem:anisotropic-euclidean-norm}
Let
\(Z=(Z_1,\ldots,Z_m)\) have independent, mean-zero, sub-Gaussian coordinates, and write \(K_j:=\|Z_j\|_{\psi_2}\). There exists an absolute constant $C>0$, such that for every \(t>0\),
with probability at least \(1-2e^{-t}\),
\begin{equation}
\label{eq:anisotropic-squared-norm-concentration}
    \left|
        \|Z\|_2^2-\sum_{j=1}^m\E Z_j^2
    \right|
    \le
    C\left(
        \sqrt{t\sum_{j=1}^mK_j^4}
        +t\max_{j\in[m]}K_j^2
    \right),
\end{equation}
and
\begin{equation}
\label{eq:anisotropic-norm-concentration}
    \|Z\|_2
    \le
    C\left[
        \left(\sum_{j=1}^mK_j^2\right)^{1/2}
        +
        \max_{j\in[m]}K_j\sqrt t
    \right].
\end{equation}
\end{lemma}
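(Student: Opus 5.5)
The first bound is Hanson--Wright (or Bernstein) applied to the centered squares; the second follows from the first.

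\textbf{Centering the squares.} Set \(Y_j:=Z_j^2-\E Z_j^2\). These are independent, mean-zero, and sub-exponential. By the standard relation \(\|Z_j^2\|_{\psi_1}=\|Z_j\|_{\psi_2}^2\) and the centering lemma,
\[
\|Y_j\|_{\psi_1}\le C_0\|Z_j^2\|_{\psi_1}=C_0K_j^2 .
\]

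\textbf{Proof of \eqref{eq:anisotropic-squared-norm-concentration}.} Apply Bernstein's inequality for sums of independent sub-exponential variables (as in Vershynin's book):
\[
\Pr\Bigl(\Bigl|\sum_j Y_j\Bigr|\ge s\Bigr)\le 2\exp\Bigl(-c\min\Bigl\{\frac{s^2}{\sum_j\|Y_j\|_{\psi_1}^2},\frac{s}{\max_j\|Y_j\|_{\psi_1}}\Bigr\}\Bigr).
\]
Choose
\[
s=C\Bigl(\sqrt{t\sum_jK_j^4}+t\max_jK_j^2\Bigr)
\]
with \(C\) large enough relative to \(C_0\) and \(c\). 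Then each of the two terms in the minimum is at least \(t/c\), because \(s\) dominates each summand separately. So the failure probability is at most \(2e^{-t}\), which gives \eqref{eq:anisotropic-squared-norm-concentration}. The only care needed is tracking constants so that a single \(C\) works in both branches of the minimum.

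\textbf{Proof of \eqref{eq:anisotropic-norm-concentration}.} Work on the same event. First, \(\E Z_j^2\le C_1K_j^2\) by the moment characterization of the \(\psi_2\) norm, so \(\sum_j\E Z_j^2\le C_1\sum_jK_j^2\). Second, \(\sum_jK_j^4\le(\max_jK_j^2)\sum_jK_j^2\), so by AM--GM
\[
\sqrt{t\sum_jK_j^4}\le \tfrac12\Bigl(\sum_jK_j^2+t\max_jK_j^2\Bigr).
\]
Combining these with \eqref{eq:anisotropic-squared-norm-concentration},
\[
\|Z\|_2^2\le C''\Bigl(\sum_jK_j^2+t\max_jK_j^2\Bigr).
\]
Taking square roots and using \(\sqrt{a+b}\le\sqrt a+\sqrt b\) gives \eqref{eq:anisotropic-norm-concentration}, after enlarging \(C\) if needed.

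No step is a genuine obstacle, since this is a standard result.
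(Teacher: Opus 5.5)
Your proposal is correct and follows essentially the same route as the paper. The paper likewise bounds the $\psi_1$-norms of the centered squares $Z_j^2-\E Z_j^2$ by $CK_j^2$ and applies Bernstein's inequality to get the first bound. It then combines $\E Z_j^2\le CK_j^2$ with $\sum_j K_j^4\le \max_j K_j^2\sum_j K_j^2$ to deduce the second bound on the same event. Your AM--GM step simply makes explicit the paper's remark that squaring the right-hand side of the second bound dominates the first.
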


\begin{proof}
Throughout this proof, we use $C$ to denote an absolute constant that may change from line to line.
The variables \(Z_j^2-\E Z_j^2\) are independent, mean-zero, and, by
\cite[Lemmas 2.7.8, 2.8.5]{Vershynin_2026}, have \(\psi_1\)-norms at most
\(C K_j^2\). Applying Bernstein's inequality
\cite[Theorem 2.9.1]{Vershynin_2026} gives
\eqref{eq:anisotropic-squared-norm-concentration}. Moreover,
\[
    \sum_{j=1}^m\E Z_j^2
    \le
    C\sum_{j=1}^mK_j^2,
    \qquad
    \left(\sum_{j=1}^mK_j^4\right)^{1/2}
    \le
    \max_{j\in[m]}K_j
    \left(\sum_{j=1}^mK_j^2\right)^{1/2}.
\]
Thus, squaring the right-hand side of
\eqref{eq:anisotropic-norm-concentration} shows it dominates the upper bound for \(\|Z\|_2^2\) given by
\eqref{eq:anisotropic-squared-norm-concentration}.
\end{proof}

\begin{lemma}
\label{lem:basic-concentration-consequences}
Suppose Assumptions~\ref{assump:subgaussian-input-distribution}
and~\ref{assump:noise-variables} hold. Then there exists an absolute constant $C>0$ such that for every nonempty
\(T\subseteq[d]\), every \(\delta\in(0,1)\), and every 
$n\ge C\left(r_T+\log(8/\delta)\right)$,
with probability at least \(1-\delta\), the following bounds hold simultaneously:
\[
    \left\|
        \frac1nX_T^\top X_T-\Sigma_T
    \right\|_{\op}
    \le
    C R_X^2(b_T^+)^2\Delta_2(T,\delta),
\]
\[
    \frac1{\sqrt n}
    \left(
        \max_{i\in[n]}\|\bx_{i,T}\|_2
        +
        \|\bxi\|_\infty
    \right)
    \le
    C\Delta_\infty(T,\delta),
\]
\[
    \left\|
        \frac1nX_T^\top\bxi
    \right\|_2
    \le
    C R_XR_\xi\sigma b_T^+\Delta_2(T,\delta),
\]
\[
    \left|
        \frac1n\|\bxi\|_2^2-\sigma^2
    \right|
    \le
    C R_\xi^2\sigma^2\Delta_2(\emptyset,\delta).
\]
Moreover,
\[
    \frac1{\sqrt n}\|X_N\bw_N^\star\|_2
    \le
    \left(
        1
        +
        C R_X^2\Delta_2(\emptyset,\delta)
    \right)^{1/2}
    \nu_N.
\]
\end{lemma}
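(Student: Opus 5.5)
The plan is to prove each of the five bounds separately, each on an event of probability at least $1-\delta/5$ (or $1-\delta/8$), and then take a union bound; the $\log(8/\delta)$ in $\Delta_2$ and $\Delta_\infty$ leaves room for this. Throughout, $C$ is an absolute constant and all randomness is in $X$ and $\bxi$.

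\emph{Covariance bound.} We write $\frac1nX_T^\top X_T-\Sigma_T=\frac1n\sum_i(\bx_{i,T}\bx_{i,T}^\top-\Sigma_T)$. Since the rows have independent sub-Gaussian coordinates with $\|x_j\|_{\psi_2}\le R_Xb_j$, each $\bx_{i,T}$ is a sub-Gaussian vector with $\|\langle \bx_{i,T},\bu\rangle\|_{\psi_2}\le CR_X\|\Sigma_T^{1/2}\bu\|_2$. We then apply the effective-rank covariance estimation bound (Koltchinskii--Lounici, or \cite{Vershynin_2026} Thm.~9.2.4 type). This gives an error of $CR_X^2\|\Sigma_T\|_{\op}\bigl(\sqrt{(r_T+t)/n}+(r_T+t)/n\bigr)$, and $n\ge C(r_T+\log(8/\delta))$ absorbs the linear term. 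This step is the main obstacle, because it needs the sharp $r_T$ dependence (not $|T|$) with $\psi_2$-constants scaled per coordinate. If needed, we would fall back on a generic chaining/Talagrand $\gamma_2$ bound for the class $\{\langle\cdot,\bu\rangle\}$, whose Gaussian width is $\sqrt{\operatorname{tr}\Sigma_T}$.

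\emph{Max-coordinate bound.} For each $i$, Lemma~\ref{lem:anisotropic-euclidean-norm} with $Z=\bx_{i,T}$, $K_j\le R_Xb_j$, $t=\log(16n/\delta)$ gives $\|\bx_{i,T}\|_2\le C R_X(\|\bb_T\|_2+b_T^+\sqrt t)$. A union bound over $i\in[n]$ extends this to all rows. The sub-Gaussian tail plus a union bound gives $\|\bxi\|_\infty\le CR_\xi\sigma\sqrt{\log(16n/\delta)}$. Dividing by $\sqrt n$ matches $C\Delta_\infty(T,\delta)$.

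\emph{Cross-term bound.} We condition on $X$. Then $\frac1nX_T^\top\bxi$ has $j$-th coordinate $\frac1n\sum_i x_{ij}\xi_i$. Cleaner is to condition on $\bxi$ instead: then $X_T^\top\bxi=\sum_i\xi_i\bx_{i,T}$ has independent coordinates $Z_j=\sum_i \xi_i x_{ij}$ with $\|Z_j\|_{\psi_2}\le CR_Xb_j\|\bxi\|_2$. Lemma~\ref{lem:anisotropic-euclidean-norm} gives $\|X_T^\top\bxi\|_2\le CR_X\|\bxi\|_2(\|\bb_T\|_2+b_T^+\sqrt t)$. Combined with the $\|\bxi\|_2^2$ bound, which gives $\|\bxi\|_2\le C\sqrt n R_\xi\sigma$, dividing by $n$ yields $CR_XR_\xi\sigma b_T^+\sqrt{(r_T+t)/n}$, using $\|\bb_T\|_2=b_T^+\sqrt{r_T}$.

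\emph{Scalar bounds.} The noise-norm bound is Bernstein for the i.i.d.\ $\psi_1$ variables $\xi_i^2-\sigma^2$ with $\|\cdot\|_{\psi_1}\le CR_\xi^2\sigma^2$, giving exactly the form $\Delta_2(\emptyset,\delta)$. For the last claim, $(X_N\bw_N^\star)_i=\langle\bx_{i,N},\bw_N^\star\rangle$ are i.i.d.\ mean zero with variance $\nu_N^2$ and $\psi_2$-norm at most $CR_X\nu_N$, by independence of coordinates and Hoeffding-type sub-Gaussian summation. Bernstein again gives $\frac1n\|X_N\bw_N^\star\|_2^2\le\nu_N^2(1+CR_X^2\Delta_2(\emptyset,\delta))$, and taking square roots completes the proof.
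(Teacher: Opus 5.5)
Your proposal is correct and follows essentially the same route as the paper: per-row applications of Lemma~\ref{lem:anisotropic-euclidean-norm} with a union bound, conditioning on $\bxi$ for the cross term, Bernstein for $\|\bxi\|_2^2$ and $\|X_N\bw_N^\star\|_2^2$, and a final union bound. The one difference is the covariance step. There the paper invokes Zhivotovskiy's dimension-free bound under the same condition $\|\langle\bu,\bx_T\rangle\|_{\psi_2}\le CR_X\|\Sigma_T^{1/2}\bu\|_2$. Note that Koltchinskii--Lounici is Gaussian-specific, so the reference that actually does the job for you is the Vershynin matrix-deviation bound, applied after whitening $\bx_T$ to the isotropic vector with independent coordinates $x_j/b_j$.
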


\begin{proof}
Throughout this proof, we use $C$ to denote an absolute constant that may change from line to line.
For every \(\bu\in\R^{T}\), independence and centering of the coordinates,
together with the sub-Gaussian linear-combination inequality
\cite[Proposition 2.7.1]{Vershynin_2026}, give
\[
    \|\langle\bu,\bx_T\rangle\|_{\psi_2}
    \le
    C\left(\sum_{j\in T}u_j^2\|x_j\|_{\psi_2}^2\right)^{1/2}
    \le
    C R_X\left(\bu^\top\Sigma_T\bu\right)^{1/2}.
\]
Thus condition~(4) of Example~1 in
\citet{zhivotovskiy2024dimension} holds with \(\kappa=C R_X\), and hence
condition~(2) of Theorem~1 holds for
\(M=\bx_T\bx_T^\top\). Applying that theorem with
\(t=\log(8/\delta)\), using
\[
    \frac{\operatorname{tr}(\Sigma_T)}{\|\Sigma_T\|_{\op}}
    =r_T,
    \qquad
    \|\Sigma_T\|_{\op}=(b_T^+)^2,
\]
and the assumed sample-size condition, gives, with probability at least
\(1-\delta/8\),
\[
    \left\|
        \frac1nX_T^\top X_T-\Sigma_T
    \right\|_{\op}
    \le
    C R_X^2(b_T^+)^2
    \sqrt{\frac{r_T+\log(8/\delta)}{n}}.
\]

For the row norm, apply
Lemma~\ref{lem:anisotropic-euclidean-norm} with
\(t=\log(16n/\delta)\) to each \(\bx_{i,T}\), using
\(\|x_{i,j}\|_{\psi_2}\le R_Xb_j\), and take a union bound over
\(i\in[n]\). This gives, with probability at least \(1-\delta/8\), after
adjusting the absolute constant,
\[
    \frac1{\sqrt n}\max_{i\in[n]}\|\bx_{i,T}\|_2
    \le
    \frac{C R_X}{\sqrt n}
    \left(
        \|\bb_T\|_2
        +
        b_T^+\sqrt{\log(8n/\delta)}
    \right).
\]

For the cross term, condition on \(\bxi\), and write
\[
    Y_j:=\frac{1}{n}\sum_{i=1}^n x_{i,j}\xi_i,
    \qquad j\in T.
\]
The variables \((Y_j)_{j\in T}\) are then independent and mean-zero, and
the sub-Gaussian linear-combination inequality gives
\[
    \|Y_j\|_{\psi_2}
    \le
    \frac{C R_Xb_j}{n}\|\bxi\|_2 .
\]
Conditionally applying Lemma~\ref{lem:anisotropic-euclidean-norm} with
\(t=\log(16/\delta)\) therefore gives, with conditional probability at least
\(1-\delta/8\),
\[
    \left\|\frac1nX_T^\top\bxi\right\|_2
    \le
    \frac{C R_X\|\bxi\|_2}{n}
    \left(
        \|\bb_T\|_2
        +b_T^+\sqrt{\log(8/\delta)}
    \right)
\]
after adjusting the absolute constant. The same lemma, with the same value of
\(t\), applied to \(\bxi\) gives
\[
    \|\bxi\|_2
    \le
    C R_\xi\sigma\left(\sqrt n+\sqrt{\log(8/\delta)}\right)
\]
with probability at least \(1-\delta/8\). 
Under the assumed sample-size condition, \(\log(8/\delta)\le n\), and the last two displays, together with
the exact identity
\[
    \|\bb_T\|_2=b_T^+\sqrt{r_T},
\]
give
\[
    \left\|\frac1nX_T^\top\bxi\right\|_2
    \le
    C R_XR_\xi\sigma b_T^+
    \sqrt{\frac{r_T+\log(8/\delta)}{n}}.
\]

The \(\ell_\infty\) noise bound follows from the standard sub-Gaussian tail
bound, e.g. \cite[Proposition 2.6.1]{Vershynin_2026}, and a union bound
over \(i\in[n]\). Thus, with probability at least \(1-\delta/8\),
\[
    \frac1{\sqrt n}\|\bxi\|_\infty
    \le
    C R_\xi\sigma\sqrt{\frac{\log(8n/\delta)}{n}}.
\]
Adding this to the row-norm bound proves the stated
\(C\Delta_\infty(T,\delta)\) bound.

For the noise energy, apply
\eqref{eq:anisotropic-squared-norm-concentration} with
\(t=\log(16/\delta)\) to \(\bxi\), using
\(\E\xi_i^2=\sigma^2\) and
\(\|\xi_i\|_{\psi_2}\le R_\xi\sigma\). With probability at least
\(1-\delta/8\), this gives
\[
    \left|
        \frac1n\|\bxi\|_2^2-\sigma^2
    \right|
    \le
    C R_\xi^2\sigma^2\Delta_2(\emptyset,\delta).
\]

Finally, set \(V_i:=\bx_{i,N}^\top\bw_N^\star\). These variables are
independent and mean-zero, with
\[
    \E V_i^2=\nu_N^2,
    \qquad
    \|V_i\|_{\psi_2}\le C R_X\nu_N.
\]
Another application of
\eqref{eq:anisotropic-squared-norm-concentration}, again with
\(t=\log(16/\delta)\), gives, with probability at least \(1-\delta/8\),
\[
    \left|
        \frac1n\|X_N\bw_N^\star\|_2^2-\nu_N^2
    \right|
    \le
    C R_X^2\Delta_2(\emptyset,\delta)\nu_N^2.
\]
Taking the upper bound and then the square root proves the last conclusion.
Taking a union bound over the above events completes the proof.
\end{proof}

\section{Proof of Theorem~\ref{thm:main} and Related Lemmas}
\label{sec:general-results}

\subsection{Approximate Isometry on Flat Vectors}
Let 
\[
    K(\blambda)
    :=\frac{\E\|X_N^\top\blambda\|_{*}}{\kappa_N}.
\]
As a reminder, the norm \(\|\cdot\|_*\) in the definition of $K(\blambda)$ is defined as the dual norm of the induced $N$-block norm, i.e. \(\|\bv\|_{*} = \sup_{\|(\zero,\bu)\|\le1}\bv^\top\bu\).
\begin{lemma}[Dual truncation]
\label{lem:truncate}
Suppose Assumptions~\ref{assump:subgaussian-input-distribution}
and~\ref{assump:K} hold. For \(\tau>0\), define the truncated vector
\[
    \blambda_\tau
    :=\sum_{j=1}^n
    \one_{\{|\lambda_j|\le\tau\}}\lambda_j\be_j.
\]
Then every \(\blambda\) with \(K(\blambda)\le1\) satisfies for every \(\bz\in\R^n\):
\[
    K(\blambda_\tau)\le K(\blambda),
    \qquad
    \bz^\top\blambda_\tau
    \ge
    \bz^\top\blambda-
    \frac{\|\bz\|_\infty}{A^2\tau}.
\]
\end{lemma}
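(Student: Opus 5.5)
Two claims need proof. The first is the monotonicity $K(\blambda_\tau)\le K(\blambda)$. The second is a bound on how many coordinates truncation removes, which comes from the regularity constant $A$ in Assumption~\ref{assump:K}.

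\textbf{Monotonicity.} The plan is a symmetrization argument that uses only the independence and zero mean of the rows of $X_N$ (Assumption~\ref{assump:subgaussian-input-distribution}). Write $X_N^\top\blambda=\sum_i\lambda_i\bx_{i,N}$. Split the index set into the kept indices $I:=\{i:|\lambda_i|\le\tau\}$ and the removed indices $I^c$. Then
\[
X_N^\top\blambda=X_N^\top\blambda_\tau+\sum_{i\in I^c}\lambda_i\bx_{i,N},
\]
and the two summands are independent. Conditioning on the rows in $I$ and applying Jensen's inequality to the convex map $\bv\mapsto\|\bv\|_*$ gives
\[
\E\Bigl[\|X_N^\top\blambda\|_*\Bigm|(\bx_{i,N})_{i\in I}\Bigr]\ge\Bigl\|X_N^\top\blambda_\tau+\E\textstyle\sum_{i\in I^c}\lambda_i\bx_{i,N}\Bigr\|_*=\|X_N^\top\blambda_\tau\|_*.
\]
The last equality holds because the rows are mean zero. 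Taking expectations and dividing by $\kappa_N$ yields $K(\blambda_\tau)\le K(\blambda)$. Finiteness of $K$ is part of Assumption~\ref{assump:K}, so all the expectations involved are well defined.

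\textbf{Mass bound.} For the inner-product claim I will use
\[
\bz^\top\blambda-\bz^\top\blambda_\tau=\sum_{i\in I^c}z_i\lambda_i\le\|\bz\|_\infty\sum_{i\in I^c}|\lambda_i|,
\]
so it suffices to show $\sum_{i\in I^c}|\lambda_i|\le 1/(A^2\tau)$. Each removed coordinate has $|\lambda_i|>\tau$, so
\[
\sum_{i\in I^c}|\lambda_i|\le\frac1\tau\sum_{i\in I^c}\lambda_i^2\le\frac{\|\blambda\|_2^2}{\tau}.
\]
It therefore remains to show $\|\blambda\|_2\le 1/A$. By positive homogeneity of $K$ (which follows from that of the norm) and the lower bound $K\ge A$ on the unit sphere in Assumption~\ref{assump:K}, we get $K(\blambda)\ge A\|\blambda\|_2$. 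Combined with $K(\blambda)\le1$ this gives $\|\blambda\|_2\le1/A$, hence $\|\blambda\|_2^2\le A^{-2}$. This yields exactly the constant $1/(A^2\tau)$ in the statement. The case $\blambda=\zero$ is trivial.

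\textbf{Main obstacle.} The one delicate step is the Jensen/conditioning argument for monotonicity. Truncation removes entire rows from the linear combination $\sum_i\lambda_i\bx_{i,N}$. The argument needs exactly two facts about those removed rows: they are independent of the kept rows, and they are centered. I do not expect any further difficulty; everything else is elementary bookkeeping with norms.
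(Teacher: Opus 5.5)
Your proposal is correct and follows the paper's proof. The paper uses the same Jensen argument based on independence and centering of the rows; it zeroes the removed coordinates one at a time, whereas you integrate out all removed rows at once. The mass bound is identical: $\sum_{j:|\lambda_j|>\tau}|\lambda_j|\le\|\blambda\|_2^2/\tau\le 1/(A^2\tau)$, using $K(\blambda)\ge A\|\blambda\|_2$.
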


\begin{proof}
If \(\blambda=0\), the result is trivial. 
Otherwise, the assumed lower bound and homogeneity imply
\[
    K(\blambda)
    =
    \|\blambda\|_2K\left(\frac{\blambda}{\|\blambda\|_2}\right)
    \ge
    A\|\blambda\|_2,
\]
so every feasible \(\blambda\) satisfies
\begin{equation}
\label{eq:feasible-lambda-norm}
    \|\blambda\|_2\le A^{-1}.
\end{equation}

Fix a coordinate \(i\), and let
\(\blambda'=\blambda-\lambda_i\be_i\). Since \(\bx_{i,N}\) is independent of the other rows and has mean zero, Jensen's inequality gives
\[
    \kappa_N K(\blambda)
    =\E\left\|\sum_{j=1}^n\lambda_j\bx_{j,N}\right\|_*
    \ge
    \E\left\|\sum_{j\ne i}\lambda_j\bx_{j,N}
    +\lambda_i\E[\bx_{i,N}]\right\|_*
    =\kappa_N K(\blambda').
\]
Sequentially applying this zeroing operation over all coordinates with
\(|\lambda_j|>\tau\) gives \(K(\blambda_\tau)\le K(\blambda)\). Let \(\Ical_\tau:=\{j:|\lambda_j|>\tau\}\). Since
\[
    \blambda-\blambda_\tau
    =
    \sum_{j\in\Ical_\tau}\lambda_j\be_j,
\]
equation~\eqref{eq:feasible-lambda-norm} gives
\[
    \sum_{j\in\Ical_\tau}|\lambda_j|
    \le
    \frac1{\tau}\sum_{j\in\Ical_\tau}\lambda_j^2
    \le
    \frac{\|\blambda\|_2^2}{\tau}
    \le
    \frac1{A^2\tau}.
\]
Therefore,
\[
    \bz^\top\blambda_\tau
    =
    \bz^\top\blambda
    -
    \sum_{j\in\Ical_\tau}z_j\lambda_j
    \ge
    \bz^\top\blambda
    -
    \|\bz\|_\infty
    \sum_{j\in\Ical_\tau}|\lambda_j|
    \ge
    \bz^\top\blambda-\frac{\|\bz\|_\infty}{A^2\tau}.
\]
\end{proof}

\begin{proposition}[Flat-vector dual approximation]
\label{prop:denseCLT}
Suppose Assumptions~\ref{assump:subgaussian-input-distribution}
and~\ref{assump:K} hold. Let \(\bz\in\R^n\) be nonzero, and
write
$
    \varepsilon_{\mathrm{geom}}
    :=
    \varphi\left(
        2\sqrt{\frac{\|\bz\|_\infty}{\|\bz\|_2}}
    \right).
$
If \(\varepsilon_{\mathrm{geom}}\le1/2\), then
\[
    (1-\varepsilon_{\mathrm{geom}})\|\bz\|_2
    \le
    \max_{\blambda:K(\blambda)\le1}\bz^\top\blambda
    \le
    \left(
        1+2\varepsilon_{\mathrm{geom}}
        +
        A^{-2}\sqrt{\frac{\|\bz\|_\infty}{\|\bz\|_2}}
    \right)\|\bz\|_2 .
\]
\end{proposition}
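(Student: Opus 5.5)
Both bounds follow by combining the homogeneity of $K$ (it is a norm-like, positively homogeneous, subadditive function of $\blambda$, being an expectation of a norm of a linear map) with Assumption~\ref{assump:K} and the truncation Lemma~\ref{lem:truncate}, using the threshold $\tau:=\sqrt{\|\bz\|_\infty/\|\bz\|_2}$. Throughout write $\theta:=\|\bz\|_\infty/\|\bz\|_2\le 1$, so $\tau=\sqrt\theta$.

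\emph{Lower bound.} The plan is to test the flat candidate $\bu:=\bz/\|\bz\|_2\in\Sphere^{n-1}$. It satisfies $\|\bu\|_\infty=\theta\le\sqrt\theta\le 2\sqrt\theta$. Since $\varphi$ is nondecreasing, Assumption~\ref{assump:K} gives $K(\bu)\le 1+\varphi(2\sqrt\theta)=1+\varepsilon_{\mathrm{geom}}$. Then $\blambda:=\bu/(1+\varepsilon_{\mathrm{geom}})$ is feasible by homogeneity. Its objective value is $\|\bz\|_2/(1+\varepsilon_{\mathrm{geom}})\ge(1-\varepsilon_{\mathrm{geom}})\|\bz\|_2$, which proves the left inequality.

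\emph{Upper bound.} Take any $\blambda$ with $K(\blambda)\le1$ and apply Lemma~\ref{lem:truncate} with $\tau=\sqrt\theta$. This gives $K(\blambda_\tau)\le1$ and
\[
\bz^\top\blambda\le\bz^\top\blambda_\tau+\frac{\|\bz\|_\infty}{A^2\sqrt\theta}=\bz^\top\blambda_\tau+A^{-2}\sqrt\theta\,\|\bz\|_2 .
\]
If $\blambda_\tau=\zero$ we are done. Otherwise set $\bv:=\blambda_\tau/\|\blambda_\tau\|_2$. The key step is to control the flatness of $\bv$. We have $\|\blambda_\tau\|_\infty\le\tau$, but $\|\blambda_\tau\|_2$ could be small, so $\|\bv\|_\infty\le\tau/\|\blambda_\tau\|_2$ need not be small. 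I expect this to be the main obstacle, and I would handle it by a case split on $\|\blambda_\tau\|_2$.

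\textbf{Case $\|\blambda_\tau\|_2\ge 1/2$.} Here $\|\bv\|_\infty\le 2\sqrt\theta$, so $K(\bv)\ge 1-\varepsilon_{\mathrm{geom}}\ge1/2$. Hence $\|\blambda_\tau\|_2\le 1/K(\bv)\le 1/(1-\varepsilon_{\mathrm{geom}})\le 1+2\varepsilon_{\mathrm{geom}}$, using $\varepsilon_{\mathrm{geom}}\le1/2$. Cauchy--Schwarz then gives $\bz^\top\blambda_\tau\le(1+2\varepsilon_{\mathrm{geom}})\|\bz\|_2$.

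\textbf{Case $\|\blambda_\tau\|_2<1/2$.} Cauchy--Schwarz directly gives $\bz^\top\blambda_\tau\le\|\bz\|_2/2$, which is already below the required bound.

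Combining the two cases with the truncation error $A^{-2}\sqrt\theta\,\|\bz\|_2$ yields the right inequality. The factor $2$ inside $\varphi$ is exactly what the threshold $1/2$ in the case split requires, which is why the constants match.
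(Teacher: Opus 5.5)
Your proof is correct and follows the paper's argument essentially step for step: the same test vector $\bz/\|\bz\|_2$ for the lower bound, the same truncation threshold $\tau=\sqrt{\|\bz\|_\infty/\|\bz\|_2}$ via Lemma~\ref{lem:truncate}, and the same case split at $\|\blambda_\tau\|_2=1/2$ to make $\blambda_\tau/\|\blambda_\tau\|_2$ flat. The only thing the paper adds is a one-line check that the feasible set $\{\blambda: K(\blambda)\le 1\}$ is compact (closed, and bounded because $K(\blambda)\ge A\|\blambda\|_2$), so the maximum in the statement is actually attained; since your bounds hold for every feasible point, they apply to the supremum in any case.
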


\begin{proof}
First, note that the feasible set  $\{\blambda : K(\blambda) \le 1\}$ is compact: it is closed, and
Assumption~\ref{assump:K} gives \(K(\blambda)\ge A\|\blambda\|_2\) implying it is bounded. Thus the
maximum is attained. Let
\[
    \bar{\bz}:=\frac{\bz}{\|\bz\|_2},
    \qquad
    \tau:=\sqrt{\|\bar{\bz}\|_\infty}
    =
    \sqrt{\frac{\|\bz\|_\infty}{\|\bz\|_2}}.
\]
Since \(\varphi(\cdot)\) is nondecreasing and
\(\|\bar{\bz}\|_\infty\le2\tau\), Assumption~\ref{assump:K} gives
\[
    K(\bar{\bz})
    \le
    1+\varphi(\|\bar{\bz}\|_\infty)
    \le
    1+\varepsilon_{\mathrm{geom}} .
\]
Hence \(\bar{\bz}/(1+\varepsilon_{\mathrm{geom}})\) is feasible, and therefore
\[
    \max_{\blambda:K(\blambda)\le1}\bz^\top\blambda
    \ge
    \frac{\|\bz\|_2}{1+\varepsilon_{\mathrm{geom}}}
    \ge
    (1-\varepsilon_{\mathrm{geom}})\|\bz\|_2.
\]

For the upper bound, let \(\blambda\) be any feasible point. By
Lemma~\ref{lem:truncate},
\[
    \bz^\top\blambda
    \le
    \bz^\top\blambda_\tau+\frac{\|\bz\|_\infty}{A^2\tau}
    \le
    \|\bz\|_2\|\blambda_\tau\|_2
    +
    \frac{\|\bz\|_\infty}{A^2\tau}.
\]
It remains to bound \(\|\blambda_\tau\|_2\). If
\(\|\blambda_\tau\|_2<1/2\), then
\[
    \|\blambda_\tau\|_2
    \le
    \frac1{1-\varepsilon_{\mathrm{geom}}}.
\]
Otherwise, \(\|\blambda_\tau\|_2\ge1/2\), and since every coordinate of
\(\blambda_\tau\) has magnitude at most \(\tau\),
\[
    \frac{\|\blambda_\tau\|_\infty}{\|\blambda_\tau\|_2}
    \le
    2\tau.
\]
Using \(K(\blambda_\tau)\le1\), homogeneity, and Assumption~\ref{assump:K},
\[
    1
    \ge
    K(\blambda_\tau)
    =
    \|\blambda_\tau\|_2
    K\left(\frac{\blambda_\tau}{\|\blambda_\tau\|_2}\right)
    \ge
    \|\blambda_\tau\|_2(1-\varepsilon_{\mathrm{geom}}).
\]
Thus, in all cases,
\[
    \|\blambda_\tau\|_2
    \le
    \frac1{1-\varepsilon_{\mathrm{geom}}}.
\]
Consequently,
\[
    \bz^\top\blambda
    \le
    \|\bz\|_2
    \left(
        \frac1{1-\varepsilon_{\mathrm{geom}}}
        +
        \frac{\|\bz\|_\infty}{A^2\tau\|\bz\|_2}
    \right).
\]
Since \(\tau=\sqrt{\|\bz\|_\infty/\|\bz\|_2}\) and
\((1-\varepsilon_{\mathrm{geom}})^{-1}
\le1+2\varepsilon_{\mathrm{geom}}\), this becomes
\[
    \bz^\top\blambda
    \le
    \left(
        1+2\varepsilon_{\mathrm{geom}}
        +
        A^{-2}\sqrt{\frac{\|\bz\|_\infty}{\|\bz\|_2}}
    \right)\|\bz\|_2 .
\]
Taking the maximum over feasible \(\blambda\) completes the proof.
\end{proof}

\begin{corollary}[Random flat-vector dual approximation]
\label{cor:denseCLT}
Suppose Assumptions~\ref{assump:subgaussian-input-distribution},
\ref{assump:K}, and~\ref{assump:xnl_concentrate} hold.
Fix \(\delta\in(0,1)\) such that
\(\varepsilon_{\mathrm{unif}}:=\varepsilon_{\mathrm{unif}}(\delta)<1/2\). Then, with probability at least
\(1-\delta\), the following holds for every nonzero \(\bz\in\R^n\). Define
$
    \varepsilon_{\mathrm{geom}}
    :=
    \varphi\left(
        2\sqrt{\frac{\|\bz\|_\infty}{\|\bz\|_2}}
    \right),
$
and suppose that \(\varepsilon_{\mathrm{geom}}\le1/2\). Then
\[
\begin{aligned}
    \frac{(1-\varepsilon_{\mathrm{unif}})(1-\varepsilon_{\mathrm{geom}})}{\kappa_N}\|\bz\|_2
    &\le
    \max_{\blambda:\|X_N^\top\blambda\|_{*}\le1}\bz^\top\blambda
    \\
    &\le
    \frac{(1+2\varepsilon_{\mathrm{unif}})
    \left(
        1+2\varepsilon_{\mathrm{geom}}
        +
        A^{-2}\sqrt{\frac{\|\bz\|_\infty}{\|\bz\|_2}}
    \right)}
    {\kappa_N}
    \|\bz\|_2 .
\end{aligned}
\]
\end{corollary}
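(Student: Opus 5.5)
On the event of Assumption~\ref{assump:xnl_concentrate} (probability at least $1-\delta$), we will sandwich the random feasible set $\{\blambda:\|X_N^\top\blambda\|_*\le1\}$ between two rescalings of the deterministic set $\{\blambda:K(\blambda)\le1\}$. Then we apply Proposition~\ref{prop:denseCLT} to each rescaled problem. The event does not depend on $\bz$, so the conclusion holds simultaneously for every nonzero $\bz$.

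First, fix the event. By homogeneity, for every $\blambda\ne\zero$ we have
\[
(1-\varepsilon_{\mathrm{unif}})\,\kappa_N K(\blambda)\le\|X_N^\top\blambda\|_*\le(1+\varepsilon_{\mathrm{unif}})\,\kappa_N K(\blambda).
\]
This holds because $\E\|X_N^\top\blambda\|_*=\kappa_N K(\blambda)$, and $K(\blambda)>0$ by the lower bound $K\ge A$ on the sphere. The case $\blambda=\zero$ is trivial. The two inequalities give the inclusions
\[
\left\{\blambda: K(\blambda)\le \tfrac{1}{(1+\varepsilon_{\mathrm{unif}})\kappa_N}\right\}\subseteq\{\blambda:\|X_N^\top\blambda\|_*\le1\}\subseteq\left\{\blambda: K(\blambda)\le \tfrac{1}{(1-\varepsilon_{\mathrm{unif}})\kappa_N}\right\}.
\]

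Second, rescale. Since $K$ is positively homogeneous, $\max_{K(\blambda)\le t}\bz^\top\blambda=t\max_{K(\blambda)\le1}\bz^\top\blambda$. The dual maximum is therefore at least $\frac{1}{(1+\varepsilon_{\mathrm{unif}})\kappa_N}M$ and at most $\frac{1}{(1-\varepsilon_{\mathrm{unif}})\kappa_N}M$, where $M:=\max_{K(\blambda)\le1}\bz^\top\blambda$. Proposition~\ref{prop:denseCLT} applies because $\varepsilon_{\mathrm{geom}}\le1/2$ by hypothesis. It gives $M\ge(1-\varepsilon_{\mathrm{geom}})\|\bz\|_2$ and $M\le(1+2\varepsilon_{\mathrm{geom}}+A^{-2}\sqrt{\|\bz\|_\infty/\|\bz\|_2})\|\bz\|_2$.

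Third, simplify the constants using $\varepsilon_{\mathrm{unif}}<1/2$. For the lower bound, $\frac{1}{1+\varepsilon_{\mathrm{unif}}}\ge1-\varepsilon_{\mathrm{unif}}$. For the upper bound, $\frac{1}{1-\varepsilon_{\mathrm{unif}}}\le1+2\varepsilon_{\mathrm{unif}}$, which holds since $(1-\varepsilon)(1+2\varepsilon)=1+\varepsilon-2\varepsilon^2\ge1$ for $\varepsilon\le1/2$. These give exactly the stated bounds.

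No step is a real obstacle; the only point needing care is the homogeneity argument in the rescaling. The pointwise sandwich between $\|X_N^\top\blambda\|_*$ and $\kappa_N K(\blambda)$ converts uniform control on the sphere into a comparison of sublevel sets. The maximum over the scaled set then scales linearly, because $\bz^\top\blambda$ is linear and $K$ is positively homogeneous.
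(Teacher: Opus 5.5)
Your proposal is correct and follows the paper's proof essentially step for step. On the uniform-concentration event you sandwich the random dual feasible set between two homogeneous rescalings of $\{\blambda: K(\blambda)\le 1\}$, scale the maximum linearly, apply Proposition~\ref{prop:denseCLT}, and simplify using $(1+\varepsilon_{\mathrm{unif}})^{-1}\ge 1-\varepsilon_{\mathrm{unif}}$ and $(1-\varepsilon_{\mathrm{unif}})^{-1}\le 1+2\varepsilon_{\mathrm{unif}}$.
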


\begin{proof}
By Assumption~\ref{assump:xnl_concentrate}, with probability at least
\(1-\delta\), the defining inequality in
Assumption~\ref{assump:xnl_concentrate} holds. For \(\blambda=0\) the next
inequality is trivial, while for \(\blambda\ne0\) it follows by applying
Assumption~\ref{assump:xnl_concentrate} to
\(\blambda/\|\blambda\|_2\). Thus, for every \(\blambda\in\R^n\),
\[
    (1-\varepsilon_{\mathrm{unif}})\kappa_NK(\blambda)
    \le
    \|X_N^\top\blambda\|_{*}
    \le
    (1+\varepsilon_{\mathrm{unif}})\kappa_NK(\blambda).
\]
Therefore,
\[
    \left\{\blambda:K(\blambda)\le \frac1{(1+\varepsilon_{\mathrm{unif}})\kappa_N}\right\}
    \subseteq
    \left\{\blambda:\|X_N^\top\blambda\|_{*}\le1\right\}
    \subseteq
    \left\{\blambda:K(\blambda)\le \frac1{(1-\varepsilon_{\mathrm{unif}})\kappa_N}\right\}.
\]
By homogeneity of \(K\),
\[
    \frac1{(1+\varepsilon_{\mathrm{unif}})\kappa_N}
    \max_{\blambda:K(\blambda)\le1}\bz^\top\blambda
    \le
    \max_{\blambda:\|X_N^\top\blambda\|_{*}\le1}\bz^\top\blambda
    \le
    \frac1{(1-\varepsilon_{\mathrm{unif}})\kappa_N}
    \max_{\blambda:K(\blambda)\le1}\bz^\top\blambda .
\]
Apply Proposition~\ref{prop:denseCLT}. For the lower bound, use
\((1+\varepsilon_{\mathrm{unif}})^{-1}\ge1-\varepsilon_{\mathrm{unif}}\), and for the upper bound use
\((1-\varepsilon_{\mathrm{unif}})^{-1}\le1+2\varepsilon_{\mathrm{unif}}\), valid since
\(\varepsilon_{\mathrm{unif}}<1/2\).
\end{proof}

\subsection{Uniform Approximation of the Dual Objective}
For
\(\bw_S\in\R^{S}\), define
\[
    \bz(\bw_S)
    :=
    X_S(\bw_S^\star-\bw_S)+\bxi ,
\]
and
\[
    \Wcal_S
    :=
    \left\{
        \bw_S\in\R^{S}:
        \|\bw_S-\bw_S^\star\|_2\le1
    \right\}.
\]
Recall that
\[
    \nu_N^2
    =
    \sum_{j\in N}b_j^2(w_j^\star)^2 .
\]
Thus \(\nu_N^2\) is the population risk contribution of the true
\(N\)-component.

On the event where the constraint is feasible, define
\[
    g(\bw_S)
    :=
    \min_{\bw_N:\,X_N\bw_N=\bz(\bw_S)+X_N\bw_N^\star}
    \|(\zero,\bw_N)\|.
\]

\begin{lemma}[Dual representation]
\label{lem:dual}
Fix a matrix \(X_N\) and $\br \in \R^n$, and suppose the constraint
\(X_N\bw_N=\br\) is feasible. Then
\[
    \min_{\bw_N:\,X_N\bw_N=\br}\|(\zero,\bw_N)\|
    =
    \max_{\blambda\in\R^n}
    \left\{
        \br^\top\blambda:
        \|X_N^\top\blambda\|_{*}\le 1
    \right\},
\]
where \(\|\cdot\|_{*}\) is the dual norm associated with
\(\bw_N\mapsto\|(\zero,\bw_N)\|\).
\end{lemma}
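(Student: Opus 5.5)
This is standard finite-dimensional convex duality for an equality-constrained norm minimization. I will prove it by establishing weak duality and then strong duality.

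Write \(f(\bw_N):=\|(\zero,\bw_N)\|\), a norm on \(\R^N\) whose dual norm is \(\|\cdot\|_*\) by definition.

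\textbf{Weak duality.} Let \(\bw_N\) be feasible, i.e. \(X_N\bw_N=\br\), and let \(\blambda\) satisfy \(\|X_N^\top\blambda\|_*\le1\). Then
\[
\br^\top\blambda=\bw_N^\top X_N^\top\blambda\le f(\bw_N)\,\|X_N^\top\blambda\|_*\le f(\bw_N).
\]
The middle inequality is the generalized Cauchy--Schwarz inequality \(\bv^\top\bu\le f(\bu)\|\bv\|_*\), which follows directly from the definition of \(\|\cdot\|_*\) by homogeneity. Taking the supremum over \(\blambda\) and the infimum over \(\bw_N\) gives max \(\le\) min.

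\textbf{Attainment.} The primal minimum is attained: the feasible set is a nonempty closed affine subspace, and \(f\) is continuous and coercive, so \(f\) attains its minimum on it. The dual feasible set is closed and convex, and it contains \(\zero\). It may be unbounded in directions of \(\ker X_N^\top\). However, \(\br\in\operatorname{range}(X_N)=(\ker X_N^\top)^\perp\), so the objective is constant along those directions. We may therefore restrict to \(\blambda\in\operatorname{range}(X_N)\). On this subspace, \(\blambda\mapsto\|X_N^\top\blambda\|_*\) is a norm, so the restricted feasible set is compact and the maximum is attained.

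\textbf{Strong duality.} There are two routes.

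\emph{Lagrangian route.} Since the constraints are affine and feasible, Slater's condition holds in its refined form for affine constraints, so there is no duality gap. The Lagrange dual function is
\[
\inf_{\bw_N}\bigl[f(\bw_N)-\blambda^\top X_N\bw_N\bigr]+\blambda^\top\br .
\]
This infimum equals \(\br^\top\blambda\) if \(\|X_N^\top\blambda\|_*\le1\), and \(-\infty\) otherwise. Indeed, if \(\|X_N^\top\blambda\|_*>1\), pick \(\bu\) with \(f(\bu)\le 1\) and \(\bu^\top X_N^\top\blambda>1\), then send \(t\bu\to\infty\).

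\emph{Separation route (self-contained).} Let \(m\) be the primal value. If \(m=0\), then \(\br=\zero\) and \(\blambda=\zero\) gives equality. Otherwise, consider the convex set \(X_N(\{f<m\})\). It does not contain \(\br\), because no point of \(f\)-norm below \(m\) is feasible. By the separating hyperplane theorem there is \(\blambda\ne\zero\) with \(\blambda^\top X_N\bu\le\blambda^\top\br\) for all \(\bu\) with \(f(\bu)<m\). This means \(m\|X_N^\top\blambda\|_*\le\br^\top\blambda\). Rescaling to \(\blambda/\|X_N^\top\blambda\|_*\) produces a dual-feasible point with value at least \(m\). The denominator is nonzero: if \(X_N^\top\blambda=\zero\), then \(\blambda\perp\operatorname{range}(X_N)\ni\br\), and the separation inequality would force \(\blambda^\top X_N\bu\le0\) for all \(\bu\) in a neighborhood of \(\zero\), which yields no contradiction on its own. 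In that case, project \(\blambda\) onto \(\operatorname{range}(X_N)\) and, if necessary, use a strict separation argument restricted to that subspace.

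The only delicate point is this degeneracy when \(X_N\) is not surjective. It is handled by working entirely within \(\operatorname{range}(X_N)\), where the image of the open \(f\)-ball is relatively open. I therefore expect to present the Lagrangian route and cite standard convex duality for affine constraints.
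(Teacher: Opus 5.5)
Your Lagrangian route is essentially the paper's proof: the same Lagrangian, the same computation that the dual function equals $\br^\top\blambda$ when $\|X_N^\top\blambda\|_*\le1$ and $-\infty$ otherwise, and strong duality with dual attainment obtained from Slater's condition for affine constraints. Your explicit weak-duality step and the dual-attainment argument (restricting $\blambda$ to $\operatorname{range}(X_N)$) are correct additions. The separation route is unnecessary, and its degenerate case is only sketched.
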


\begin{proof}
The Lagrangian is
\[
    L(\bw_N,\blambda)
    =
    \|(\zero,\bw_N)\|
    +\blambda^\top(\br-X_N\bw_N).
\]
Hence the dual function is
\[
\inf_{\bw_N}L(\bw_N,\blambda)
=
\br^\top\blambda
+
\inf_{\bw_N}
\left\{
    \|(\zero,\bw_N)\|
    -(X_N^\top\blambda)^\top\bw_N
\right\}.
\]
By the definition of the dual norm, the infimum in braces is \(0\)
when \(\|X_N^\top\blambda\|_*\le1\), attained at \(\bw_N=\zero\),
and is \(-\infty\) otherwise. Thus the Lagrange dual is
\[
    \max_{\blambda\in\R^n}
    \left\{
        \br^\top\blambda:
        \|X_N^\top\blambda\|_*\le1
    \right\}.
\]
Finally, the primal objective is finite and continuous everywhere, and
the affine constraint is feasible by assumption. Therefore, Slater's
condition holds, so strong duality holds and the dual optimum is attained.
\end{proof}

Fix an absolute constant \(C_{\mathrm{conc}}>0\) large enough to dominate the
constant in both the sample-size condition and every conclusion of
Lemma~\ref{lem:basic-concentration-consequences}. For
\(\delta\in(0,1)\), define
\begin{align}
\label{eq:g-bound-error-parameters}
    \eta_N(\delta)
    &:=
    \left(
        1
        +
        C_{\mathrm{conc}} R_X^2\Delta_2(\emptyset,\delta/3)
    \right)^{1/2}\nu_N, \notag\\
    L_S(\delta)
    &:=
    \left(
        \sigma^2
        -
        C_{\mathrm{conc}} R_\xi^2\sigma^2
        \Delta_2(\emptyset,\delta/3)
        -
        2C_{\mathrm{conc}} R_XR_\xi\sigma b_S^+
        \Delta_2(S,\delta/3)
    \right)^{1/2}, \notag\\
    U_S(\delta)
    &:=
    C_{\mathrm{conc}}\Delta_\infty(S,\delta/3), \notag\\
    \rho_{S,N}(\delta)
    &:=
    \frac{
        U_S(\delta)+\eta_N(\delta)
    }{
        L_S(\delta)-\eta_N(\delta)
    } .
\end{align}
These quantities are used only when the right-hand sides are well-defined and
\(L_S(\delta)>\eta_N(\delta)\).

\begin{lemma}[Uniform approximation of the \(N\)-block fitting norm]
\label{lem:g-bound}
Suppose Assumptions~\ref{assump:subgaussian-input-distribution},
\ref{assump:noise-variables}, \ref{assump:K}, and
\ref{assump:xnl_concentrate} hold. Fix \(\delta\in(0,1)\), and let
\(\varepsilon_{\mathrm{unif}}:=\varepsilon_{\mathrm{unif}}(\delta/3)\) be the quantity from
Assumption~\ref{assump:xnl_concentrate}. Let \(\varphi(\cdot)\) and \(A\) be
the quantities from Assumption~\ref{assump:K}, and define
\begin{equation}
\label{eq:epsilon-g-definition}
    \varepsilon_{\mathrm{fit}}(\delta)
    :=
    C
    \left(
        \rho_{S,N}(\delta)
        +
        \varepsilon_{\mathrm{unif}}
        +
        \varphi\left(2\sqrt{\rho_{S,N}(\delta)}\right)
        +
        \sqrt{\rho_{S,N}(\delta)}
    \right),
\end{equation}
where \(C=C(A)\ge1\) is a sufficiently large fixed constant, chosen in
particular so that the sample-size condition below implies the hypothesis of
Lemma~\ref{lem:basic-concentration-consequences} with failure probability
\(\delta/3\). Suppose that
\[
    n\ge C\left(r_S+\log(8/\delta)\right), 
    \qquad 
    L_S(\delta)>\eta_N(\delta),
    \qquad
    \varepsilon_{\mathrm{unif}}<\frac12,
    \qquad
    \varphi\left(2\sqrt{\rho_{S,N}(\delta)}\right)\le\frac12 .
\]
Then, with probability at least
\(1-\delta\), \(X_N\) has full row rank and, simultaneously for every
\(\bw_S\in\Wcal_S\),
\[
    (1-\varepsilon_{\mathrm{fit}}(\delta))\|\bz(\bw_S)\|_2
    \le
    \kappa_N g(\bw_S)
    \le
    (1+\varepsilon_{\mathrm{fit}}(\delta))\|\bz(\bw_S)\|_2 .
\]
\end{lemma}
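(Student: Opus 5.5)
We intersect three high-probability events, each of probability at least $1-\delta/3$. The first is the event of Lemma~\ref{lem:basic-concentration-consequences} with $T=S$ and failure probability $\delta/3$; the sample-size condition ensures its hypothesis holds. The second is the event of Assumption~\ref{assump:xnl_concentrate} at level $\delta/3$, on which Corollary~\ref{cor:denseCLT} applies simultaneously to every nonzero $\bz$. The third is the same event used for the bound on $\|X_N\bw_N^\star\|_2$ in Lemma~\ref{lem:basic-concentration-consequences}. On the uniform-concentration event, $\|X_N^\top\blambda\|_*\ge(1-\varepsilon_{\mathrm{unif}})\kappa_N A\|\blambda\|_2>0$ for $\blambda\neq 0$, so $X_N^\top$ is injective and $X_N$ has full row rank. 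Hence the constraint defining $g$ is feasible for every right-hand side, and Lemma~\ref{lem:dual} gives $g(\bw_S)=\max_{\|X_N^\top\blambda\|_*\le1}\br^\top\blambda$ with $\br:=\bz(\bw_S)+X_N\bw_N^\star$.

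Next we control the flatness of $\br$ uniformly over $\Wcal_S$. For the lower bound on the $\ell_2$ norm, expand $\frac1n\|\bz(\bw_S)\|_2^2$ as in \eqref{eq:mechanism-res-l2}. Drop the nonnegative term $\frac1n\|X_S(\bw_S^\star-\bw_S)\|_2^2$, and bound the cross term by $2\|\bw_S-\bw_S^\star\|_2\|\frac1nX_S^\top\bxi\|_2\le 2C_{\mathrm{conc}}R_XR_\xi\sigma b_S^+\Delta_2(S,\delta/3)$, using $\|\bw_S-\bw_S^\star\|_2\le1$. Together with the noise-energy bound this gives $\frac1{\sqrt n}\|\bz(\bw_S)\|_2\ge L_S(\delta)$. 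For the upper bound on the $\ell_\infty$ norm, Cauchy--Schwarz gives $|z_i|\le\|\bx_{i,S}\|_2+|\xi_i|$, so $\frac1{\sqrt n}\|\bz(\bw_S)\|_\infty\le U_S(\delta)$. The perturbation satisfies $\|X_N\bw_N^\star\|_\infty\le\|X_N\bw_N^\star\|_2\le\sqrt n\,\eta_N(\delta)$. By the triangle inequality,
\[
\frac{\|\br\|_\infty}{\|\br\|_2}\le\frac{U_S+\eta_N}{L_S-\eta_N}=\rho_{S,N}(\delta).
\]
This also uses $\|\br\|_2\ge\|\bz\|_2-\sqrt n\,\eta_N$ and $\|\br\|_2\le\|\bz\|_2+\sqrt n\,\eta_N$, which we keep for the last step.

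Now apply Corollary~\ref{cor:denseCLT} to $\br$. Since $\varphi$ is nondecreasing, $\varepsilon_{\mathrm{geom}}\le\varphi(2\sqrt{\rho_{S,N}})\le1/2$, and we obtain
\[
\bigl(1-\varepsilon_{\mathrm{unif}}-\varphi(2\sqrt{\rho_{S,N}})\bigr)\|\br\|_2\le\kappa_N g\le\bigl(1+O(\varepsilon_{\mathrm{unif}}+\varphi(2\sqrt{\rho_{S,N}})+A^{-2}\sqrt{\rho_{S,N}})\bigr)\|\br\|_2 ,
\]
where the products of small terms are absorbed using that each term is at most $1/2$.

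Finally, we convert from $\|\br\|_2$ to $\|\bz\|_2$. From $\|\bz\|_2\ge\sqrt n L_S$ we get $\sqrt n\,\eta_N\le(\eta_N/L_S)\|\bz\|_2\le\rho_{S,N}\|\bz\|_2$, so $\|\br\|_2=(1\pm\rho_{S,N})\|\bz\|_2$. Multiplying the factors and absorbing everything into $C(A)$ gives $\varepsilon_{\mathrm{fit}}$. One case needs care: when $\varepsilon_{\mathrm{fit}}\ge1$ the lower bound is trivial, which justifies the absorption. The main obstacle is keeping every bound uniform over $\bw_S\in\Wcal_S$. This works because the only random objects are the fixed-event quantities $X_S^\top\bxi$, the row norms and $\|\bxi\|$, and because Corollary~\ref{cor:denseCLT} already holds for all $\bz$ simultaneously, so no net argument is needed.
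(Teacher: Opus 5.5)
Your proposal follows the paper's proof essentially step for step: the same events, the same full-row-rank argument, the same flatness bound $\|\br\|_\infty/\|\br\|_2\le\rho_{S,N}(\delta)$, the same application of Corollary~\ref{cor:denseCLT} to $\br$, and the same comparison $\|\br\|_2=(1\pm\rho_{S,N})\|\bz\|_2$. There is, however, a small slip in the final absorption, and it is in the upper bound, not the lower one.

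You bound the corollary's extra term by $A^{-2}\sqrt{\rho_{S,N}}$ and then multiply by $1+\rho_{S,N}$. This produces a cross term of order $A^{-2}\rho_{S,N}^{3/2}$. Once $\rho_{S,N}$ is large, that term cannot be absorbed into $C(A)\bigl(\rho_{S,N}+\sqrt{\rho_{S,N}}+\varepsilon_{\mathrm{unif}}+\varphi(2\sqrt{\rho_{S,N}})\bigr)$. Nothing in the hypotheses forces $\rho_{S,N}(\delta)\le1$; they only require $L_S>\eta_N$ and $\varphi(2\sqrt{\rho_{S,N}})\le1/2$.

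The fix is what the paper does: use $\|\br\|_\infty\le\|\br\|_2$, so the extra term is at most $A^{-2}\min\{\sqrt{\rho_{S,N}},1\}$. Then for $\rho_{S,N}>1$ the whole upper factor is at most $2(2+A^{-2})(1+\rho_{S,N})$, which is at most $1+\varepsilon_{\mathrm{fit}}(\delta)$ once $C(A)$ is large.

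The lower bound, by contrast, needs no case split at all. Set $P:=(1-\varepsilon_{\mathrm{unif}})(1-\varepsilon_{\mathrm{geom}})\in(0,1]$. For every $\rho_{S,N}\ge0$ you have $P(1-\rho_{S,N})\ge P-\rho_{S,N}\ge1-\varepsilon_{\mathrm{unif}}-\varphi(2\sqrt{\rho_{S,N}})-\rho_{S,N}$.
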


\begin{proof}
First, by Assumption~\ref{assump:xnl_concentrate}, with probability at least
\(1-\delta/3\),
\begin{equation}
\label{eq:uniform-dual-concentration-event}
    (1-\varepsilon_{\mathrm{unif}})\kappa_NK(\blambda)
    \le
    \|X_N^\top\blambda\|_{*}
    \le
    (1+\varepsilon_{\mathrm{unif}})\kappa_NK(\blambda)
    \qquad
    \text{for all } \blambda\in\R^n .
\end{equation}
Second, by Lemma~\ref{lem:basic-concentration-consequences}, applied with
\(T=S\) and failure probability \(\delta/3\), we have
\begin{equation}
\label{eq:signal-noise-concentration-event}
    \frac1{\sqrt n}
    \left(
        \max_{i\in[n]}\|\bx_{i,S}\|_2
        +
        \|\bxi\|_\infty
    \right)
    \le
    C_{\mathrm{conc}}\Delta_\infty(S,\delta/3)
    =
    U_S(\delta),
\end{equation}
and
\begin{equation}
\label{eq:signal-residual-l2-event}
    \left\|
        \frac1n X_S^\top\bxi
    \right\|_2
    \le
    C_{\mathrm{conc}} R_XR_\xi\sigma b_S^+\Delta_2(S,\delta/3),
    \qquad
    \left|
        \frac1n\|\bxi\|_2^2-\sigma^2
    \right|
    \le
    C_{\mathrm{conc}} R_\xi^2\sigma^2
    \Delta_2(\emptyset,\delta/3).
\end{equation}
The same application of Lemma~\ref{lem:basic-concentration-consequences}
also gives
\begin{equation}
\label{eq:true-N-empirical-event}
    \frac1{\sqrt n}\|X_N\bw_N^\star\|_2
    \le
    \eta_N(\delta).
\end{equation}
By a union bound, the uniform dual-concentration event and the basic
concentration event hold simultaneously with probability at least
\(1-2\delta/3\), and hence at least \(1-\delta\).

On the event \eqref{eq:uniform-dual-concentration-event}, \(X_N\) has full row
rank. Indeed, by
\eqref{eq:uniform-dual-concentration-event} and Assumption~\ref{assump:K},
\[
    \|X_N^\top\blambda\|_{*}
    \ge
    (1-\varepsilon_{\mathrm{unif}})\kappa_NK(\blambda)
    \ge
    (1-\varepsilon_{\mathrm{unif}})\kappa_NA\|\blambda\|_2 .
\]
Since \(\varepsilon_{\mathrm{unif}}<1\), no nonzero vector can lie in the kernel of \(X_N^\top\).
Fix \(\bw_S\in\Wcal_S\), and write
\[
    \bu:=\bw_S^\star-\bw_S,
    \qquad
    \br(\bw_S):=\bz(\bw_S)+X_N\bw_N^\star .
\]
Since \(X_N\) has full row rank, Lemma~\ref{lem:dual} gives
\begin{equation}
\label{eq:g-dual-residual}
    g(\bw_S)
    =
    \max_{\blambda\in\R^n}
    \left\{
        \br(\bw_S)^\top\blambda:
        \|X_N^\top\blambda\|_{*}\le1
    \right\}.
\end{equation}
Using \eqref{eq:signal-noise-concentration-event}, Cauchy--Schwarz, and
\(\|\bu\|_2\le1\),
\begin{equation}
\label{eq:z-infinity-bound}
    \frac1{\sqrt n}\|\bz(\bw_S)\|_\infty
    \le
    \frac1{\sqrt n}
    \left(
        \max_{i\in[n]}\|\bx_{i,S}\|_2 \norm{\bu}_2
        +
        \|\bxi\|_\infty
    \right)
    \leq 
    U_S(\delta).
\end{equation}
Using \eqref{eq:signal-residual-l2-event}, and dropping the nonnegative
quadratic term in \(X_S^\top X_S\),
\begin{equation}
\label{eq:z-l2-lower-bound}
    \frac1{\sqrt n}\|\bz(\bw_S)\|_2
    \ge
    \left(
        \frac1n\|\bxi\|_2^2
        -2\|\bu\|_2
        \left\|\frac1nX_S^\top\bxi\right\|_2
    \right)^{1/2}
    \ge
    L_S(\delta).
\end{equation}
Together with \eqref{eq:true-N-empirical-event}, equations
\eqref{eq:z-infinity-bound} and~\eqref{eq:z-l2-lower-bound} imply
\begin{align}
\label{eq:r-flatness-bound}
    \frac{\|\br(\bw_S)\|_\infty}{\|\br(\bw_S)\|_2}
    \le
    \frac{
        \|\bz(\bw_S)\|_\infty+\|X_N\bw_N^\star\|_2
    }{
        \|\bz(\bw_S)\|_2-\|X_N\bw_N^\star\|_2
    }
    \le
    \frac{U_S(\delta)+\eta_N(\delta)}{
        L_S(\delta)-\eta_N(\delta)
    }
    =
    \rho_{S,N}(\delta).
\end{align}

We next compare the Euclidean norms of \(\br(\bw_S)\) and
\(\bz(\bw_S)\). Their relative difference satisfies
\begin{equation}
\label{eq:r-z-relative-perturbation}
    \frac{\|\br(\bw_S)-\bz(\bw_S)\|_2}{\|\bz(\bw_S)\|_2}
    =
    \frac{\|X_N\bw_N^\star\|_2}{\|\bz(\bw_S)\|_2}
    \le
    \frac{\eta_N(\delta)}{L_S(\delta)}
    \le
    \rho_{S,N}(\delta).
\end{equation}
The triangle and reverse triangle inequalities now
give
\begin{equation}
\label{eq:r-z-norm-comparison}
    (1-\rho_{S,N}(\delta))\|\bz(\bw_S)\|_2
    \le
    \|\br(\bw_S)\|_2
    \le
    (1+\rho_{S,N}(\delta))\|\bz(\bw_S)\|_2 .
\end{equation}

By monotonicity of \(\varphi\) and the assumption on
\(\varphi(2\sqrt{\rho_{S,N}(\delta)})\), the conclusion of
Corollary~\ref{cor:denseCLT} applies on
\eqref{eq:uniform-dual-concentration-event}, uniformly over
\(\bw_S\in\Wcal_S\). Combining that conclusion with
\eqref{eq:g-dual-residual} gives
\begin{align*}
    \kappa_Ng(\bw_S)
    &\ge
    (1-\varepsilon_{\mathrm{unif}})
    \left(
        1-\varphi(2\sqrt{\rho_{S,N}(\delta)})
    \right)
    \|\br(\bw_S)\|_2 \\
    &\ge
    (1-\varepsilon_{\mathrm{unif}})
    \left(
        1-\varphi(2\sqrt{\rho_{S,N}(\delta)})
    \right)
    (1-\rho_{S,N}(\delta))\|\bz(\bw_S)\|_2,
\end{align*}
and
\begin{align*}
    \kappa_Ng(\bw_S)
    &\le
    (1+2\varepsilon_{\mathrm{unif}})
    \left(
        1
        +2\varphi(2\sqrt{\rho_{S,N}(\delta)})
        +A^{-2}\min\left\{\sqrt{\rho_{S,N}(\delta)},1\right\}
    \right)
    \|\br(\bw_S)\|_2 \\
    &\le
    (1+2\varepsilon_{\mathrm{unif}})
    \left(
        1
        +2\varphi(2\sqrt{\rho_{S,N}(\delta)})
        +A^{-2}\min\left\{\sqrt{\rho_{S,N}(\delta)},1\right\}
    \right)\\
    &\quad{}\times
    (1+\rho_{S,N}(\delta))\|\bz(\bw_S)\|_2.
\end{align*}
Here the second inequality in each display uses
\eqref{eq:r-z-norm-comparison}. When \(\rho_{S,N}(\delta)\le1\), expanding
the products and using \(\varepsilon_{\mathrm{unif}}\le1/2\) and
\(\varphi(2\sqrt{\rho_{S,N}(\delta)})\le1/2\) gives the claimed bounds after
choosing the constant in \eqref{eq:epsilon-g-definition} sufficiently large
depending only on \(A\). When \(\rho_{S,N}(\delta)>1\), the lower bound is
trivial because \(\varepsilon_{\mathrm{fit}}(\delta)\ge1\), while the upper bound follows
by absorbing \(1+\rho_{S,N}(\delta)\) into
\eqref{eq:epsilon-g-definition}.
\end{proof}

\subsection{Proof of Theorem~\ref{thm:main}}

\begin{proof}
Throughout this proof, \(C\), \(C'\), and \(c\) denote constants that may depend on \(A\).
Set
\[
\begin{aligned}
    \delta_0&:=\frac\delta2,
    &\qquad
    \varepsilon_{\mathrm{fit}}&:=\varepsilon_{\mathrm{fit}}(\delta_0),
    \qquad 
    \Delta_S&:=\Delta_2(S,\delta_0),
    \qquad
    \Delta_0&:=\Delta_2(\emptyset,\delta_0).
\end{aligned}
\]
Since \(\delta_0=\delta/2\), \(r_S\ge1\), and the theorem's
sample-size condition holds, the definitions of \(\Delta_2\) give
\[
    \Delta_S+\Delta_0
    \le
    3\sqrt{\frac{r_S+\log(n/\delta)}{n}}.
\]
We next verify the hypotheses of Lemma~\ref{lem:g-bound}. Recall that
\(\eta_N\), \(L_S\), \(U_S\), and \(\rho_{S,N}\) are defined in
\eqref{eq:g-bound-error-parameters}, while \(\rho\) and
\(\varepsilon_{\mathrm{flat}}\) are defined in \eqref{eq:main-scales}.
The preceding comparison and \eqref{eq:main-regularity} give
\[
    L_S(\delta_0)>\eta_N(\delta_0),
    \qquad
    \rho_{S,N}(\delta_0)\le C\rho.
\]
Together with the definition of \(\varepsilon_{\mathrm{flat}}\), a
sufficiently large \(C'\), and a sufficiently small \(c\), this verifies all
the hypotheses of Lemma~\ref{lem:g-bound}. We also record various bounds that
will be used below. Using \eqref{eq:epsilon-g-definition},
\eqref{eq:main-regularity}, and \eqref{eq:main-signal-radius}, and adjusting
the constants in the theorem,
\begin{equation}
\label{eq:main-proof-small-error-bounds}
\begin{gathered}
    \varepsilon_{\mathrm{fit}}\le C\varepsilon_{\mathrm{flat}}\le\frac1{16},
    \qquad
    C_{\mathrm{conc}}R_\xi^2\sigma^2\Delta_0\le \frac14\sigma^2,\\
    C_{\mathrm{conc}}R_X^2(b_S^+)^2\Delta_S\le \frac12(b_S^-)^2,
    \qquad
    C_{\mathrm{conc}}R_XR_\xi\sigma b_S^+\Delta_S
    \le \frac18\min\left\{\sigma^2,(b_S^-)^2r_\delta\right\}.
\end{gathered}
\end{equation}
Next, we will prove the following inequality (after adjusting constants if necessary) which will be needed in the proof,
\begin{equation}
\label{eq:main-proof-radius-dominates}
\begin{aligned}
    \frac18(b_S^-)^2r_\delta^2
    \ge 
    128\sigma^2\varepsilon_{\mathrm{fit}}
        &+16\sigma\gamma_N
            \left(2\|(\bw_S^\star,\zero)\|+\beta_Sr_\delta\right)
            +16\gamma_N^2
            \left(2\|(\bw_S^\star,\zero)\|+\beta_Sr_\delta\right)^2.
\end{aligned}
\end{equation}
To see this, after using \(\varepsilon_{\mathrm{fit}}\le C\varepsilon_{\mathrm{flat}}\) and
\((x+y)^2\le2x^2+2y^2\), the only terms on the right-hand side not directly
present in \eqref{eq:main-signal-radius} are controlled, for any
\(\eta>0\), by
\[
    \sigma\gamma_N\beta_Sr_\delta
    \le
    \eta(b_S^-)^2r_\delta^2
    +\frac{\sigma^2}{4\eta}
        \left(\frac{\gamma_N\beta_S}{b_S^-}\right)^2,
    \qquad
    \gamma_N^2\beta_S^2r_\delta^2
    \le
    c^2(b_S^-)^2r_\delta^2.
\]
Choose \(\eta\) and the constant \(c\) in
\eqref{eq:main-regularity} sufficiently small that, after accounting for the
fixed numerical coefficients in \eqref{eq:main-proof-radius-dominates}, the
two multiples of \((b_S^-)^2r_\delta^2\) sum to at most
\((b_S^-)^2r_\delta^2/16\). The remaining terms are all present in
\eqref{eq:main-signal-radius}; increasing its fixed constant makes their sum
at most \((b_S^-)^2r_\delta^2/16\), proving
\eqref{eq:main-proof-radius-dominates}.

Apply Lemma~\ref{lem:g-bound} and
Lemma~\ref{lem:basic-concentration-consequences} with \(T=S\), each with
failure probability \(\delta_0\). By a union bound, there is an event
\(\Ecal\) of probability at least \(1-\delta\) on which \(X_N\) has full row
rank and
\begin{equation}
\label{eq:main-proof-g-bound}
    (1-\varepsilon_{\mathrm{fit}})\|\bz(\bw_S)\|_2
    \le
    \kappa_Ng(\bw_S)
    \le
    (1+\varepsilon_{\mathrm{fit}})\|\bz(\bw_S)\|_2
    \qquad
    \text{for every } \bw_S\in\Wcal_S,
\end{equation}
and
\begin{equation}
\label{eq:main-proof-signal-event}
\begin{aligned}
    \left\|
        \frac1nX_S^\top X_S-\Sigma_S
    \right\|_{\op}
    &\le
    C_{\mathrm{conc}}R_X^2(b_S^+)^2\Delta_S,
    \\
    \left\|
        \frac1nX_S^\top\bxi
    \right\|_2
    &\le
    C_{\mathrm{conc}}R_XR_\xi\sigma b_S^+\Delta_S,
    \\
    \left|
        \frac1n\|\bxi\|_2^2-\sigma^2
    \right|
    &\le
    C_{\mathrm{conc}}R_\xi^2\sigma^2\Delta_0 .
\end{aligned}
\end{equation}
Work on \(\Ecal\).
Define
\[
    F(\bw_S)
    :=
    \min_{\bw_N:\,
        X_N\bw_N=\bz(\bw_S)+X_N\bw_N^\star}
    \|(\bw_S,\bw_N)\|,
\]
where the defining constraint is feasible for every \(\bw_S\) because
\(X_N\) has full row rank. The \(S\)-blocks of the minimum-norm interpolators
are precisely the minimizers of \(F\), with the corresponding \(N\)-blocks
attaining the inner minimum in the definition of \(F\).
For every
\(\bw_S\in\Wcal_S\), the triangle inequality gives
\begin{equation}
\label{eq:objective-sandwich}
    g(\bw_S)-\|(\bw_S,\zero)\|
    \le
    F(\bw_S)
    \le
    g(\bw_S)+\|(\bw_S,\zero)\|.
\end{equation}
By \eqref{eq:main-proof-g-bound}, \eqref{eq:objective-sandwich}, and
\(\gamma_N=\kappa_N/\sqrt n\),
\begin{equation}
\label{eq:F-lower}
    \gamma_NF(\bw_S)
    \ge
    (1-\varepsilon_{\mathrm{fit}})\frac{\|\bz(\bw_S)\|_2}{\sqrt n}
    -
    \gamma_N\|(\bw_S^\star,\zero)\|
    -
    \gamma_N\beta_S\|\bw_S-\bw_S^\star\|_2 .
\end{equation}
Evaluating the same bounds at \(\bw_S=\bw_S^\star\), where
\(\bz(\bw_S^\star)=\bxi\), gives
\begin{equation}
\label{eq:F-upper-star}
    \gamma_NF(\bw_S^\star)
    \le
    (1+\varepsilon_{\mathrm{fit}})\frac{\|\bxi\|_2}{\sqrt n}
    +
    \gamma_N\|(\bw_S^\star,\zero)\|.
\end{equation}
Subtracting the upper bound \eqref{eq:F-upper-star} from the lower bound
\eqref{eq:F-lower} yields, for every \(\bw_S\in\Wcal_S\),
\begin{equation}
\label{eq:F_diff}
\begin{aligned}
    \gamma_N\left(F(\bw_S)-F(\bw_S^\star)\right)
    \ge{}&
    (1-\varepsilon_{\mathrm{fit}})\frac{\|\bz(\bw_S)\|_2}{\sqrt n}
    -(1+\varepsilon_{\mathrm{fit}})\frac{\|\bxi\|_2}{\sqrt n}\\
    &\quad -
    \gamma_N\left(
        2\|(\bw_S^\star,\zero)\|
        +\beta_S\|\bw_S-\bw_S^\star\|_2
    \right).
\end{aligned}
\end{equation}

Now fix \(\bw_S\) satisfying
\(\|\bw_S-\bw_S^\star\|_2=r_\delta\), and write
\(\bu:=\bw_S^\star-\bw_S\). Since \(r_\delta\le1\), this point belongs to
\(\Wcal_S\), so \eqref{eq:F_diff} applies. On \(\Ecal\),
\begin{align*}
    \frac1n\|\bz(\bw_S)\|_2^2
    &=
    \bu^\top\left(\frac1nX_S^\top X_S\right)\bu
    +\frac2n\bu^\top X_S^\top\bxi
    +\frac1n\|\bxi\|_2^2\\
    &\ge
    \frac1n\|\bxi\|_2^2
    +\left(
        (b_S^-)^2-C_{\mathrm{conc}}R_X^2(b_S^+)^2\Delta_S
    \right)r_\delta^2
    -2C_{\mathrm{conc}}R_XR_\xi\sigma b_S^+\Delta_Sr_\delta.
\end{align*}
The bounds in \eqref{eq:main-proof-small-error-bounds} give
\[
    (b_S^-)^2-C_{\mathrm{conc}}R_X^2(b_S^+)^2\Delta_S
    \ge\frac12(b_S^-)^2,
    \qquad
    2C_{\mathrm{conc}}R_XR_\xi\sigma b_S^+\Delta_Sr_\delta
    \le\frac14(b_S^-)^2r_\delta^2.
\]
Therefore,
\begin{equation}
\label{eq:z-lower-radius-eps}
    \frac1{\sqrt n}\|\bz(\bw_S)\|_2
    \ge
    \sqrt{
        \frac1n\|\bxi\|_2^2
        +\frac14(b_S^-)^2r_\delta^2
    }.
\end{equation}
Since \(1-\varepsilon_{\mathrm{fit}}>0\), substituting
\eqref{eq:z-lower-radius-eps} into \eqref{eq:F_diff} yields
\begin{align}
\label{eq:F_diff2}
    \gamma_N\left(F(\bw_S)-F(\bw_S^\star)\right)
    &\ge
    (1-\varepsilon_{\mathrm{fit}})\sqrt{
        \frac1n\|\bxi\|_2^2
        +\frac14(b_S^-)^2r_\delta^2
    } 
    -(1+\varepsilon_{\mathrm{fit}})\frac{\|\bxi\|_2}{\sqrt n} \nonumber\\
    &\quad -
    \gamma_N\left(
        2\|(\bw_S^\star,\zero)\|
        +\beta_Sr_\delta
    \right) \nonumber\\ 
    &\overset{(*)}{\ge}
    \frac{(b_S^-)^2r_\delta^2}{
        8\sqrt{
            \frac1n\|\bxi\|_2^2
            +\frac14(b_S^-)^2r_\delta^2
        }
    }
    -2\varepsilon_{\mathrm{fit}}
        \sqrt{
            \frac1n\|\bxi\|_2^2
            +\frac14(b_S^-)^2r_\delta^2
        } \nonumber\\
    &\quad -
    \gamma_N\left(
        2\|(\bw_S^\star,\zero)\|+\beta_Sr_\delta
    \right).
\end{align}
Here \((*)\) uses, for \(a,b\ge0\) that
$\sqrt{a+b}-\sqrt a \ge \frac{b}{2\sqrt{a+b}}$,
and $\sqrt{a+b}+\sqrt a\le2\sqrt{a+b}.$
Combining \eqref{eq:main-proof-signal-event} and
\eqref{eq:main-proof-small-error-bounds} yields
\[
    \frac{\|\bxi\|_2^2}{n}
    \le
    \sigma^2+C_{\mathrm{conc}}R_\xi^2\sigma^2\Delta_0
    \le\frac54\sigma^2,
\]
and it follows that
\[
    \sqrt{
        \frac1n\|\bxi\|_2^2
        +\frac14(b_S^-)^2r_\delta^2
    }
    \le
    2\sigma+\frac12b_S^-r_\delta.
\]
Using this upper bound in both square-root terms in
\eqref{eq:F_diff2} gives
\[
\begin{aligned}
    \gamma_N\left(F(\bw_S)-F(\bw_S^\star)\right)
    \ge{}&
    \frac{(b_S^-)^2r_\delta^2}{
        8\left(2\sigma+\frac12b_S^-r_\delta\right)
    }
    -2\varepsilon_{\mathrm{fit}}\left(2\sigma+\frac12b_S^-r_\delta\right)
    -
    \gamma_N\left(
        2\|(\bw_S^\star,\zero)\|+\beta_Sr_\delta
    \right).
\end{aligned}
\]
Therefore, to ensure that \(F(\bw_S)>F(\bw_S^\star)\), it suffices to show
that
\begin{equation}
\label{eq:main-proof-radius-dominates-2}
    (b_S^-)^2r_\delta^2
    >
    16\varepsilon_{\mathrm{fit}}\left(2\sigma+\frac12b_S^-r_\delta\right)^2
    +8\gamma_N\left(
        2\|(\bw_S^\star,\zero)\|+\beta_Sr_\delta
    \right)
        \left(2\sigma+\frac12b_S^-r_\delta\right).
\end{equation}
Since \((x+y)^2\le2x^2+2y^2\) and \(\varepsilon_{\mathrm{fit}}\le1/16\), the first term on the right-hand side is at most $128\sigma^2\varepsilon_{\mathrm{fit}} +\frac12(b_S^-)^2r_\delta^2$. 
Next, let \(x:=b_S^-r_\delta\) and
\(y:=\gamma_N(2\|(\bw_S^\star,\zero)\|+\beta_Sr_\delta)\) so that the second term on the right-hand side is $16\sigma y+4xy$. Young's inequality gives
\(4xy\le x^2/4+16y^2\). Hence, the second term is at most
\[
    16\sigma\gamma_N\left(
        2\|(\bw_S^\star,\zero)\|+\beta_Sr_\delta
    \right)
    +\frac14(b_S^-)^2r_\delta^2
    +16\gamma_N^2\left(
        2\|(\bw_S^\star,\zero)\|+\beta_Sr_\delta
    \right)^2.
\]
Consequently, to prove \eqref{eq:main-proof-radius-dominates-2}, it suffices to
show that
\begin{align*}
    \frac14(b_S^-)^2r_\delta^2
    >
    &128\sigma^2\varepsilon_{\mathrm{fit}}
    +16\sigma\gamma_N\left(
        2\|(\bw_S^\star,\zero)\|+\beta_Sr_\delta
    \right)+16\gamma_N^2\left(
        2\|(\bw_S^\star,\zero)\|+\beta_Sr_\delta
    \right)^2.
\end{align*}
By \eqref{eq:main-proof-radius-dominates}, the right-hand side is at most
\((b_S^-)^2r_\delta^2/8\). Thus
\eqref{eq:main-proof-radius-dominates-2} holds, proving
\begin{equation}
\label{eq:F-boundary-separation}
    F(\bw_S)>F(\bw_S^\star)
    \qquad
    \text{whenever }
    \|\bw_S-\bw_S^\star\|_2=r_\delta .
\end{equation}
The function \(F\) is convex, hence
\eqref{eq:F-boundary-separation} implies that
every minimizer satisfies
\[
    \|\widehat{\bw}_S-\bw_S^\star\|_2<r_\delta .
\]
Indeed, otherwise the line segment between \(\bw_S^\star\) and a minimizer
outside the radius-\(r_\delta\) ball would intersect the sphere
\(\|\bw_S-\bw_S^\star\|_2=r_\delta\) at a point with objective value at most
\(F(\bw_S^\star)\), a contradiction.

It remains to bound the risk. Since \(F(\widehat{\bw}_S)\le F(\bw_S^\star)\),
the triangle inequality and the signal bound just proved give
\[
    \|(0,\widehat{\bw}_N)\|
    \le
    F(\widehat{\bw}_S)+\|(\widehat{\bw}_S,\zero)\|
    \le
    F(\bw_S^\star)+\|(\bw_S^\star,\zero)\|+\beta_Sr_\delta.
\]
Applying \eqref{eq:F-upper-star} and the \(\|\bxi\|_2^2\) bound in
\eqref{eq:main-proof-signal-event} therefore gives
\begin{equation}
\label{eq:what-N-norm-bound}
    \|(0,\widehat{\bw}_N)\|
    \le
    \frac{1+\varepsilon_{\mathrm{fit}}}{\gamma_N}
        \sqrt{\sigma^2+C_{\mathrm{conc}}R_\xi^2\sigma^2\Delta_0}
    +
    2\|(\bw_S^\star,\zero)\|
    +
    \beta_Sr_\delta .
\end{equation}
Finally, Assumption~\ref{assump:subgaussian-input-distribution} gives
\[
    \Rcal(\widehat{\bw})
    =
    \sum_{j\in S}b_j^2(\widehat w_j-w_j^\star)^2
    +
    \sum_{j\in N}b_j^2(\widehat w_j-w_j^\star)^2 .
\]
The first term on the right-hand side is at most \((b_S^+)^2r_\delta^2\). For the second term,
\[
\begin{aligned}
    \sum_{j\in N}b_j^2(\widehat w_j-w_j^\star)^2
    &\le
    2\sum_{j\in N}b_j^2\widehat w_j^2
    +
    2\sum_{j\in N}b_j^2(w_j^\star)^2 \le
    2\beta_{N,\Sigma}^2
    \|(0,\widehat\bw_N)\|^2
    +
    2\nu_N^2.
\end{aligned}
\]
Equations~\eqref{eq:main-proof-small-error-bounds}
and~\eqref{eq:what-N-norm-bound} imply
\[
    \|(0,\widehat{\bw}_N)\|
    \le
    C
    \left(
        \frac{\sigma}{\gamma_N}
        +
        \|(\bw_S^\star,\zero)\|
        +
        \beta_Sr_\delta
    \right).
\]
Substituting this bound proves the desired risk bound.
\end{proof}

\subsection{Proof of Corollary~\ref{cor:general-asymptotic}}

\begin{proof}
Since \(\varepsilon_{\mathrm{flat}}\ge\sqrt\rho\) and
\(\rho\ge\nu_N/\sigma\), the first limit in the corollary statement implies
\(\rho\to0\) and \(\nu_N\to0\). The definition of \(\rho\) and the uniform
bounds in the statement also imply
\[
    \frac{r_S+\log(n/\delta_n)}{n}\longrightarrow0.
\]
Moreover,
\(\|(\bw_S^\star,\zero)\|\le\beta_S\|\bw_S^\star\|_2\).
Thus, the assumptions of the corollary and the definitions in
\eqref{eq:main-scales}--\eqref{eq:main-signal-radius} imply that the regularity
condition in \eqref{eq:main-regularity} holds for all sufficiently large \(n\)
and that \(r_{\delta_n}\to0\).

It remains to apply the risk bound in Theorem~\ref{thm:main}. Its first and
last terms tend to zero, while
\[
\begin{aligned}
    &\beta_{N,\Sigma}
    \left(
        \frac{\sigma}{\gamma_N}
        +\|(\bw_S^\star,\zero)\|
        +\beta_Sr_{\delta_n}
    \right)
    =
    \frac{\beta_{N,\Sigma}}{\gamma_N}
    \left(
        \sigma
        +\gamma_N\|(\bw_S^\star,\zero)\|
        +\gamma_N\beta_Sr_{\delta_n}
    \right)
    \longrightarrow0.
\end{aligned}
\]
Applying Theorem~\ref{thm:main} with \(\delta=\delta_n\) proves the claim.
\end{proof}

\section{Proof of Theorem~\ref{thm:lp-clean} and Related Results} \label{app:lp_satisfies_assumps}
\subsection{Setting}
\label{setting:lp-product}
Fix \(p\in(1,\infty]\), and let \(q\in[1,\infty)\) be its dual exponent,
defined by \(1/p+1/q=1\). 

\begin{definition}\label{def:sub-weibull}
For \(\alpha>0\), define the \(\psi_\alpha\) Orlicz functional of a real-valued
random variable \(Y\) by
\[
    \|Y\|_{\psi_\alpha}
    :=
    \inf\left\{s>0:
    \E\exp\left(\left|\frac{Y}{s}\right|^\alpha\right)\le 2
    \right\}.
\]
When \(\alpha\ge1\) this is an Orlicz norm; for \(\alpha<1\) we use it as the
standard \(\psi_\alpha\) quasi-norm.
\end{definition}

We locally index the columns of \(X_N\) by
\([|N|]\). Throughout this section, \(s_N>0\), and the
noise-fitting coordinates have the product form
\[
    x_{i,k}=s_N\zeta_{i,k},
    \qquad i\in[n],\ k\in[|N|],
\]
where the variables \(\{\zeta_{i,k}\}_{i,k}\) are mutually independent and,
for every \(i\in[n]\) and \(k\in[|N|]\),
\[
    \E\zeta_{i,k}=0,
    \qquad
    \E\zeta_{i,k}^2=1,
    \qquad
    \|\zeta_{i,k}\|_{\psi_2}\le R_X .
\]
It is well known that any sub-Gaussian random variable satisfies $\norm{\zeta_{i, k}}_{\psi_2} \geq \sqrt{\E \zeta_{i, k}^2} = 1$, so $R_X \geq 1$.
For \(\blambda\in\R^n\) and \(k\in[|N|]\), write
\[
    b_{\blambda,k}
    :=
    \sum_{i=1}^n \lambda_i\zeta_{i,k}.
\]
For fixed \(\blambda\), the variables
\(\{b_{\blambda,k}\}_{k=1}^{|N|}\) are independent. Moreover,
\[
    X_N^\top\blambda
    =
    s_N(b_{\blambda,1},\ldots,b_{\blambda,|N|}).
\]

\begin{lemma}[Centering a \(\psi_\alpha\) random variable]
\label{lem:psi-alpha-centering}
There is an absolute constant \(C_{\mathrm{ctr}}>0\) such that, for every
\(\alpha\in(0,2]\) and every real-valued random variable \(Y\) with
\(\|Y\|_{\psi_\alpha}<\infty\),
\begin{equation}
\label{eq:psi-alpha-centering}
    \|Y-\E Y\|_{\psi_\alpha}
    \le
    \left(\frac{C_{\mathrm{ctr}}}{\alpha}\right)^{1/\alpha}
    \|Y\|_{\psi_\alpha}.
\end{equation}
\end{lemma}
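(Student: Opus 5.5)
We prove \eqref{eq:psi-alpha-centering} by bounding \(\|\E Y\|_{\psi_\alpha}\) and then combining it with \(\|Y\|_{\psi_\alpha}\) through a quasi-triangle inequality. By homogeneity we may assume \(\|Y\|_{\psi_\alpha}=1\), so \(\E\exp(|Y|^\alpha)\le2\).

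\textbf{Step 1: bound the mean.} Since \(t\mapsto \exp(t^\alpha)\) is nondecreasing on \([0,\infty)\), Markov's inequality gives the tail bound \(\Pr(|Y|>t)\le 2e^{-t^\alpha}\). Integrating this bound gives
\[
\E|Y|\le\int_0^\infty \min\{1,2e^{-t^\alpha}\}\,dt\le (\log 2)^{1/\alpha}+2\int_0^\infty e^{-t^\alpha}\,dt=(\log 2)^{1/\alpha}+2\,\Gamma(1+1/\alpha).
\]
Using \(\Gamma(1+1/\alpha)\le (1/\alpha)^{1/\alpha}\) for \(\alpha\le 1\), and \(\Gamma(1+1/\alpha)\le 1\) for \(\alpha\in[1,2]\), we get \(|\E Y|\le (C/\alpha)^{1/\alpha}\). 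A constant \(c\) has \(\|c\|_{\psi_\alpha}=|c|/(\log 2)^{1/\alpha}\), so \(\|\E Y\|_{\psi_\alpha}\le (C'/\alpha)^{1/\alpha}\).

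\textbf{Step 2: quasi-triangle inequality.} For \(a,b\ge0\) we have \((a+b)^\alpha\le 2^{\max\{\alpha-1,0\}}(a^\alpha+b^\alpha)\le 2(a^\alpha+b^\alpha)\), valid for all \(\alpha\le2\) (subadditivity when \(\alpha\le1\), convexity when \(\alpha\ge1\)). Set \(s=2^{1/\alpha}\cdot 2\max\{\|Y\|_{\psi_\alpha},\|\E Y\|_{\psi_\alpha}\}\). Then
\[
\exp\Bigl(\Bigl|\tfrac{Y-\E Y}{s}\Bigr|^\alpha\Bigr)\le \exp\Bigl(\tfrac{|Y|^\alpha}{2\|Y\|^\alpha_{\psi_\alpha}}\Bigr)\exp\Bigl(\tfrac{|\E Y|^\alpha}{2\|\E Y\|^\alpha_{\psi_\alpha}}\Bigr).
\]
Apply Cauchy--Schwarz, \(\E[UV]\le\sqrt{\E U^2\,\E V^2}\), where each factor's second moment is at most \(2\) by the definition of the \(\psi_\alpha\) functional. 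This gives an expectation of at most \(2\), hence \(\|Y-\E Y\|_{\psi_\alpha}\le s\).

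\textbf{Step 3: collect constants.} Combining the two steps, \(\|Y-\E Y\|_{\psi_\alpha}\le 2^{1+1/\alpha}(C'/\alpha)^{1/\alpha}\). Since \(2\le 4^{1/\alpha}\) for \(\alpha\le2\), this is at most \((C_{\mathrm{ctr}}/\alpha)^{1/\alpha}\) with an absolute \(C_{\mathrm{ctr}}\).

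The main obstacle is the bookkeeping needed to keep the \(\alpha\)-dependence in the form \((C/\alpha)^{1/\alpha}\), specifically the Gamma-function bound \(\Gamma(1+1/\alpha)\le(1/\alpha)^{1/\alpha}\) and absorbing the factors of \(2\) into a single power \(1/\alpha\).
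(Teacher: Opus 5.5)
Your proof is correct and takes essentially the same route as the paper. Both bound $|\E Y|$ by tail integration together with $\Gamma(1+1/\alpha)\le (C/\alpha)^{1/\alpha}$, and then control $|Y-\E Y|^\alpha$ via $(a+b)^\alpha\le 2(a^\alpha+b^\alpha)$ followed by a square-root (Jensen/Cauchy--Schwarz) step. The only difference is packaging: you phrase the second part as a quasi-triangle inequality for $\|\cdot\|_{\psi_\alpha}$ applied to $Y$ and the constant $\E Y$, whereas the paper absorbs $|\E Y|^\alpha$ directly as an additive constant in the exponent.

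One bookkeeping point in Step~2: with your choice $s=2^{1+1/\alpha}\max\{\cdot\}$, the displayed inequality follows from the sharp factor $2^{\max\{\alpha-1,0\}}$. The cruder factor $2$ would only give it for $\alpha\ge1$; otherwise take $s=4^{1/\alpha}\max\{\cdot\}$. Either fix leaves the final form $(C_{\mathrm{ctr}}/\alpha)^{1/\alpha}$ unchanged.
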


\begin{proof}
Fix \(K>\|Y\|_{\psi_\alpha}\). Markov's inequality and Def.~\ref{def:sub-weibull} give for any $t>0$,
\[
\begin{aligned}
    \Pr(|Y|\ge Kt)
    &=
    \Pr\left(
        \exp\left(\left|\frac{Y}{K}\right|^\alpha\right)
        \ge e^{t^\alpha}
    \right)
    \le
    e^{-t^\alpha}
    \E\exp\left(\left|\frac{Y}{K}\right|^\alpha\right)
    \le
    2e^{-t^\alpha},
\end{aligned}
\]
and hence, by tail integration,
\[
    |\E Y|
    \le
    \E|Y|
    \le
    2K\int_0^\infty e^{-t^\alpha}\,dt
    =
    2\Gamma(1+1/\alpha)K,
\]
where \(\Gamma\) is the Euler gamma function. By Stirling's inequality,
\(\Gamma(1+1/\alpha)\le(C/\alpha)^{1/\alpha}\) for
\(0<\alpha\le2\). As such, there is an
absolute constant \(C_1>0\) such that
\[
    |\E Y|^\alpha
    \le
    \frac{C_1}{\alpha}K^\alpha,
    \qquad \text{where}~~~ 0<\alpha\le2.
\]
Also, \(\alpha\le2\) implies
\(|x-y|^\alpha\le2(|x|^\alpha+|y|^\alpha)\). Therefore, for
\(D_\alpha:=(C_{\mathrm{ctr}}/\alpha)^{1/\alpha}\), where
\(C_{\mathrm{ctr}}\) will be chosen below,
\[
    \left|\frac{Y-\E Y}{D_\alpha K}\right|^\alpha
    \le
    \frac{2\alpha}{C_{\mathrm{ctr}}}
    \left|\frac{Y}{K}\right|^\alpha
    +
    \frac{2C_1}{C_{\mathrm{ctr}}}.
\]
Choose the absolute constant \(C_{\mathrm{ctr}}\) large enough that
\[
    \frac{2\alpha}{C_{\mathrm{ctr}}}\le\frac12
    \quad\text{for all }\alpha\le2,
    \qquad
    \frac{2C_1}{C_{\mathrm{ctr}}}\le\frac{\log 2}{2}.
\]
Exponentiating the preceding bound and applying Jensen's inequality now gives
\begin{align*}
    \E\exp\left(
        \left|\frac{Y-\E Y}{D_\alpha K}\right|^\alpha
    \right)
    &\le
    e^{(\log 2)/2}
    \left[
        \E\exp\left(\left|\frac{Y}{K}\right|^\alpha\right)
    \right]^{1/2}
    \le2.
\end{align*}
Letting \(K\downarrow\|Y\|_{\psi_\alpha}\) proves
\eqref{eq:psi-alpha-centering}.
\end{proof}

\subsection{Moments and Expected Dual-Norm Geometry}

\begin{lemma}[Moments of sub-Gaussian linear forms]
\label{lem:subgaussian-linear-moments}
Under Setting~\ref{setting:lp-product}, there exist absolute constants
\(c,C>0\) such that, for every \(r\ge1\), every \(k\in[|N|]\), and every
\(\blambda\in\Sphere^{n-1}\),
\[
    \frac1{\sqrt2}
    \left(\frac{c}{R_X^4}\right)^{1/r}
    \le
    \left(\E |b_{\blambda,k}|^r\right)^{1/r}
    \le
    C R_X\sqrt r.
\]
\end{lemma}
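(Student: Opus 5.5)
The upper bound comes from the sub-Gaussian linear-combination inequality, and the lower bound from a Paley--Zygmund style comparison of moments.

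\emph{Upper bound.} Since the $\zeta_{i,k}$ are independent, mean-zero, with $\|\zeta_{i,k}\|_{\psi_2}\le R_X$, \cite[Proposition 2.7.1]{Vershynin_2026} gives
\[
\|b_{\blambda,k}\|_{\psi_2}\le C R_X\|\blambda\|_2=CR_X
\]
for $\blambda\in\Sphere^{n-1}$. The standard moment characterization of the $\psi_2$ norm then yields $(\E|b_{\blambda,k}|^r)^{1/r}\le C R_X\sqrt r$ for all $r\ge1$.

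\emph{Lower bound.} Independence and unit variances give $\E b_{\blambda,k}^2=\sum_i\lambda_i^2=1$. The upper bound with $r=4$ gives $\E b_{\blambda,k}^4\le C R_X^4$. Paley--Zygmund applied to $b_{\blambda,k}^2$ with threshold $1/2$ then gives
\[
\Pr\bigl(b_{\blambda,k}^2\ge 1/2\bigr)\ge \frac{(1-1/2)^2(\E b_{\blambda,k}^2)^2}{\E b_{\blambda,k}^4}\ge \frac{c}{R_X^4}.
\]
Hence for every $r\ge1$,
\[
\E|b_{\blambda,k}|^r\ge 2^{-r/2}\Pr\bigl(|b_{\blambda,k}|\ge 2^{-1/2}\bigr)\ge 2^{-r/2}\,\frac{c}{R_X^4}.
\]
Taking $r$-th roots gives exactly $\frac1{\sqrt2}(c/R_X^4)^{1/r}$, which is the claimed lower bound.

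\emph{Main obstacle.} The only delicate point is keeping the constant explicit in the fourth moment bound, so that the resulting factor is $R_X^{-4}$ uniformly in $r$, $k$ and $\blambda$. This follows directly from the upper bound applied at $r=4$, with no dependence on $n$. Since $R_X\ge1$, one may also shrink $c$ so that $c/R_X^4\le 1$, which keeps the bound consistent.
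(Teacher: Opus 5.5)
Your proposal is correct and follows the paper's proof essentially step for step. You bound $\|b_{\blambda,k}\|_{\psi_2}$ via the sub-Gaussian sum inequality and convert it to moments for the upper bound. For the lower bound you apply Paley--Zygmund to $b_{\blambda,k}^2$ at threshold $1/2$, controlling the fourth moment by the $r=4$ case of the upper bound, and then use $\E Z\ge t\Pr(Z\ge t)$ with $Z=|b_{\blambda,k}|^r$.
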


\begin{proof}
Choose an absolute constant \(C_0\ge1\) large enough that the standard
sub-Gaussian sum and moment bounds
\citep[Propositions 2.7.1 and 2.6.1]{Vershynin_2026} give
\(\|b_{\blambda,k}\|_{\psi_2}\le C_0R_X\) and
\[
    \left(\E |b_{\blambda,k}|^r\right)^{1/r}
    \le
    C_0 R_X\sqrt r.
\]
For the lower bound, \(\E b_{\blambda,k}^2=1\), and the upper bound with
\(r=4\) gives \(\E b_{\blambda,k}^4\le (2C_0R_X)^4\). Applying
Paley--Zygmund to \(b_{\blambda,k}^2\) yields
\[
    \Pr\left(|b_{\blambda,k}|\ge \frac1{\sqrt2}\right)
    \ge
    \frac{1}{4(2C_0R_X)^4}.
\]
Set \(c_0:=1/[4(2C_0)^4]\). The elementary inequality
\(\E Z\ge t\Pr(Z\ge t)\), applied to
\(Z=|b_{\blambda,k}|^r\), gives
\[
    \E |b_{\blambda,k}|^r
    \ge
    2^{-r/2}
    \frac{c_0}{R_X^4}.
\]
Taking the \(r\)-th root proves the result with \(c=c_0\) and \(C=C_0\).
\end{proof}

\begin{lemma}[Expected \(\ell_q\) norm]
\label{lem:exnl_subgaussian}
Under Setting~\ref{setting:lp-product}, there exist absolute constants
\(c,C>0\) such that, for every \(\blambda\in\Sphere^{n-1}\),
\[
    \frac{c}{R_X^4}s_N |N|^{1/q}
    \le
    \E\|X_N^\top\blambda\|_q
    \le
    C R_X\sqrt q\,s_N |N|^{1/q}.
\]
\end{lemma}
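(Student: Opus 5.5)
We write $\|X_N^\top\blambda\|_q = s_N\bigl(\sum_{k=1}^{|N|}|b_{\blambda,k}|^q\bigr)^{1/q}$ and reduce both bounds to the per-coordinate moment bounds of Lemma~\ref{lem:subgaussian-linear-moments} with $r=q$. The case $p=\infty$ gives $q=1$, and then everything is linear, so it is immediate. For general $q\ge1$ the two directions need different handling, because the map $t\mapsto t^{1/q}$ is concave.

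\textbf{Upper bound.} By Jensen's inequality applied to the concave function $t\mapsto t^{1/q}$,
\[
\E\|X_N^\top\blambda\|_q \le s_N\Bigl(\sum_{k}\E|b_{\blambda,k}|^q\Bigr)^{1/q}.
\]
Lemma~\ref{lem:subgaussian-linear-moments} with $r=q$ gives $\E|b_{\blambda,k}|^q\le (CR_X\sqrt q)^q$. Summing over the $|N|$ coordinates and taking the $q$-th root yields $CR_X\sqrt q\,s_N|N|^{1/q}$.

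\textbf{Lower bound.} Jensen now points the wrong way, so we avoid $q$-th moments. Since $\|\cdot\|_q\ge |N|^{1/q-1}\|\cdot\|_1$ by Hölder's inequality,
\[
\E\|X_N^\top\blambda\|_q\ge s_N|N|^{1/q-1}\sum_k\E|b_{\blambda,k}|.
\]
Lemma~\ref{lem:subgaussian-linear-moments} with $r=1$ gives $\E|b_{\blambda,k}|\ge \frac{1}{\sqrt2}\,\frac{c}{R_X^4}$. Summing over $k$ produces $\frac{c}{\sqrt2 R_X^4}\,s_N|N|\cdot|N|^{1/q-1}=\frac{c'}{R_X^4}s_N|N|^{1/q}$, which is the claimed lower bound.

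This argument matches the stated dependence on $R_X$ exactly and avoids any $q$-dependence in the lower bound. The only thing to get right is choosing the $\ell_1$ comparison rather than Paley--Zygmund on the full norm. Using $r=1$ rather than $r=q$ in the lemma is essential: with $r=q$, the lower bound from Lemma~\ref{lem:subgaussian-linear-moments} has the form $(c/R_X^4)^{1/q}$, and it could not be transferred through the concave root anyway.
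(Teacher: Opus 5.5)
Your proposal is correct and follows the paper's proof essentially verbatim. The upper bound uses Jensen on the concave $q$-th root together with the $r=q$ moment bound from Lemma~\ref{lem:subgaussian-linear-moments}. The lower bound uses the comparison $\|\bv\|_q\ge |N|^{1/q-1}\|\bv\|_1$ together with the $r=1$ lower moment bound from the same lemma, which is what gives the constant $c/(\sqrt2 R_X^4)$.
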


\begin{proof}
Take \(C\) to be the upper constant in
Lemma~\ref{lem:subgaussian-linear-moments}, and take \(c\) to be
\(1/\sqrt2\) times that lemma's lower constant.
By Jensen's inequality, using the concavity of \(u\mapsto u^{1/q}\),
\[
    \E\|X_N^\top\blambda\|_q
    =
    s_N
    \E\left(\sum_{k=1}^{|N|}|b_{\blambda,k}|^q\right)^{1/q}
    \le
    s_N
    \left(\sum_{k=1}^{|N|}\E |b_{\blambda,k}|^q\right)^{1/q}
    \le
    C R_X\sqrt q\,s_N |N|^{1/q}.
\]
For the lower bound, use
\(\|\bv\|_q\ge |N|^{1/q-1}\|\bv\|_1\) for
\(\bv\in\R^{N}\). Then
\[
    \E\|X_N^\top\blambda\|_q
    \ge
    s_N |N|^{1/q-1}
    \sum_{k=1}^{|N|}\E |b_{\blambda,k}|
    \ge
    \frac{c}{R_X^4}s_N |N|^{1/q}.
\]
\end{proof}

\subsection{Concentration of the Empirical Dual Norm}

\begin{lemma}[Sub-Weibull mean concentration]
\label{lem:subweibull-mean-conc}
Let \(m\ge1\), and let \(Y_1,\ldots,Y_m\) be independent mean-zero random
variables with \(\|Y_i\|_{\psi_\alpha}\le B\) for some \(\alpha,B>0\). Then,
for every \(u>0\),
\[
    \Pr\left(
        \left|\frac1m\sum_{i=1}^m Y_i\right|
        \ge u
    \right)
    \le
    2\exp\left(
        -c_\alpha
        \min\left\{
            \frac{m u^2}{B^2},
            \left(
                \frac{m^{\min(1/\alpha,1)}u}{B}
            \right)^\alpha
        \right\}
    \right),
\]
where \(c_\alpha>0\) depends only on \(\alpha\) and is non-decreasing for
\(\alpha\in(0,2]\).
\end{lemma}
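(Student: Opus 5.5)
The plan is to treat the two regimes $\alpha\ge1$ and $\alpha<1$ separately, since $\psi_\alpha$ is a genuine Orlicz norm only for $\alpha\ge1$. In both regimes we first reduce to a one-sided tail bound for $S_m:=\sum_{i=1}^m Y_i$; the two-sided statement follows by applying it to $-Y_i$ and paying a factor $2$. After rescaling we may also assume $B=1$. In the target bound, the Gaussian term $mu^2$ is the CLT-type regime, controlled by the variance $\sum\E Y_i^2\lesssim m$. The second term, $(m^{\min(1/\alpha,1)}u)^\alpha$, is the heavy-tail regime. For $\alpha\ge1$ it reads $(m^{1/\alpha}u)^\alpha=mu^\alpha$, which corresponds to a $\psi_\alpha$-norm of $S_m/m$ of order $m^{1/\alpha-1}$ times a constant. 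For $\alpha<1$ it reads $(mu)^\alpha$, which is the tail of a single summand of size $mu$, i.e.\ the bound is driven by one big jump.

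For $\alpha\in[1,2]$, I would invoke the standard Bernstein-type inequality for sums of independent $\psi_\alpha$ variables. One route is the $\psi_1$ Bernstein inequality of \cite[Theorem 2.9.1]{Vershynin_2026}, which gives $\exp(-c\min\{mu^2,mu\})$. This is at least as strong as what we need whenever $u\le1$, because $u^\alpha\le u$ there. For $u>1$ the bound $mu$ is weaker than $mu^\alpha$, so we need something sharper. There I would bound the moment generating function directly: for mean-zero $Y$ with $\|Y\|_{\psi_\alpha}\le1$, one has $\E e^{\lambda Y}\le\exp(C\lambda^2)$ for $|\lambda|\le1$ and $\E e^{\lambda Y}\le\exp(C|\lambda|^{\alpha/(\alpha-1)})$ for $|\lambda|\ge1$, the latter via Young's inequality $\lambda y\le|y|^\alpha/2+C|\lambda|^{\alpha^*}$. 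Then Chernoff with the optimized $\lambda$ yields $\exp(-c\min\{mu^2,mu^\alpha\})$, with constants that can be taken uniform over $\alpha\in[1,2]$.

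For $\alpha\in(0,1)$, I would use a truncation argument. Split $Y_i=Y_i\one_{|Y_i|\le M}+Y_i\one_{|Y_i|>M}$ with $M\asymp mu$, and recenter both pieces; Lemma~\ref{lem:psi-alpha-centering} controls the recentering. The bounded part is handled by Bernstein's inequality with variance $\lesssim m$ and range $M$, giving $\exp(-c\min\{mu^2,mu/M\})$. The tail part is nonzero only if some $|Y_i|>M$, which has probability at most $2me^{-M^\alpha}$, and its mean is negligible. The \textbf{main obstacle} is making the factor $m$ and the truncation scale mesh so that the final exponent is exactly $\min\{mu^2,(mu)^\alpha\}$ with a constant depending only on $\alpha$. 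If the direct split fails, I would instead cite the known moment bound for sums of sub-Weibull variables, $\|S_m\|_{L^r}\lesssim_\alpha \sqrt{rm}+r^{1/\alpha}$ (Lata{\l}a/Gluskin--Kwapie\'n), and convert it to a tail bound via Markov's inequality with the optimal choice of $r$; this gives precisely the two regimes.

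The monotonicity of $c_\alpha$ in $\alpha$ follows from the fact that $\|Y\|_{\psi_{\alpha'}}\lesssim\|Y\|_{\psi_\alpha}$ for $\alpha'\le\alpha$, up to a constant depending only on $\alpha'$. Hence a valid constant for a smaller $\alpha$ also works for larger $\alpha$, and we can take $c_\alpha:=\inf_{\alpha'\le\alpha}$ of the constants obtained above, which is non-decreasing by construction. This step requires checking that the exponents compare correctly, since $(m^{\min(1/\alpha,1)}u/B)^\alpha$ itself changes with $\alpha$; I expect a case check on whether $u/B$ is at most or above $1$ to suffice.
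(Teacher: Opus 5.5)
Your route to the tail inequality works for $\alpha\in[1,2]$: the Young-inequality MGF bound plus Chernoff does give $\exp(-c\min\{mu^2,mu^\alpha\})$ with constants uniform in $\alpha$. The truncation you sketch for $\alpha<1$ does not close as written, however. Bernstein with range $M\asymp mu$ contributes only $mu/M\asymp1$ to the exponent. Taking $M\asymp(mu)^{1-\alpha}$ to make that term $(mu)^\alpha$ instead leaves $\Pr(\max_i|Y_i|>M)$ of order $m\,e^{-(mu)^{\alpha(1-\alpha)}}$, which is too large. Your fallback (the moment bound $\|S_m\|_{L^r}\lesssim_\alpha\sqrt{rm}+r^{1/\alpha}$ plus Markov) does work. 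It is essentially the paper's proof, which cites \cite[Theorem 3.1]{kuchibhotla2022moving} for all $\alpha$ at once, in the form $|S_m/m|\le C_\alpha B(\sqrt{t/m}+t^{1/\alpha}/m^{\min(1/\alpha,1)})$ with probability at least $1-2e^{-t}$. It then takes $t$ equal to $c_\alpha$ times the minimum, with $C_\alpha(c_\alpha^{1/2}+c_\alpha^{1/\alpha})\le1$.

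The genuine gap is the monotonicity of $c_\alpha$. This is part of the statement, and Lemma~\ref{lem:fixed-vector-lq-concentration} uses it to get $c_{2/q}\ge c_{2/q_0}$. Both steps of your argument fail.

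First, the comparison $\|Y\|_{\psi_{\alpha'}}\lesssim\|Y\|_{\psi_\alpha}$ does not let you transfer a constant from $\alpha'<\alpha$ to $\alpha$. For $x:=u/B>1$ the target exponent $(m^{\min(1/\alpha,1)}x)^\alpha$ increases with $\alpha$. For instance, with $1\le\alpha'<\alpha\le2$ the $\alpha'$-bound yields an exponent of order $mx^{\alpha'}$, whereas you need order $mx^{\alpha}$. So the $\alpha'$-inequality does not imply the $\alpha$-inequality with any constant; the case check you anticipated fails exactly when $u>B$.

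Second, $\inf_{\alpha'\le\alpha}c_{\alpha'}$ is non-increasing in $\alpha$, because the index set grows with $\alpha$. It is in fact identically $0$, since admissible constants must vanish as $\alpha'\to0$. The largest variance of a mean-zero variable with unit $\psi_{\alpha'}$-norm blows up as $\alpha'\to0$, and the CLT regime forces $c_{\alpha'}$ below a multiple of its reciprocal.

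The missing observation is that no comparison across different $\alpha$ is needed. Lowering $c_\alpha$ only weakens the inequality, so any smaller positive constant remains admissible. It suffices to know that for each $\alpha_0\in(0,2]$ the admissible constants can be chosen bounded below uniformly over $[\alpha_0,2]$. The paper reads this off the explicit formula for $C_\alpha$ in the cited theorem; in your route you would have to track the $\alpha$-dependence of both your MGF computation and the cited moment bound. Then set $\tilde c_\alpha:=\inf_{\alpha'\in[\alpha,2]}c_{\alpha'}$, which is positive, at most $c_\alpha$, and non-decreasing in $\alpha$.
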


\begin{proof}
By \cite[Theorem 3.1]{kuchibhotla2022moving}, there is a constant
\(C_\alpha>0\), depending only on \(\alpha\), such that, for every
\(t\ge0\),
\[
    \Pr\left(
        \left|\frac1m\sum_{i=1}^m Y_i\right|
        \ge
        C_\alpha B
        \left(
            \sqrt{\frac{t}{m}}
            +
            \frac{t^{1/\alpha}}{m^{\min(1/\alpha,1)}}
        \right)
    \right)
    \le
    2e^{-t}.
\]
Apply this bound with
\[
    t
    =
    c_\alpha
    \min\left\{
        \frac{m u^2}{B^2},
        \left(
            \frac{m^{\min(1/\alpha,1)}u}{B}
        \right)^\alpha
    \right\},
\]
where \(c_\alpha>0\) is chosen small enough that
\(C_\alpha(c_\alpha^{1/2}+c_\alpha^{1/\alpha})\le1\). Since
\[
    B\sqrt{\frac{t}{m}}
    \le
    \sqrt{c_\alpha}u,
    \qquad
    B\frac{t^{1/\alpha}}{m^{\min(1/\alpha,1)}}
    \le
    c_\alpha^{1/\alpha}u,
\]
the threshold in the cited bound is at most \(u\), proving the stated tail
inequality.

Finally, the explicit formula for \(C_\alpha\) in the cited theorem is bounded
above on every interval \([\alpha_0,2]\) by a constant depending only on
\(\alpha_0\). Hence the admissible choices of \(c_\alpha\) can be made
uniformly positive on every such interval. For \(\alpha\in(0,2]\), replace
each choice by the infimum of the chosen constants over \([\alpha,2]\). The
resulting constants remain positive and admissible, and are non-decreasing in
\(\alpha\), as claimed.
\end{proof}

\begin{lemma}[Bias of the empirical \(q\)-root]
\label{lem:empirical-root-bias}
Let \(Y_1,\ldots,Y_m\) be independent nonnegative random variables with
\(\E Y_i^2<\infty\). Then, for every \(q\ge1\),
\[
    \left|
        \E\left(\frac1m\sum_{i=1}^m Y_i\right)^{1/q}
        -
        \left(\frac1m\sum_{i=1}^m\E Y_i\right)^{1/q}
    \right|
    \le
    \left(\frac1{m^2}\sum_{i=1}^m\Var(Y_i)\right)^{1/(2q)}.
\]
\end{lemma}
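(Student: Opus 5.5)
Write $S_m:=\frac1m\sum_{i=1}^m Y_i$ and $\mu:=\E S_m=\frac1m\sum_{i=1}^m\E Y_i$. The plan is to reduce the claim to a pointwise inequality for the map $u\mapsto u^{1/q}$ on $[0,\infty)$, and then average it using Jensen's inequality twice.

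The key elementary fact is that for $q\ge1$ this map is $1/q$-Hölder with constant one:
\[
    |a^{1/q}-b^{1/q}|\le|a-b|^{1/q},
    \qquad a,b\ge0 .
\]
To check it, assume without loss of generality that $a\ge b$. Subadditivity of the concave function $u\mapsto u^{1/q}$, which vanishes at $0$, gives $a^{1/q}=((a-b)+b)^{1/q}\le(a-b)^{1/q}+b^{1/q}$. Since $S_m\ge0$ and $\mu\ge0$ by nonnegativity of the $Y_i$, this yields
\[
    \left|\E S_m^{1/q}-\mu^{1/q}\right|
    \le\E\left|S_m^{1/q}-\mu^{1/q}\right|
    \le\E|S_m-\mu|^{1/q}.
\]

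Next, note that $u\mapsto u^{1/(2q)}$ is concave on $[0,\infty)$ because $1/(2q)\le 1/2\le 1$. Applying Jensen's inequality to the nonnegative variable $(S_m-\mu)^2$ gives
\[
    \E|S_m-\mu|^{1/q}=\E\bigl((S_m-\mu)^2\bigr)^{1/(2q)}\le\bigl(\E(S_m-\mu)^2\bigr)^{1/(2q)}.
\]
By independence of the $Y_i$,
\[
    \E(S_m-\mu)^2=\Var(S_m)=\frac1{m^2}\sum_{i=1}^m\Var(Y_i),
\]
which is exactly the quantity in the statement. The assumption $\E Y_i^2<\infty$ guarantees that all these expectations are finite, including $\E S_m^{1/q}\le 1+\E S_m$.

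There is no real obstacle here. The only step needing care is the Hölder inequality, specifically handling the case ordering and using subadditivity of concave functions that vanish at zero.
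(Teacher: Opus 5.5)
Your proposal is correct and follows the paper's proof essentially step for step. Both use the $1/q$-Hölder bound $|u^{1/q}-v^{1/q}|\le|u-v|^{1/q}$, then Jensen's inequality to pass to the second moment, then independence to identify $\E(S_m-\mu)^2$ with $\frac{1}{m^2}\sum_{i}\Var(Y_i)$. You additionally justify the Hölder bound via subadditivity and check finiteness, which the paper leaves implicit.
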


\begin{proof}
For \(u,v\ge0\), the map \(u\mapsto u^{1/q}\) is \(1/q\)-Hölder, so
\[
    |u^{1/q}-v^{1/q}|\le |u-v|^{1/q}.
\]
Using this and Jensen's inequality,
\[
\begin{aligned}
    \left|
        \E\left(\frac1m\sum_{i=1}^m Y_i\right)^{1/q}
        -
        \left(\frac1m\sum_{i=1}^m\E Y_i\right)^{1/q}
    \right|
    &\le
    \E\left|\frac1m\sum_{i=1}^m(Y_i-\E Y_i)\right|^{1/q} \\
    &\le
    \left[
        \E\left|\frac1m\sum_{i=1}^m(Y_i-\E Y_i)\right|^2
    \right]^{1/(2q)} \\
    &=
    \left(\frac1{m^2}\sum_{i=1}^m\Var(Y_i)\right)^{1/(2q)}.
\end{aligned}
\]
The equality uses the independence and centering of the variables
\(Y_i-\E Y_i\).
\end{proof}

\begin{lemma}[Fixed-vector concentration]
\label{lem:fixed-vector-lq-concentration}
Under Setting~\ref{setting:lp-product}, there is an absolute constant \(C>0\)
and, for every \(q\ge1\), a constant \(c(q)>0\) that may depend on \(q\) such that for every
\(\blambda\in\Sphere^{n-1}\) and every \(\tau\in(0,1)\),
\[
\begin{aligned}
    &\Pr\left(
        \left|
            \frac{\|X_N^\top\blambda\|_q}
            {\E\|X_N^\top\blambda\|_q}
            -1
        \right|
        >
        C R_X^5\sqrt q
        \left(\tau+|N|^{-1/(2q)}\right)
    \right)\\
    &\quad\le
    2\exp\left(
        -\frac{c(q)}{R_X^{2q+8}}
        \min\left\{
            |N|\tau^2,
            (|N|\tau)^{2/q}
        \right\}
    \right).
\end{aligned}
\]
Moreover, for every \(q_0<\infty\), the constant \(c(q)\) is bounded below by
a positive constant depending only on \(q_0\), uniformly over
\(q\in[1,q_0]\).
\end{lemma}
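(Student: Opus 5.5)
We reduce the statement to a concentration bound for an average of i.i.d.\ sub-Weibull variables, followed by the map $u\mapsto u^{1/q}$. Write $\|X_N^\top\blambda\|_q = s_N|N|^{1/q}\,\bigl(\frac1{|N|}\sum_{k}Y_k\bigr)^{1/q}$, where $Y_k:=|b_{\blambda,k}|^q$; these variables are independent across $k$ for fixed $\blambda$. Set $\mu:=\frac1{|N|}\sum_k\E Y_k$ and $M:=\frac1{|N|}\sum_k Y_k$. By Lemma~\ref{lem:subgaussian-linear-moments}, $\mu\ge 2^{-q/2}c/R_X^4$ and $\mu\le (CR_X\sqrt q)^q$. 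Since $\|b_{\blambda,k}\|_{\psi_2}\le C R_X$, we have $\|Y_k\|_{\psi_{2/q}}=\|b_{\blambda,k}\|_{\psi_2}^q\le (CR_X)^q$. By Lemma~\ref{lem:psi-alpha-centering} with $\alpha=2/q\le 2$, the centered variables satisfy $\|Y_k-\E Y_k\|_{\psi_{2/q}}\le B:=(C'q)^{q/2}(CR_X)^q$.

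The first step is the deviation of $M$. Apply Lemma~\ref{lem:subweibull-mean-conc} with $m=|N|$, $\alpha=2/q$, and threshold $u=\tau\mu$. Since $\min(1/\alpha,1)=1$ when $q\ge 2$ (and equals $q/2$ otherwise), the exponent is $\min\{|N|u^2/B^2,(|N|^{\min(q/2,1)}u/B)^{2/q}\}$. The $q$-dependent factors in $B$ and in the lower bound on $\mu$ are absorbed into $c(q)$. The $R_X$ powers combine as $(\mu/B)^{2}\gtrsim R_X^{-2q-8}$ up to $q$-constants, and the same holds for the power-$2/q$ term. This matches the stated exponent $\frac{c(q)}{R_X^{2q+8}}\min\{|N|\tau^2,(|N|\tau)^{2/q}\}$; for $q<2$ the second term dominates the first anyway, so only the first matters there. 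Uniformity of $c(q)$ over $q\in[1,q_0]$ follows from the monotonicity of $c_\alpha$ in Lemma~\ref{lem:subweibull-mean-conc}, since $\alpha=2/q\ge 2/q_0$.

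On the event $|M-\mu|\le\tau\mu$, we have $|M^{1/q}-\mu^{1/q}|\le \mu^{1/q}\tau$, using $|(1+x)^{1/q}-1|\le|x|$ for $|x|\le1$. The second step is to replace $\mu^{1/q}$ by the true mean $\E M^{1/q}$. Lemma~\ref{lem:empirical-root-bias} gives $|\E M^{1/q}-\mu^{1/q}|\le(\Var(Y_1)\text{-average}/|N|)^{1/(2q)}\le (CR_X\sqrt{2q})^{q\cdot 1/q}|N|^{-1/(2q)}$, since $\E Y_k^2=\E|b|^{2q}\le (CR_X\sqrt{2q})^{2q}$. Normalizing by $\E M^{1/q}\ge \mu^{1/q}-(\text{bias})$ requires a lower bound on $\E M^{1/q}$. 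For this we use Lemma~\ref{lem:exnl_subgaussian} directly, which gives $\E\|X_N^\top\blambda\|_q\ge (c/R_X^4)s_N|N|^{1/q}$, i.e.\ $\E M^{1/q}\ge c/R_X^4$.

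Combining the two steps,
\[
\Bigl|\tfrac{M^{1/q}}{\E M^{1/q}}-1\Bigr|\le \frac{\mu^{1/q}\tau+CR_X\sqrt q|N|^{-1/(2q)}}{c R_X^{-4}}\le C R_X^5\sqrt q\,(\tau+|N|^{-1/(2q)}),
\]
using $\mu^{1/q}\le CR_X\sqrt q$. The main obstacle is the constant bookkeeping in the first step: the $\psi_{2/q}$ norm carries $q^{q/2}$ factors, and the relative threshold $\tau\mu$ introduces $\mu^{-1}$, which is exponential in $q$. Every $q$-dependence must be placed in $c(q)$, while the $R_X$ exponent $2q+8$ is kept explicit. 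I would check the case $q<2$, where the tail bound is sub-Gaussian-like, separately from $q\ge2$.
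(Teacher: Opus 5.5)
Your proposal is correct and follows essentially the same route as the paper. It uses the $\psi_{2/q}$ bound with centering and the sub-Weibull mean inequality at threshold $\tau\mu$, split into the cases $q\ge 2$ and $q<2$. It then passes to the $q$-th root, controls the gap between $\mu^{1/q}$ and $\E M^{1/q}$ with the bias lemma, and normalizes via the lower bound of Lemma~\ref{lem:exnl_subgaussian}. The only point you leave implicit is that the $q$-dependent factors absorbed into $c(q)$ (from $2^{-q/2}$ and the centering constant) are bounded below uniformly on $[1,q_0]$, which is immediate by continuity.
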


\begin{proof}
Fix
\(\blambda\in\Sphere^{n-1}\), and write
\[
    Y_k:=|b_{\blambda,k}|^q,
    \qquad
    \bar\mu:=\frac1{|N|}\sum_{k=1}^{|N|}\E Y_k.
\]
By Lemma~\ref{lem:subgaussian-linear-moments}, there are absolute constants
\(c,C>0\) such that
\begin{equation}
\label{eq:fixed-mu-bounds}
    2^{-q/2}\frac{c}{R_X^4}
    \le \E Y_k
    \le (CR_X\sqrt q)^q
    \quad \text{for every }k\in[|N|],
    \qquad
    2^{-q/2}\frac{c}{R_X^4}
    \le \bar\mu
    \le (CR_X\sqrt q)^q .
\end{equation}
The standard sub-Gaussian sum bound gives
\(\|b_{\blambda,k}\|_{\psi_2}\le CR_X\), and hence
\[
    \|Y_k\|_{\psi_{2/q}}
    =
    \|b_{\blambda,k}\|_{\psi_2}^q
    \le
    (CR_X)^q.
\]
Applying Lemma~\ref{lem:psi-alpha-centering} with \(\alpha=2/q\) yields
\[
    \|Y_k-\E Y_k\|_{\psi_{2/q}}
    \le (CR_X\sqrt q)^q,
\]
after adjusting the absolute constant \(C\).

Set \(B:=(CR_X\sqrt q)^q\). The bounds in
\eqref{eq:fixed-mu-bounds} show that there is a constant \(a(q)>0\), depending
only on \(q\), such that
\[
    \frac{\bar\mu}{B}
    \ge
    \frac{a(q)}{R_X^{q+4}}.
\]
We may assume that \(a(q)\le1\). Moreover, for every \(q_0<\infty\), it can
be chosen uniformly bounded below over \(q\in[1,q_0]\).
Applying
Lemma~\ref{lem:subweibull-mean-conc} with \(\alpha=2/q\) and
\(u=\tau\bar\mu\) gives
\[
\Pr\left(
    \left|\frac1{|N|}\sum_{k=1}^{|N|}Y_k-\bar\mu\right|
    >\tau\bar\mu
\right)
\le
2\exp\left(
    -c_{2/q}
    \min\left\{
        \frac{|N|\tau^2\bar\mu^2}{B^2},
        \left(
            \frac{|N|^{\min(q/2,1)}\tau\bar\mu}{B}
        \right)^{2/q}
    \right\}
\right).
\]
If \(q\ge2\), the second term in the minimum is
\((|N|\tau\bar\mu/B)^{2/q}\). If \(1\le q\le2\), it is
\(|N|\tau^{2/q}(\bar\mu/B)^{2/q}\), which is at least
\(|N|\tau^2(\bar\mu/B)^{2/q}\) because \(\tau\in(0,1)\). In this case,
\((|N|\tau)^{2/q}\ge |N|\tau^2\) as well. Since
\(R_X\ge1\) and \(a(q)\le1\), the preceding lower bound for
\(\bar\mu/B\) therefore gives, in both cases,
\[
\begin{split}
&\min\left\{
    \frac{|N|\tau^2\bar\mu^2}{B^2},
    \left(
        \frac{|N|^{\min(q/2,1)}\tau\bar\mu}{B}
    \right)^{2/q}
\right\} 
\ge
\frac{a(q)^2}{R_X^{2q+8}}
\min\left\{|N|\tau^2,(|N|\tau)^{2/q}\right\}.
\end{split}
\]
Set \(c(q):=c_{2/q}a(q)^2\).
The preceding bound becomes
\begin{equation}
\label{eq:fixed-sample-tail}
    \Pr\left(
        \left|
            \frac1{|N|}\sum_{k=1}^{|N|}Y_k
            -
            \bar\mu
        \right|
        >
        \tau\bar\mu
    \right)
    \le
    2\exp\left(
        -\frac{c(q)}{R_X^{2q+8}}
        \min\left\{
            |N|\tau^2,
            (|N|\tau)^{2/q}
        \right\}
    \right).
\end{equation}

Suppose that
\[
    \left|
        \frac1{|N|}\sum_{k=1}^{|N|}Y_k-\bar\mu
    \right|
    \le\tau\bar\mu.
\]
Using \(|s^{1/q}-1|\le|s-1|\),
Lemma~\ref{lem:empirical-root-bias}, and the triangle inequality then gives
\[
\begin{aligned}
    \left|
        \|X_N^\top\blambda\|_q
        -\E\|X_N^\top\blambda\|_q
    \right|
    &\le
    s_N|N|^{1/q}\bar\mu^{1/q}
    \left|
        \left(
            \frac{|N|^{-1}\sum_{k=1}^{|N|}Y_k}{\bar\mu}
        \right)^{1/q}-1
    \right| \\
    &\qquad
    +s_N|N|^{1/q}
        \left(
            \frac1{|N|^2}\sum_{k=1}^{|N|}\Var(Y_k)
        \right)^{1/(2q)} \\
    &\le
    \tau s_N|N|^{1/q}\bar\mu^{1/q}
    +s_N|N|^{1/q}
        \left(
            \frac1{|N|^2}\sum_{k=1}^{|N|}\Var(Y_k)
        \right)^{1/(2q)} \\
    &\le
    CR_X\sqrt q\,s_N|N|^{1/q}
    \left(\tau+|N|^{-1/(2q)}\right),
\end{aligned}
\]
where the last inequality uses \eqref{eq:fixed-mu-bounds} and the upper moment
bound in Lemma~\ref{lem:subgaussian-linear-moments}, applied with \(2q\),
after adjusting the absolute constant \(C\).
Moreover, Lemma~\ref{lem:exnl_subgaussian} gives
\(\E\|X_N^\top\blambda\|_q
\ge(c/R_X^4)s_N|N|^{1/q}\). Hence, after another adjustment of \(C\),
\[
    \left|
        \frac{\|X_N^\top\blambda\|_q}
        {\E\|X_N^\top\blambda\|_q}-1
    \right|
    \le
    CR_X^5\sqrt q
    \left(\tau+|N|^{-1/(2q)}\right).
\]
Thus, the deviation in the statement can occur only if the sample-mean
deviation exceeds \(\tau\bar\mu\). Its probability is bounded by
\eqref{eq:fixed-sample-tail}, proving the stated deviation bound.

Finally, if \(q\le q_0\), the monotonicity of the constants
in Lemma~\ref{lem:subweibull-mean-conc} gives
\(c_{2/q}\ge c_{2/q_0}>0\). Moreover, \(a(q)\) is bounded below by a
positive constant depending only on \(q_0\) over the same interval. Hence so
is \(c(q)=c_{2/q}a(q)^2\), as claimed.
\end{proof}

\begin{proposition}[Uniform concentration]
\label{prop:xnl_conc_subgaussian}
Under Setting~\ref{setting:lp-product}, there is an absolute constant \(C>0\)
and, for every \(q\ge1\), a constant \(c(q)>0\) that may depend on \(q\) such that for every
\(\tau,\delta\in(0,1)\), with
\(L=\sqrt n+\sqrt{\log(2|N|/\delta)}\),
\[
\begin{aligned}
    &\Pr\left(
        \sup_{\blambda\in\Sphere^{n-1}}
        \left|
            \frac{\|X_N^\top\blambda\|_q}
            {\E\|X_N^\top\blambda\|_q}
            -1
        \right|
        >
        CR_X^5\sqrt q\left(\tau+|N|^{-1/(2q)}\right)
    \right)
    \\
    &\qquad\le
    \delta
    +
    2\exp\left(
        n\log\left(\frac{CR_X^{10}\sqrt q\,L}{\tau}\right)
        -
        \frac{c(q)}{R_X^{2q+8}}
        \min\left\{
            |N|\tau^2,
            (|N|\tau)^{2/q}
        \right\}
    \right).
\end{aligned}
\]
Moreover, for every \(q_0<\infty\), \(c(q)\) is bounded below by a positive
constant depending only on \(q_0\), uniformly over \(q\in[1,q_0]\).
\end{proposition}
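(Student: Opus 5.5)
This is a standard net-plus-Lipschitz argument built on Lemma~\ref{lem:fixed-vector-lq-concentration}. Write $\epsilon_0:=CR_X^5\sqrt q(\tau+|N|^{-1/(2q)})$ for the fixed-vector deviation level.

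\textbf{Step 1 (high-probability Lipschitz event).} I would first control the map $\blambda\mapsto\|X_N^\top\blambda\|_q$. It is Lipschitz in $\ell_2$ with constant $\|X_N^\top\|_{2\to q}=s_N\sup_{\blambda\in\Sphere^{n-1}}\|(b_{\blambda,k})_k\|_q$. The bound I need is
\[
\|X_N^\top\|_{2\to q}\lesssim R_X s_N|N|^{1/q}L.
\]
To get it, use $\|\bv\|_q\le|N|^{1/q}\|\bv\|_\infty$, so the operator norm is at most $s_N|N|^{1/q}\max_k\|(\zeta_{i,k})_i\|_2$. Each column $(\zeta_{i,k})_{i\in[n]}$ has independent sub-Gaussian entries, so Lemma~\ref{lem:anisotropic-euclidean-norm} with $t=\log(2|N|/\delta)$ and a union bound over the $|N|$ columns give, with probability $\ge1-\delta$, $\max_k\|\zeta_{\cdot,k}\|_2\le CR_X(\sqrt n+\sqrt{\log(2|N|/\delta)})=CR_XL$. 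This event accounts for the additive $\delta$ in the statement.

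\textbf{Step 2 (Lipschitz control of the mean).} The deterministic function $\blambda\mapsto\E\|X_N^\top\blambda\|_q$ must also be controlled. It is a norm, so $|\E\|X_N^\top\blambda\|_q-\E\|X_N^\top\blambda'\|_q|\le\E\|X_N^\top(\blambda-\blambda')\|_q$. Applying Lemma~\ref{lem:exnl_subgaussian} (by homogeneity) bounds this by $CR_X\sqrt q\,s_N|N|^{1/q}\|\blambda-\blambda'\|_2$. The same lemma gives the lower bound $\E\|X_N^\top\blambda\|_q\ge (c/R_X^4)s_N|N|^{1/q}$ on the sphere. Dividing, both the numerator and the denominator of the ratio $\|X_N^\top\blambda\|_q/\E\|X_N^\top\blambda\|_q$ move by at most a relative amount $CR_X^5\sqrt q\,L\,\|\blambda-\blambda'\|_2$ on the Lipschitz event. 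The factor $\sqrt q$ can be bounded by $L$-type factors if needed, since $L\ge1$.

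\textbf{Step 3 (net and union bound).} Take an $\eta$-net $\mathcal N$ of $\Sphere^{n-1}$ with $|\mathcal N|\le(3/\eta)^n$. I would choose $\eta:=\tau/(CR_X^{5}\sqrt q\,L)$, so that the Lipschitz error from Step 2 is at most $O(\tau)\le O(\epsilon_0)$. Apply Lemma~\ref{lem:fixed-vector-lq-concentration} at each net point and take a union bound. This gives the failure probability
\[
2\exp\Bigl(n\log\tfrac{3CR_X^5\sqrt qL}{\tau}-\tfrac{c(q)}{R_X^{2q+8}}\min\{|N|\tau^2,(|N|\tau)^{2/q}\}\Bigr),
\]
which matches the claimed form, with $R_X^{10}$ giving slack since $R_X\ge1$.

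\textbf{Step 4 (transfer from the net).} For $\blambda\in\Sphere^{n-1}$ choose $\blambda'\in\mathcal N$ with $\|\blambda-\blambda'\|_2\le\eta$. Then
\[
\Bigl|\tfrac{\|X_N^\top\blambda\|_q}{\E\|X_N^\top\blambda\|_q}-1\Bigr|\le\epsilon_0+O(\tau)(1+\epsilon_0),
\]
and enlarging $C$ absorbs the extra term into $CR_X^5\sqrt q(\tau+|N|^{-1/(2q)})$.

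The statement allows $\epsilon_0$ to be large; if it is at least 1, the claim only needs the upper deviation, and that is handled the same way. The uniformity of $c(q)$ over $q\in[1,q_0]$ is inherited directly from Lemma~\ref{lem:fixed-vector-lq-concentration}.

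\textbf{Main obstacle.} The step I expect to need care is Step 1. One must get an operator-norm bound at the scale $|N|^{1/q}$ and not $|N|^{1/2}$; the crude $\ell_q\le|N|^{1/q}\ell_\infty$ bound does this, at the price of only the logarithmic factor $L$ inside the log. The rest is bookkeeping of constants so that the powers of $R_X$ match.
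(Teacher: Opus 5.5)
Your proposal is correct and follows essentially the same route as the paper: a column-norm event of probability $1-\delta$ giving the $\ell_2\to\ell_q$ Lipschitz bound via $\|\cdot\|_q\le|N|^{1/q}\|\cdot\|_\infty$, the two-sided expectation bounds of Lemma~\ref{lem:exnl_subgaussian}, a Euclidean net, and a union bound with Lemma~\ref{lem:fixed-vector-lq-concentration}. The only difference is minor. The paper bounds the Lipschitz constant of the ratio globally using the crude column-norm upper bound on $\|X_N^\top\blambda'\|_q$, which is why $R_X^{10}$ appears in the net size. You instead use that the ratio is at most $1+\epsilon_0$ at net points, which gives $R_X^5$ inside the logarithm and an extra $(1+\epsilon_0)$ factor that is absorbed into the constant because $\tau<1$.
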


\begin{proof}
Choose absolute constants \(0<c_0\le1\le C_0\) so that the expectation bounds
in Lemma~\ref{lem:exnl_subgaussian} and the sub-Gaussian tail bound below
hold. Let
\(\bz_N\in\R^{n\times N}\) be the matrix with entries \(\zeta_{i,k}\), so
that \(X_N=s_N\bz_N\). For each column, the sub-Gaussian linear-form bound
\citep[Proposition 2.7.1]{Vershynin_2026} and a Euclidean \(1/2\)-net of
\(\Sphere^{n-1}\) give
\[
    \Pr\left(
        \|(\bz_N)_{:,k}\|_2
        > \frac{C_0}{2}R_X(\sqrt n+\sqrt t)
    \right)
    \le 2e^{-t},
    \qquad t\ge0.
\]
Taking \(t=\log(2|N|/\delta)\) and applying a union bound over
\(k\in[|N|]\) show that, with probability at least \(1-\delta\),
\begin{equation}
\label{eq:uniform-column-event}
    \max_{k\in[|N|]}\|(\bz_N)_{:,k}\|_2
    \le
    C_0R_XL .
\end{equation}
We work on this event.
For every \(\blambda,\blambda'\in\Sphere^{n-1}\),
\begin{equation}
\label{eq:uniform-numerator-lip}
    \|X_N^\top(\blambda-\blambda')\|_q
    \le
    C_0R_Xs_NL|N|^{1/q}
    \|\blambda-\blambda'\|_2 .
\end{equation}
Lemma~\ref{lem:exnl_subgaussian} and homogeneity also give
\begin{equation}
\label{eq:uniform-expectation-bounds}
    \E\|X_N^\top\blambda\|_q
    \ge
    \frac{c_0}{R_X^4}s_N|N|^{1/q},
    \qquad
    \E\|X_N^\top\bv\|_q
    \le
    C_0R_X\sqrt q\,s_N|N|^{1/q}\|\bv\|_2
\end{equation}
for every \(\blambda\in\Sphere^{n-1}\) and \(\bv\in\R^n\). Moreover,
\eqref{eq:uniform-column-event} implies
\begin{equation}
\label{eq:uniform-column-norm-bound}
    \|X_N^\top\blambda\|_q
    \le
    C_0R_Xs_NL|N|^{1/q}
    \qquad
    \text{for every }\blambda\in\Sphere^{n-1}.
\end{equation}
Define
\(\Phi(\blambda):=
\|X_N^\top\blambda\|_q/\E\|X_N^\top\blambda\|_q\),
and set \(C_1:=C_0/c_0+C_0^2/c_0^2\). For
\(\blambda,\blambda'\in\Sphere^{n-1}\), equations
\eqref{eq:uniform-numerator-lip}--\eqref{eq:uniform-column-norm-bound} give
\begin{align}
\label{eq:uniform-lipschitz}
    |\Phi(\blambda)-\Phi(\blambda')|
    &\le
    \frac{
        \|X_N^\top(\blambda-\blambda')\|_q
    }{
        \E\|X_N^\top\blambda\|_q
    } 
    +
    \|X_N^\top\blambda'\|_q
    \frac{
        \E\|X_N^\top(\blambda-\blambda')\|_q
    }{
        \E\|X_N^\top\blambda\|_q
        \E\|X_N^\top\blambda'\|_q
    } \notag\\
    &\le
    C_1R_X^{10}\sqrt q\,L\|\blambda-\blambda'\|_2 .
\end{align}

Let \(\mathcal N_r\) be a Euclidean \(r\)-net of \(\Sphere^{n-1}\), where
\[
    r
    :=
    \frac{\tau+|N|^{-1/(2q)}}
    {4C_1R_X^{10}\sqrt q\,L}.
\]
Since \(r<1\) and \(\tau+|N|^{-1/(2q)}\ge\tau\), the standard volumetric
bound allows us to choose the net so that
\begin{equation}
\label{eq:uniform-net-size}
    |\mathcal N_r|
    \le
    \left(\frac{12C_1R_X^{10}\sqrt q\,L}{\tau}\right)^n .
\end{equation}

Let \(C_2>0\) be the absolute constant in
Lemma~\ref{lem:fixed-vector-lq-concentration}, and choose the absolute constant
\(C\) in the proposition so that
\[
    C\ge\max\{12C_1,C_2+1/4\}.
\]
If
\[
    \sup_{\blambda\in\Sphere^{n-1}}|\Phi(\blambda)-1|
    >
    CR_X^5\sqrt q\left(\tau+|N|^{-1/(2q)}\right),
\]
then \eqref{eq:uniform-lipschitz} and the definition of \(r\) imply that
there is some \(\blambda_0\in\mathcal N_r\) with
\begin{equation}
\label{eq:uniform-net-reduction}
    |\Phi(\blambda_0)-1|
    >
    C_2R_X^5\sqrt q\left(\tau+|N|^{-1/(2q)}\right).
\end{equation}
The proposition now follows from a union bound, using
Lemma~\ref{lem:fixed-vector-lq-concentration},
\eqref{eq:uniform-net-size}, and the probability \(\delta\) that
\eqref{eq:uniform-column-event} fails. We take \(c(q)\) to be the constant in
Lemma~\ref{lem:fixed-vector-lq-concentration}, which also gives the asserted
uniform lower bound on \(c(q)\) over bounded intervals.
\end{proof}

\subsection{Gaussian Approximation via a Quantitative CLT}

For \(r\ge1\), let \(W_r\) denote the \(r\)-Wasserstein distance on probability
measures on \(\R\):
\[
    W_r(\mu,\nu)
    =
    \inf_{\pi\in\Pi(\mu,\nu)}
    \left(
        \int_{\R\times \R} \abs{x-y}^r\,d\pi(x,y)
    \right)^{1/r},
\]
where \(\Pi(\mu,\nu)\) is the set of all couplings of \(\mu\) and \(\nu\).

\begin{theorem}[\cite{bobkov2018berry}, Theorem 1.1]
\label{thm:bobkov}
Fix $r\geq 1$, and let \(U_1,\ldots,U_n\) be independent real-valued random variables such that
\(\E U_i=0\) and \(\E |U_i|^{r+2}<\infty\). Set
\(V=\sum_{i=1}^n U_i\), and assume \(\E V^2=1\). There exists a constant \(A(r)>0\), which may be chosen continuously in \(r\),
such that
\[
    W_r\left(\mathcal L(V),\mathcal N(0,1)\right)
    \le
    A(r)
    \left(
        \sum_{i=1}^n \E |U_i|^{r+2}
    \right)^{1/r}.
\]
\end{theorem}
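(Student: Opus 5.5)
Since this is a result quoted from \citet{bobkov2018berry}, the paper only uses it; if I had to prove it, I would go through distribution functions on the line. Let $F$ be the c.d.f.\ of $V$, $\Phi$ the standard normal c.d.f., and $L_s:=\sum_{i=1}^n\E|U_i|^s$.

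\textbf{Step 1 (reduction to $F-\Phi$).} I would use the one-dimensional comparison, realized by the quantile coupling,
\[
W_r^r(\mathcal L(V),\mathcal N(0,1))\le C_r\int_{\R}(1+|x|)^{r-1}\,|F(x)-\Phi(x)|\,dx .
\]
This reduces the problem to a weighted $L^1$ bound on $F-\Phi$.

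\textbf{Step 2 (large Lyapunov ratio).} If $L_{r+2}\ge c_r$, the claim is trivial. Indeed $W_r\le\|V\|_r+\|G\|_r$ with $G\sim\mathcal N(0,1)$, and Rosenthal's inequality bounds $\|V\|_{r+2}$ by $C_r\bigl(1+L_{r+2}^{1/(r+2)}\bigr)$. So I may assume $L_{r+2}$ is small.

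\textbf{Step 3 (small Lyapunov ratio, truncated nonuniform Berry--Esseen).} Note that interpolation (Lyapunov/Hölder with $\sum_i\E U_i^2=1$) gives $L_3\le L_{r+2}^{1/r}$. I would then prove a nonuniform Berry--Esseen estimate of the form
\[
|F(x)-\Phi(x)|\le C_r\,\frac{L_{r+2}^{1/r}}{(1+|x|)^{r+1+\epsilon}}\quad\text{for }|x|\le T,
\]
plus a tail estimate for $|x|>T$. The method is to truncate each $U_i$ at level about $1+|x|$. Since $\sum_i\E U_i^2=1$, each truncated summand has variance at most $1$, so the level sits at or above the summand scale, as the truncation needs. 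The pieces are handled as follows:
\begin{itemize}
\item the truncated sum is controlled by Esseen's smoothing inequality together with an exponential (Bernstein-type) moment bound;
\item the truncation error is controlled by $\sum_i\Pr(|U_i|>1+|x|)\le L_{r+2}/(1+|x|)^{r+2}$.
\end{itemize}
The truncation error term, $L_{r+2}$, is at most $L_{r+2}^{1/r}$ when $L_{r+2}\le1$ and $r\ge1$. Integrating these bounds against $(1+|x|)^{r-1}$ gives $W_r^r\lesssim L_{r+2}^{1/r}$, which only yields $W_r\lesssim L_{r+2}^{1/r^2}$.

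\textbf{Main obstacle.} Step 3 as stated is too weak: it loses a power $1/r$, because the target is $W_r\lesssim L_{r+2}^{1/r}$, i.e.\ $W_r^r\lesssim L_{r+2}$. I therefore need the sharper nonuniform bound
\[
|F(x)-\Phi(x)|\lesssim\frac{L_{r+2}}{(1+|x|)^{r+1+\epsilon}}\quad\text{for large }|x|,
\]
together with the central contribution $L_3$ raised to the appropriate power. To get this I would split at a threshold $T\asymp L_{r+2}^{-1/(r+2)}$:
\begin{itemize}
\item on $|x|\le T$, apply the classical nonuniform bound $L_3/(1+|x|)^3$;
\item on $|x|>T$, bound the tails of $V$ and $G$ separately, using a Fuk--Nagaev inequality for the tails of $V$.
\end{itemize}
I expect that optimizing $T$ with the interpolation between $L_3$ and $L_{r+2}$ produces exactly $L_{r+2}^{1/r}$. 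Finally, the continuity of $A(r)$ follows by tracking constants through these steps, since each one depends continuously on $r$.
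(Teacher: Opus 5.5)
The paper gives no proof of this statement; it is quoted from Bobkov's paper. So the only question is whether your route works, and for every $r>1$ it does not. The obstruction is your Step~1, not the tails.

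The inequality $W_r^r\le C_r\int(1+|x|)^{r-1}|F-\Phi|\,dx$ is true but has the wrong homogeneity. Its right-hand side is linear in $F-\Phi$, whereas $W_r^r$ is of degree $r$ in the displacement. For a shift $F=\Phi(\cdot-\epsilon)$, one has $W_r^r=\epsilon^r$ while the right-hand side is of order $\epsilon$. Hence anything derived through Step~1 is at best $W_r\lesssim\bigl(\int(1+|x|)^{r-1}|F-\Phi|\,dx\bigr)^{1/r}$, and that integral is already large in the bulk $|x|\le 1$.

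Concretely, take $U_i=(Y_i-1)/\sqrt n$ with $Y_i$ i.i.d.\ standard exponential. Then $L_{r+2}\asymp n^{-r/2}$, so the theorem asserts $W_r\lesssim n^{-1/2}$. But the Edgeworth expansion gives $F(x)-\Phi(x)=-\frac{1}{3\sqrt n}(x^2-1)\phi(x)+O(n^{-1})$ uniformly in $x$. Hence $\int_{|x|\le1}|F-\Phi|\,dx\asymp n^{-1/2}$, and Step~1 can never give better than $W_r\lesssim n^{-1/(2r)}$.

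The same example shows that your expectation that optimizing $T$ ``produces exactly $L_{r+2}^{1/r}$'' cannot be met. Your central contribution is at least of order $L_3\asymp n^{-1/2}$, while what is needed is $W_r^r\lesssim L_{r+2}\asymp n^{-r/2}$. No sharpening of the nonuniform Berry--Esseen bound or of the Fuk--Nagaev tail estimate touches this loss. For $r=1$, where Step~1 is an identity, your argument is fine and gives $L_3$.

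What is missing is a treatment of the central region with the correct degree-$r$ homogeneity, so that a linear bound like Step~1 is only applied to errors already of size $O(L_{r+2})$. Two possible routes:
\begin{itemize}
\item Work with the quantile coupling $W_r^r=\int_0^1|F^{-1}(t)-\Phi^{-1}(t)|^r\,dt$. In the bulk, bound $|F^{-1}(t)-\Phi^{-1}(t)|$ by roughly $|F(x)-\Phi(x)|/\phi(x)$ at $x=\Phi^{-1}(t)$. This requires controlling $F-\Phi$ relative to $\phi$ (Edgeworth-type information), not just in absolute terms.
\item First pass from $\Phi$ to an Edgeworth-corrected approximant (smoothed, for lattice summands) and compute its transport cost to $\Phi$ directly. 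Then use the weighted $L^1$ bound only for the remaining higher-order error.
\end{itemize}
Your Step~2 and the moment-based tail estimates can be kept, but they are not where the difficulty of the theorem lies.
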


We will use this theorem only through the following moment comparison, stated in
a slightly more general non-identically distributed form.

\begin{lemma}[Gaussian moment comparison for sub-Gaussian linear forms]
\label{lem:moment-comparison}
Let \(\zeta_1,\ldots,\zeta_n\) be independent random variables satisfying
\[
    \E \zeta_i=0,
    \qquad
    \E \zeta_i^2=1,
    \qquad
    \|\zeta_i\|_{\psi_2}\le R_X.
\]
There is a locally bounded function \(M:[1,\infty)\to(0,\infty)\) such that
for every \(q\ge1\), every
\(\blambda\in\Sphere^{n-1}\), and \(G\sim\mathcal N(0,1)\),
\[
    \left|
        \E\left|\sum_{i=1}^n\lambda_i\zeta_i\right|^q
        -\E |G|^q
    \right|
    \le
    M(q)R_X^{q+2/q}\|\blambda\|_\infty.
\]
\end{lemma}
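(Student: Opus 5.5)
The plan is to apply Theorem~\ref{thm:bobkov} with $r=q$ to $U_i:=\lambda_i\zeta_i$, so that $V=\sum_i\lambda_i\zeta_i$ has $\E V^2=\|\blambda\|_2^2=1$. Then I will convert the resulting $W_q$ bound into a bound on the difference of $q$-th absolute moments. Write $\mu_V$ for the law of $V$ and $\gamma$ for $\mathcal N(0,1)$.

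\textbf{Step 1 (CLT input).} The sub-Gaussian moment bound gives $\E|\zeta_i|^{q+2}\le (CR_X\sqrt{q+2})^{q+2}$. Hence
\[
\sum_i\E|U_i|^{q+2}=\sum_i|\lambda_i|^{q+2}\E|\zeta_i|^{q+2}\le (CR_X\sqrt{q+2})^{q+2}\|\blambda\|_\infty^{q}\sum_i\lambda_i^2 .
\]
Since $\sum_i\lambda_i^2=1$, Theorem~\ref{thm:bobkov} yields
\[
W_q(\mu_V,\gamma)\le A(q)(CR_X\sqrt{q+2})^{(q+2)/q}\|\blambda\|_\infty .
\]
The factor $R_X^{1+2/q}$ appearing here is exactly what produces the exponent $q+2/q$ after Step 2.

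\textbf{Step 2 (moment comparison).} Take an optimal coupling $(V,G)$. For $q\ge1$, the mean value theorem gives
\[
\bigl||a|^q-|b|^q\bigr|\le q|a-b|\,(|a|+|b|)^{q-1}.
\]
Applying Hölder with exponents $q$ and $q/(q-1)$ (the case $q=1$ is immediate) gives
\[
\bigl|\E|V|^q-\E|G|^q\bigr|\le q\,W_q(\mu_V,\gamma)\,\bigl(\E(|V|+|G|)^q\bigr)^{(q-1)/q}.
\]
The last factor is at most $C(q)R_X^{q-1}$ by Minkowski together with the sub-Gaussian moment bounds $\|V\|_q\le CR_X\sqrt q$ and $\|G\|_q\le C\sqrt q\le CR_X\sqrt q$; the latter uses $R_X\ge1$, which follows from $\E\zeta_i^2=1$. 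Multiplying the two bounds, the total power of $R_X$ is $(1+2/q)+(q-1)=q+2/q$. The $q$-dependent constants combine into
\[
M(q):=q\,A(q)\,C^{q+1+2/q}(q+2)^{(q+2)/(2q)}q^{(q-1)/2},
\]
which is locally bounded because $A(q)$ is continuous.

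\textbf{Main obstacle.} The subtle point is exponent bookkeeping rather than any deep estimate. One must not bound $\|\blambda\|_\infty^q$ crudely, so that the $1/q$ root leaves exactly one power of $\|\blambda\|_\infty$. One must also ensure the $R_X$ powers match the claimed $R_X^{q+2/q}$; this requires using $R_X\ge1$ to absorb the Gaussian moment terms. Finally, the locally bounded $M$ requires the continuity of $A(r)$ stated in Theorem~\ref{thm:bobkov}.
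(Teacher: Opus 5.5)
Your proposal is correct and follows the paper's proof essentially step for step. It applies Bobkov's theorem with $r=q$ to $U_i=\lambda_i\zeta_i$, uses the bound $\sum_i|\lambda_i|^{q+2}\le\|\blambda\|_\infty^q$, and does the same $R_X$-exponent bookkeeping using $R_X\ge1$. The only cosmetic difference is in Step 2: the paper first uses the reverse triangle inequality in $L_q$ to get $\bigl|(\E|W|^q)^{1/q}-(\E|G|^q)^{1/q}\bigr|\le W_q$ and then applies $|x^q-y^q|\le q(x+y)^{q-1}|x-y|$ to these scalar norms, whereas you apply that inequality pointwise under a coupling and then use H\"older and Minkowski, reaching the same bound.
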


\begin{proof}
Fix an absolute constant \(C_0\ge1\) large enough for the standard
sub-Gaussian moment and linear-form bounds used below, and set
\(W:=\sum_{i=1}^n\lambda_i\zeta_i\). Then \(\E W=0\) and \(\E W^2=1\).
Moreover,
\[
    \sum_{i=1}^n \E |\lambda_i\zeta_i|^{q+2}
    \le
    \left(C_0R_X\sqrt{q+2}\right)^{q+2}
    \sum_{i=1}^n|\lambda_i|^{q+2}
    \le
    \left(C_0R_X\sqrt{q+2}\right)^{q+2}
    \|\blambda\|_\infty^q .
\]
Theorem~\ref{thm:bobkov}, applied with \(U_i=\lambda_i\zeta_i\), therefore
gives
\begin{equation}
\label{eq:wasserstein-linear-form}
    W_q\left(\mathcal L(W),\mathcal N(0,1)\right)
    \le
    A(q)\left(C_0R_X\sqrt{q+2}\right)^{1+2/q}
    \|\blambda\|_\infty.
\end{equation}
For any coupling \((\widetilde W,\widetilde G)\) with
\(\widetilde W\sim\mathcal L(W)\) and
\(\widetilde G\sim\mathcal N(0,1)\), the reverse triangle inequality in
\(L_q\) gives
\[
    \left|
        \left(\E |W|^q\right)^{1/q}
        -
        \left(\E |G|^q\right)^{1/q}
    \right|
    \le
    \|\widetilde W-\widetilde G\|_{L_q}.
\]
Taking the infimum over all couplings and using
\eqref{eq:wasserstein-linear-form},
\[
    \left|
        \left(\E |W|^q\right)^{1/q}
        -
        \left(\E |G|^q\right)^{1/q}
    \right|
    \le
    A(q)\left(C_0R_X\sqrt{q+2}\right)^{1+2/q}
    \|\blambda\|_\infty.
\]
The linear form \(W\) is \(C_0R_X\)-sub-Gaussian. As noted after
Setting~\ref{setting:lp-product}, \(R_X\ge1\), so
\[
    \left(\E |W|^q\right)^{1/q}\le C_0R_X\sqrt q,
    \qquad
    \left(\E |G|^q\right)^{1/q}\le C_0R_X\sqrt q.
\]
Using \(|x^q-y^q|\le q(x+y)^{q-1}|x-y|\) for \(x,y\ge0\), we obtain
\[
    \left|
        \E |W|^q-\E |G|^q
    \right|
    \le
    M(q)R_X^{q+2/q}\|\blambda\|_\infty,
\]
where
\[
    M(q)
    :=
    q(2C_0\sqrt q)^{q-1}
    A(q)\left(C_0\sqrt{q+2}\right)^{1+2/q}.
\]
Since \(A(q)\) is continuous, \(M(q)\) is locally bounded.
\end{proof}

\subsubsection{Expected \texorpdfstring{\(\ell_q\)}{lq} geometry on flat directions}

\begin{lemma}[Expected \(\ell_q\) geometry on flat directions]
\label{lem:K_subgaussian}
Under Setting~\ref{setting:lp-product}, let \(G\sim\mathcal N(0,1)\),
\(m_q:=\E|G|^q\), and \(\kappa_N=s_N(|N|m_q)^{1/q}\). There are absolute
constants \(c,C>0\) and a locally bounded function
\(D:[1,\infty)\to(0,\infty)\) such that every nonzero
\(\blambda\in\R^n\) satisfies
\[
    \frac{\E\|X_N^\top\blambda\|_q}
    {\kappa_N\|\blambda\|_2}
    \ge
    \frac{c}{R_X^4\sqrt q}
\]
and
\[
    \left|
        \frac{\E\|X_N^\top\blambda\|_q}
        {\kappa_N\|\blambda\|_2}
        -1
    \right|
    \le
    CR_X|N|^{-1/(2q)}
    +
    D(q)R_X^{q+4-2/q}
    \frac{\|\blambda\|_\infty}{\|\blambda\|_2}.
\]
\end{lemma}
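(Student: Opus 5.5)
By homogeneity it suffices to take \(\blambda\in\Sphere^{n-1}\). In that case \(\|\blambda\|_2=1\) and \(X_N^\top\blambda=s_N(b_{\blambda,1},\ldots,b_{\blambda,|N|})\).

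\textbf{Lower bound.} This part is immediate from Lemma~\ref{lem:exnl_subgaussian}, which gives \(\E\|X_N^\top\blambda\|_q\ge (c/R_X^4)s_N|N|^{1/q}\). It then remains to bound \(m_q^{1/q}=(\E|G|^q)^{1/q}\le C\sqrt q\). Dividing by \(\kappa_N=s_N|N|^{1/q}m_q^{1/q}\) yields the claimed \(c/(R_X^4\sqrt q)\).

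\textbf{Two-sided estimate.} I would insert the intermediate quantity
\[
\bar\mu^{1/q}:=\Bigl(\tfrac1{|N|}\textstyle\sum_k \E|b_{\blambda,k}|^q\Bigr)^{1/q}
\]
and split
\[
\frac{\E\|X_N^\top\blambda\|_q}{\kappa_N}-1=\frac{\E(\frac1{|N|}\sum_k Y_k)^{1/q}-\bar\mu^{1/q}}{m_q^{1/q}}+\frac{\bar\mu^{1/q}-m_q^{1/q}}{m_q^{1/q}},
\qquad Y_k:=|b_{\blambda,k}|^q .
\]

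For the first (bias) term, Lemma~\ref{lem:empirical-root-bias} bounds the numerator by \((|N|^{-2}\sum_k\E Y_k^2)^{1/(2q)}\). By Lemma~\ref{lem:subgaussian-linear-moments} with moment \(2q\), \(\E Y_k^2\le (CR_X\sqrt{2q})^{2q}\), so the numerator is at most \(CR_X\sqrt q\,|N|^{-1/(2q)}\). To remove the \(\sqrt q\), I would use a lower bound \(m_q^{1/q}\ge c\sqrt q\) (Gaussian moments grow like \(\sqrt q\); for bounded \(q\) this is just a constant). This gives \(CR_X|N|^{-1/(2q)}\).

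For the second (Gaussian approximation) term, the coordinates \(b_{\blambda,k}=\sum_i\lambda_i\zeta_{i,k}\) satisfy the hypotheses of Lemma~\ref{lem:moment-comparison} for each \(k\). Hence \(|\E Y_k-m_q|\le M(q)R_X^{q+2/q}\|\blambda\|_\infty\), and averaging gives \(|\bar\mu-m_q|\le M(q)R_X^{q+2/q}\|\blambda\|_\infty\). To pass from \(q\)-th powers to \(q\)-th roots, I would use \(|u^{1/q}-v^{1/q}|\le |u-v|\,v^{1/q-1}/q\cdot(\ldots)\) or, more cleanly, the mean value theorem: \(|u^{1/q}-v^{1/q}|\le \frac1q\min(u,v)^{1/q-1}|u-v|\). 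The quantity \(\min(\bar\mu,m_q)\) is bounded below using \eqref{eq:fixed-mu-bounds}, namely \(\bar\mu\ge 2^{-q/2}c/R_X^4\), together with \(m_q\ge c^q\). Then \(\min^{1/q-1}\le C(q)R_X^{4-4/q}\), and dividing by \(m_q^{1/q}\) gives a bound of the form \(D(q)R_X^{q+2/q+4-4/q}\|\blambda\|_\infty=D(q)R_X^{q+4-2/q}\|\blambda\|_\infty\). This matches the stated exponent, and \(D(q)\) is locally bounded because \(M\) is and all other \(q\)-dependent factors are continuous.

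\textbf{Main obstacle.} The delicate step is the power-to-root conversion. The derivative of \(u\mapsto u^{1/q}\) blows up at \(0\) when \(q>1\), so I need the lower bound on \(\min(\bar\mu,m_q)\) to hold with exactly the right power of \(R_X\) to match the stated exponent. The Paley--Zygmund bound from Lemma~\ref{lem:subgaussian-linear-moments} supplies this.
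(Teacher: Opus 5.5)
Your proposal is correct and follows essentially the same route as the paper. It uses the same split through $\bar\mu^{1/q}$, and it bounds the bias term by Lemma~\ref{lem:empirical-root-bias} with $2q$-th moments and $m_q^{1/q}\ge c\sqrt q$; the Gaussian term likewise goes through Lemma~\ref{lem:moment-comparison} and the mean value theorem with $\min(\bar\mu,m_q)\ge a(q)R_X^{-4}$, giving the same exponent $q+2/q+4-4/q=q+4-2/q$ and a locally bounded $D(q)$.
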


\begin{proof}
By homogeneity, it suffices to consider
\(\blambda\in\Sphere^{n-1}\). Lemma~\ref{lem:exnl_subgaussian} and the
standard bound \(m_q^{1/q}\le C\sqrt q\) give, for absolute constants
\(c,C>0\),
\[
    \frac{\E\|X_N^\top\blambda\|_q}{\kappa_N}
    \ge
    \frac{c}{R_X^4\sqrt q}.
\]
For the second bound, set
\[
    Z:=\frac1{|N|}\sum_{k=1}^{|N|}|b_{\blambda,k}|^q,
    \qquad
    \mu:=\E Z.
\]
Since
\(\E\|X_N^\top\blambda\|_q/\kappa_N=m_q^{-1/q}\E Z^{1/q}\),
the triangle inequality gives
\[
\begin{aligned}
    \left|
        \frac{\E\|X_N^\top\blambda\|_q}{\kappa_N}-1
    \right|
    &\le
    m_q^{-1/q}\left|\E Z^{1/q}-\mu^{1/q}\right| 
    +m_q^{-1/q}\left|\mu^{1/q}-m_q^{1/q}\right|.
\end{aligned}
\]
Lemma~\ref{lem:empirical-root-bias}, applied to the variables
\(|b_{\blambda,k}|^q\), together with
Lemma~\ref{lem:subgaussian-linear-moments} applied with \(r=2q\) and the
Gaussian lower bound \(m_q^{1/q}\ge c\sqrt q\), gives
\[
    m_q^{-1/q}
    \left|\E Z^{1/q}-\mu^{1/q}\right|
    \le
    CR_X|N|^{-1/(2q)}.
\]

To bound the second term, we apply Lemma~\ref{lem:moment-comparison}
separately to each \(b_{\blambda,k}\), obtaining
\[
    |\mu-m_q|
    \le
    M(q)R_X^{q+2/q}\|\blambda\|_\infty.
\]
Set \(a(q):=\min\{2^{-q/2}c,m_q\}\).
The lower moment bound in Lemma~\ref{lem:subgaussian-linear-moments} and
\(R_X\ge1\) imply that both \(\mu\) and \(m_q\) are at least
\(a(q)R_X^{-4}\). The mean value theorem therefore gives
\[
    m_q^{-1/q}
    \left|\mu^{1/q}-m_q^{1/q}\right|
    \le
    \frac{a(q)^{1/q-1}}{q\,m_q^{1/q}}
    R_X^{4-4/q}|\mu-m_q|
    \le
    D(q)R_X^{q+4-2/q}\|\blambda\|_\infty,
\]
where we may take
\(D(q)=a(q)^{1/q-1}M(q)/(q\,m_q^{1/q})\). This function is locally bounded
because \(a(q)\) and \(m_q\) are positive and continuous and \(M(q)\) is
locally bounded. Combining the two estimates in the triangle-inequality
display proves the result for unit vectors, and homogeneity gives the stated
result.
\end{proof}

\subsection{Proof of Theorem~\ref{thm:lp-clean}}
\label{sec:lp-benign}

We prove Theorem~\ref{thm:lp-clean} by specializing
Theorem~\ref{thm:main} to minimum-\(\ell_p\)-norm interpolation. Throughout
this section, \(q\) denotes the dual exponent from
Theorem~\ref{thm:lp-clean}, and \(\|\cdot\|=\|\cdot\|_p\).

\begin{proof}
The assumptions of Setting~\ref{setting:lp-product} hold with
\(\zeta_{i,j}=x_{i,j}/s_N\): these variables are independent, mean-zero,
variance one, and satisfy \(\|\zeta_{i,j}\|_{\psi_2}\le R_X\).

To establish the claimed monotonicity, we fix \(p_0>1\) and prove the result uniformly for all \(p\ge p_0\). Then the dual exponent satisfies
\(1\le q\le p_0/(p_0-1)\). Throughout the proof, \(C\ge1\) and \(c>0\)
may change from line to line, with \(C\) and \(c^{-1}\) having the polynomial dependence stated in
the theorem.

Denote the quantity in square brackets in the theorem by
\[
    \eta:=
    \frac{s_N|N|^{1/q}|S|^{\max\{1/p-1/2,0\}}}{\sqrt n}
    +\left(\frac{r_S+\log(n/\delta)}n\right)^{1/4}
    +\sqrt{\frac{n\log(n|N|/\delta)}{|N|}}
    +\frac{n\log(n|N|/\delta)}{|N|^{\min\{2/q,1\}}}.
\]
After adjusting the constants in the statement, it suffices to prove
\(\Rcal(\widehat\bw)\le C\eta\) under the assumption \(\eta\le c\).

We first verify the sample-size condition of Theorem~\ref{thm:main}. The
second term in \(\eta\) gives
\[
    n\ge C\bigl(r_S+\log(n/\delta)\bigr).
\]
The last term similarly gives
\[
    |N|
    \ge C\bigl(n\log(n|N|/\delta)\bigr)^{\max\{1,q/2\}}.
\]
This lower bound also implies
\[
    \frac{\bigl(n\log(n|N|/\delta)\bigr)^{q/2}}{|N|}
    \le
    \frac{n\log(n|N|/\delta)}{|N|^{\min\{2/q,1\}}},
    \qquad
    |N|^{-1/(2q)}
    \le
    \left(\frac{r_S+\log(n/\delta)}n\right)^{1/4}.
\]
For the first inequality, distinguish \(q\le2\) and \(q\ge2\) and use
the preceding lower bound on \(|N|\); the second follows from the same
bound and \(r_S\ge1\).

We next verify Assumption~\ref{assump:K}. For the \(\ell_p\) norm, the
dual norm associated with \(\bw_N\mapsto\|(\zero,\bw_N)\|_p\) is
\(\|\cdot\|_q\). For \(G\sim\mathcal N(0,1)\), take
\[
    \kappa_N:=s_N\bigl(|N|\E|G|^q\bigr)^{1/q}.
\]
Lemma~\ref{lem:K_subgaussian} verifies the assumption with
\[
    A=\frac{a}{R_X^4\sqrt q},
    \qquad
    \varphi(t)
    =CR_X|N|^{-1/(2q)}+D(q)R_X^{q+4-2/q}t,
\]
where \(D(q)\) depends only on \(q\) and is bounded uniformly for
\(1\le q\le p_0/(p_0-1)\), and \(a>0\) is an absolute constant. In
particular, \(A^{-1}\le CR_X^4\). The
proof of Theorem~\ref{thm:main} uses \(A\) only through \(A^{-2}\), so
its constants have the polynomial dependence asserted in the theorem.

We now verify Assumption~\ref{assump:xnl_concentrate}. Choose
\[
    \tau:=K
    \left(
        \sqrt{\frac{n\log(n|N|/\delta)}{|N|}}
        +\frac{\bigl(n\log(n|N|/\delta)\bigr)^{q/2}}{|N|}
    \right),
\]
where \(K\) is a sufficiently large polynomial in \(R_X\), with degree
and coefficients depending only on \(p_0\). The preceding comparisons
give \(\tau\le K\eta\), so decreasing \(c\) ensures \(\tau\in(0,1)\). In
Proposition~\ref{prop:xnl_conc_subgaussian}, with failure parameter
\(\delta/12\), we have \(L=\sqrt n+\sqrt{\log(24|N|/\delta)}\le
C\sqrt{n\log(n|N|/\delta)}\). Moreover, the constant \(c(q)\) in that
proposition is bounded below uniformly for
\(1\le q\le p_0/(p_0-1)\). The logarithmic term in its probability bound
is at most \(Cn\log(n|N|/\delta)\), while
\[
    \min\left\{|N|\tau^2,(|N|\tau)^{2/q}\right\}
    \ge K^{2/q}n\log(n|N|/\delta).
\]
Since \(R_X\ge1\) and \(q\le p_0/(p_0-1)\), we may choose \(K\) so that
the second term in the probability bound of
Proposition~\ref{prop:xnl_conc_subgaussian} is at most \(\delta/12\).
Thus Assumption~\ref{assump:xnl_concentrate} holds at failure level
\(\delta/6\), with
\[
    \varepsilon_{\mathrm{unif}}(\delta/6)
    \le C\left[
        \sqrt{\frac{n\log(n|N|/\delta)}{|N|}}
        +\frac{\bigl(n\log(n|N|/\delta)\bigr)^{q/2}}{|N|}
        +|N|^{-1/(2q)}
    \right]
    \le C\eta.
\]

It remains to verify the regularity and radius conditions of
Theorem~\ref{thm:main}. Since \(\bw_N^\star=0\), we have \(\nu_N=0\), and
\(\|\bb_S\|_2=b_S^+\sqrt{r_S}\). The definition in
\eqref{eq:main-scales} therefore gives
\[
    \rho
    \le C\sqrt{\frac{r_S+\log(n/\delta)}n}.
\]
Using the displayed formula for \(\varphi\), the uniform bound on \(D(q)\),
and \(R_X\ge1\), we obtain
\[
    \varphi(C'\sqrt\rho)
    \le C\left[
        |N|^{-1/(2q)}
        +\left(\frac{r_S+\log(n/\delta)}n\right)^{1/4}
    \right]
    \le C\eta.
\]
The definition of \(\varepsilon_{\mathrm{flat}}\) now yields
\[
    \varepsilon_{\mathrm{flat}}\le C\eta.
\]
This controls the first term in \eqref{eq:main-regularity}. To control the
second, note that \((\E|G|^q)^{1/q}\) is bounded above and below by
positive constants uniformly for \(1\le q\le p_0/(p_0-1)\). Therefore
\[
    c\frac{s_N|N|^{1/q}}{\sqrt n}
    \le \gamma_N
    \le C\frac{s_N|N|^{1/q}}{\sqrt n},
    \qquad
    \beta_S=|S|^{\max\{1/p-1/2,0\}},
\]
and hence \(\gamma_N\beta_S/b_S^-\le C\eta\). Finally, the third term in
\eqref{eq:main-regularity} satisfies
\[
    \left(
        \frac{R_Xb_S^++R_\xi\sigma}{\min\{b_S^-,\sigma\}}
    \right)^2
    \sqrt{\frac{r_S+\log(n/\delta)}n}
    \le C\eta.
\]
This verifies \eqref{eq:main-regularity} when \(\eta\le c\).

It remains only to check the radius condition. Since \(\beta_S\ge1\), the
bound \(\gamma_N\beta_S\le C\eta\) also controls \(\gamma_N\). Combining
this with \(\eta\le c\), all the terms involving \(\gamma_N\) in
\eqref{eq:main-signal-radius} are at most \(C\eta\). The bound
\(\varepsilon_{\mathrm{flat}}\le C\eta\) controls the first term. Finally,
since \(\eta\le c\le1\),
\[
    \sqrt{\frac{r_S+\log(n/\delta)}n}
    \le
    \left(\frac{r_S+\log(n/\delta)}n\right)^{1/4}
    \le\eta
\]
controls the remaining term. Thus
\[
    r_\delta^2\le C\eta.
\]
Decreasing \(c\) once more ensures that \(r_\delta\le1\). We have now
verified all the hypotheses of Theorem~\ref{thm:main}.

We may therefore apply Theorem~\ref{thm:main}. For the \(\ell_p\) norm,
\[
\begin{aligned}
    \beta_{N,\Sigma}
    &=
    s_N\sup_{\|\bv\|_p=1}\|\bv\|_2
    =
    s_N|N|^{\max\{1/2-1/p,0\}}
    =
    s_N|N|^{\max\{1/q-1/2,0\}}.
\end{aligned}
\]
The regularity bound and \(r_\delta\le1\) imply
\(\beta_Sr_\delta\le C/\gamma_N\). Consequently, the risk bound in
Theorem~\ref{thm:main} gives
\[
    \Rcal(\widehat\bw)
    \le
    Cr_\delta^2
    +
    C s_N^2 |N|^{2\max\{1/q-1/2,0\}}
    \left(
        \frac{1}{\gamma_N}+1
    \right)^2 .
\]
Using \((a+1)^2\le2a^2+2\) and the definition of \(\gamma_N\), the
\(1/\gamma_N^2\) contribution equals
\[
    s_N^2 |N|^{2\max\{1/q-1/2,0\}}\frac1{\gamma_N^2}
    =
    (\E|G|^q)^{-2/q}
    \frac{n}{|N|^{\min\{2/q,1\}}}.
\]
The Gaussian factor is bounded uniformly for
\(1\le q\le p_0/(p_0-1)\). The contribution without
\(1/\gamma_N^2\) is no larger than a constant multiple of the same rate.
Indeed,
\[
    2\max\{1/q-1/2,0\} + \min\{2/q,1\}=2/q
\]
and \(\beta_S\ge1\) give
\[
    \frac{s_N^2|N|^{2\max\{1/q-1/2,0\}}}
    {n/|N|^{\min\{2/q,1\}}}
    \le
    \frac{s_N^2|N|^{2/q}\beta_S^2}{n}
    \le C\eta^2.
\]
Therefore
\[
    \Rcal(\widehat\bw)
    \le
    C
    \left[
        \frac{n}{|N|^{\min\{2/q,1\}}}
        +
        r_\delta^2
    \right].
\]
Finally, \(r_\delta^2\le C\eta\), and
\[
    \frac{n}{|N|^{\min\{2/q,1\}}}
    \le
    \frac{n\log(n|N|/\delta)}{|N|^{\min\{2/q,1\}}}
    \le\eta.
\]
Thus \(\Rcal(\widehat\bw)\le C\eta\), as claimed.

Finally, choosing the constants uniformly over dual exponents in \([1,q]\), a range that shrinks as \(p\) increases, makes \(C\) non-increasing and the small tolerance \(c\) non-decreasing in \(p\). Increasing \(C\) if necessary so that \(C\ge c^{-1}\) preserves the claimed monotonicity and ensures that \(\eta\le C^{-1}\) implies \(\eta\le c\).
\end{proof}

\section{Proof of Theorem~\ref{thm:l1-mean-zero-failure} -- The \texorpdfstring{\(\ell_1\)}{l1} Failure Example}
\label{sec:l1-failure-proof}

\begin{proof}
Let $S=\{1\}, N=\{2,\ldots, d\}$, and recall that the inputs are given by
\[
    x_S\sim\operatorname{Unif}\{\pm1\},
    \qquad
    \text{and}
    \qquad
    x_j\sim\operatorname{Unif}\{\pm s_N\}
    \quad \forall ~ j\in N.
\]
Let the label noise be independent copies of \(\xi:=\sigma(Z_n/\rho_n)\varepsilon\), where 
\(\varepsilon\sim\operatorname{Unif}\{\pm1\}\), $Z_n$, $\rho_n$ are defined as
\[
    Z_n = \begin{cases}
        1 + \sqrt{\log(n)} & \text{w.p. } \frac{1}{n} \\ 
        1 & \text{w.p. } 1 - \frac{1}{n}
    \end{cases}, 
    \qquad
    \rho_n:=\E[Z_n^2]^{1/2},
\]
and $Z_n$ and $\varepsilon$ are independent.

We first verify Assumptions~\ref{assump:subgaussian-input-distribution}
 and~\ref{assump:noise-variables}. By symmetry,
\(\E\xi=0\), and by the definition of \(\rho_n\), \(\E\xi^2=\sigma^2\). Moreover,
\[
\begin{aligned}
    \rho_n^2 = \E Z_n^2
    &=\frac{\log n + 2\sqrt{\log(n)} + 1}{n}+1-\frac1n 
    = 1 + \frac{\log n + 2\sqrt{\log(n)}}{n}.
\end{aligned}
\]
It is easy to check that $\rho_n \in [1, 3/2]$. Moreover, since
$(1+\sqrt{a})^2\le 2(1+a)$ for every $a\ge0$, we have
\[
\begin{aligned}
    \E\exp\left(\frac{\xi^2}{16\sigma^2}\right)
    &\le
    \left(1-\frac1n\right)e^{1/16}
    +
    \frac1n e^{(1+\log n)/8}
    \le e^{1/16}+e^{1/8}2^{-7/8}
    \le2
\end{aligned}
\]
for all $n\geq 2$. Hence \(\|\xi\|_{\psi_2}\le4\sigma\). Moreover, all coordinates of $\bx$ are scaled Rademacher random variables, so
Assumption~\ref{assump:subgaussian-input-distribution} also holds with a
universal constant. Write
\[
    X_N=s_NR,
    \qquad
    R\in\{\pm1\}^{n\times |N|},
\]
and let \(\Ecal_{\mathrm{cube}}\) be the event that every vector in
\(\{\pm1\}^n\) occurs as a column of \(R\). A union bound gives
\[
\begin{aligned}
    \Pr(\Ecal_{\mathrm{cube}}^c)
    &\le
    2^n\left(1-2^{-n}\right)^{|N|}
    \le
    \exp\left(n\log2-\frac{|N|}{2^n}\right).
\end{aligned}
\]
Thus, for \(|N|\ge 2 n2^n\),
\[
    \Pr(\Ecal_{\mathrm{cube}}^c)
    \le e^{-(2-\log 2)n}
    \le e^{-n}.
\]

Work on \(\Ecal_{\mathrm{cube}}\). Since the columns of \(R\) contain all
sign vectors, \(X_N\) has full row rank. Hence, for every
\(\bz\in\R^n\), \(\ell_1\)-duality (see Lemma~\ref{lem:dual}) gives
\begin{equation}
\label{eq:l1-dual-cost}
    \min_{\bw_N:\,X_N\bw_N=\bz}\|\bw_N\|_1
    =
    \max_{\blambda\in\R^n}
    \left\{
        \bz^\top\blambda:
        \|X_N^\top\blambda\|_\infty\le1
    \right\}.
\end{equation}
Moreover, for every \(\blambda\in\R^n\), since $R$ contains all sign vectors,
\begin{equation}
\label{eq:l1-cube-dual-norm}
    \|X_N^\top\blambda\|_\infty
    =
    s_N\max_{\bu\in\{\pm1\}^n}|\bu^\top\blambda|
    =
    s_N\|\blambda\|_1.
\end{equation}
Combining \eqref{eq:l1-dual-cost} and
\eqref{eq:l1-cube-dual-norm}, and using the duality between
\(\ell_1\) and \(\ell_\infty\), yields
\begin{equation}
\label{eq:l1-cost-mean-zero}
    \min_{\bw_N:\,X_N\bw_N=\bz}\|\bw_N\|_1
    =
    \max_{\|\blambda\|_1\le s_N^{-1}}
        \bz^\top\blambda
    =
    \frac1{s_N}\|\bz\|_\infty.
\end{equation}
Since \(\bw^\star=0\), the min-norm interpolation problem is
\begin{align*}
    \min_{\bw:\,X_Sw_S+X_N\bw_N=\bxi}\|\bw\|_1
    =
    \min_{w_S \in \R} 
    \min_{\bw_N: X_N\bw_N=\bxi - X_Sw_S}
        \left(
            |w_S|+\|\bw_N\|_1
        \right),
\end{align*}
so that by \eqref{eq:l1-cost-mean-zero}, the min-norm interpolator must satisfy
\begin{align*}
    \widehat w_S \in \argmin{w_S \in \R}
        \left(
            |w_S|+\frac1{s_N}\|\bxi-X_S w_S\|_\infty
        \right).
\end{align*}
Let
\begin{equation}
\label{eq:l1-profiled-objective}
    H(w)
    :=
    |w|+\frac1{s_N}\|\bxi-X_Sw\|_\infty,
\end{equation}
and define
\[
    \eta_i:=x_{i,S}\xi_i,
    \qquad
    \eta_-:=\min_{i\in[n]}\eta_i,
    \qquad
    \eta_+:=\max_{i\in[n]}\eta_i,
    \qquad
    M_\eta:=\frac{\eta_++\eta_-}{2}.
\]
Since \(x_{i,S}\in\{\pm1\}\), $|\xi_i-x_{i,S}w|=|x_{i, S}\xi_i -w|$, and therefore, 
\begin{align}\label{eq:l1-midrange-identity}
\|\bxi-X_Sw\|_\infty 
= & \max_{i\in[n]}|\eta_i-w| =_{(*)} \max\{|\eta_+-w|,|\eta_--w|\} \nonumber\\
=&  \max\left\{\left|\frac{\eta_+ - \eta_-}{2} + M_\eta-w \right|,
               \left|-\frac{\eta_+ - \eta_-}{2} + M_\eta-w \right|
        \right\} \nonumber\\
=&_{(**)} \frac{\eta_+ - \eta_-}{2} + |M_\eta-w|,
\end{align}
where (*) uses that the maximization problem is one-dimensional 
and (**) uses the inequality $\max\{|a+b|,|-a+b|\} = a + |b|$ for $a\ge 0$ and $b\in\R$.
By \eqref{eq:l1-profiled-objective} and
\eqref{eq:l1-midrange-identity},
\[
\begin{aligned}
    H(w)-H(M_\eta)
    &=
    |w|-|M_\eta|+\frac1{s_N}|w-M_\eta|\\
    &\ge
    \left(\frac1{s_N}-1\right)|w-M_\eta|.
\end{aligned}
\]
Because \(s_N<1\), the right-hand side is strictly positive unless
\(w=M_\eta\). Thus \(M_\eta\) is the unique minimizer of \(H\), and every
minimum-\(\ell_1\)-norm interpolator satisfies
\begin{equation}
\label{eq:l1-signal-midrange}
    \widehat w_S
    =
    M_\eta
    =
    \frac{
        \max_{i\in[n]}(x_{i,S}\xi_i)
        +
        \min_{i\in[n]}(x_{i,S}\xi_i)
    }{2}.
\end{equation}

Let \(\Ecal_{\mathrm{out}}\) be the event that exactly one of
\(\xi_1,\ldots,\xi_n\) has magnitude
\(\sigma(1 + \sqrt{\log n})/\rho_n\). Since each label-noise coordinate has this magnitude with
probability \(1/n\), and there are $n$ choices for which coordinate has this magnitude, 
\[
    \Pr(\Ecal_{\mathrm{out}})
    =
    n \cdot \frac{1}{n}\left(1-\frac1n\right)^{n-1} = \left(1-\frac1n\right)^{n-1} \geq \frac
    {1}{e}.
\]
On
\(\Ecal_{\mathrm{out}}\), exactly one \(\eta_i\) has magnitude
\(\sigma(1 + \sqrt{\log n})/\rho_n\), while all the others have magnitude
\(\sigma/\rho_n\). If the exceptional coordinate is positive, then
\(\eta_+=\sigma(1+\sqrt{\log n})/\rho_n\) and
\(\eta_-\in\{-\sigma/\rho_n,\sigma/\rho_n\}\); the analogous statement holds with
the roles reversed if it is negative. Hence
\eqref{eq:l1-signal-midrange} and \(\rho_n<3/2\) give
\begin{equation}
\label{eq:l1-midrange-lower}
    |\widehat w_S|
    \ge
    \frac{\sigma\sqrt{\log n}}{2\rho_n} \geq \frac{\sigma\sqrt{\log(n)}}{3}.
\end{equation}
Finally, since \(\bw^\star=0\), the excess risk is
\[
\begin{aligned}
    \Rcal(\widehat\bw)
    &=
    \widehat w_S^2+s_N^2\|\widehat\bw_N\|_2^2
    \ge
    \widehat w_S^2
    \ge
    \sigma^2\frac{\log n}{9}.
\end{aligned}
\]
Combining the preceding bounds on the probabilities of
\(\Ecal_{\mathrm{cube}}\) and \(\Ecal_{\mathrm{out}}\),
\[
    \Pr\left(\Ecal_{\mathrm{cube}}\cap\Ecal_{\mathrm{out}}\right)
    \ge \frac1e-e^{-n}
    \ge \frac15.
\]
Together with the preceding risk bound, this proves the theorem.
\end{proof}

\end{document}